\theoremstyle{plain}
\newtheorem{theorem}{Theorem}[section]
\newtheorem{lemma}[theorem]{Lemma}
\theoremstyle{definition}
\newtheorem{assumption}[theorem]{Assumption}
\theoremstyle{remark}
\newcommand{\com}[1]{\textbf{\color{red}(comment: #1)}} 
\newcommand{\res}[1]{\textbf{\color{magenta}(RESPONSE: #1)}} 
\newcommand{\com}[1]{}
\newcommand{\res}[1]{}
\icmltitlerunning{Self-Consuming Generative Models with Adversarially Curated Data}
\begin{document}

\twocolumn[
\icmltitle{Self-Consuming Generative Models with Adversarially Curated Data}



\icmlsetsymbol{equal}{*}

\begin{icmlauthorlist}
\icmlauthor{Xiukun Wei}{d}
\icmlauthor{Xueru Zhang}{d}
\end{icmlauthorlist}

\icmlaffiliation{d}{Department of Computer Science and Engineering, The Ohio State University, Columbus, Ohio, USA}

\icmlcorrespondingauthor{Xueru Zhang}{zhang.12807@osu.edu}

\icmlkeywords{Machine Learning, ICML}

\vskip 0.3in
]



\printAffiliationsAndNotice{}  

\begin{abstract}
Recent advances in generative models have made it increasingly difficult to distinguish real data from model-generated synthetic data. Using synthetic data for successive training of future model generations creates “self-consuming loops,” which may lead to model collapse or training instability. Furthermore, synthetic data is often subject to human feedback and curated by users based on their preferences. \citet{Curated} recently showed that when data is curated according to user preferences, the self-consuming retraining loop drives the model to converge toward a distribution that optimizes those preferences. However, in practice, data curation is often noisy or adversarially manipulated. For example, competing platforms may recruit malicious users to adversarially curate data and disrupt rival models. In this paper, we study how generative models evolve under self-consuming retraining loops with noisy and adversarially curated data. We theoretically analyze the impact of such noisy data curation on generative models and identify conditions for the robustness of the retraining process. Building on this analysis, we design attack algorithms for competitive adversarial scenarios, where a platform with a limited budget employs malicious users to misalign a rival’s model from actual user preferences. Experiments on both synthetic and real-world datasets demonstrate the effectiveness of the proposed algorithms.
\end{abstract}

\section{Introduction}
\label{Intro}
The latest generative models can produce highly realistic texts \cite{GPT-4}, images \cite{StableDiffusion}, audio \cite{ResembleAI}, and videos \cite{RunwayML}. As synthetic data proliferates on the internet, it is inevitably used for training future generations of models, creating a “self-consuming training loop.” A line of research has focused on examining the impact of this self-consuming training loop on generative models’ outputs, both theoretically \cite{StaIntera,amplification} and empirically \cite{modelcollapse,GOMad,collapse}. These studies demonstrate that such a self-consuming training loop may lead to model collapse \cite{modelcollapse,GOMad,collapse}, training instability \cite{StaIntera}, and possibly bias amplification \cite{amplification,FairnessFeedback,xie2024automating}. Several solutions have also been proposed to mitigate these issues, such as integration of real data \cite{StaIntera}, leveraging cumulative datasets \cite{modelcollapse}, and employing Self-Improving Diffusion Models with Synthetic Data (SIMS) \cite{SIMS}.

\begin{figure*}[t]
\begin{center}
\centerline{\includegraphics[width=1.7\columnwidth]{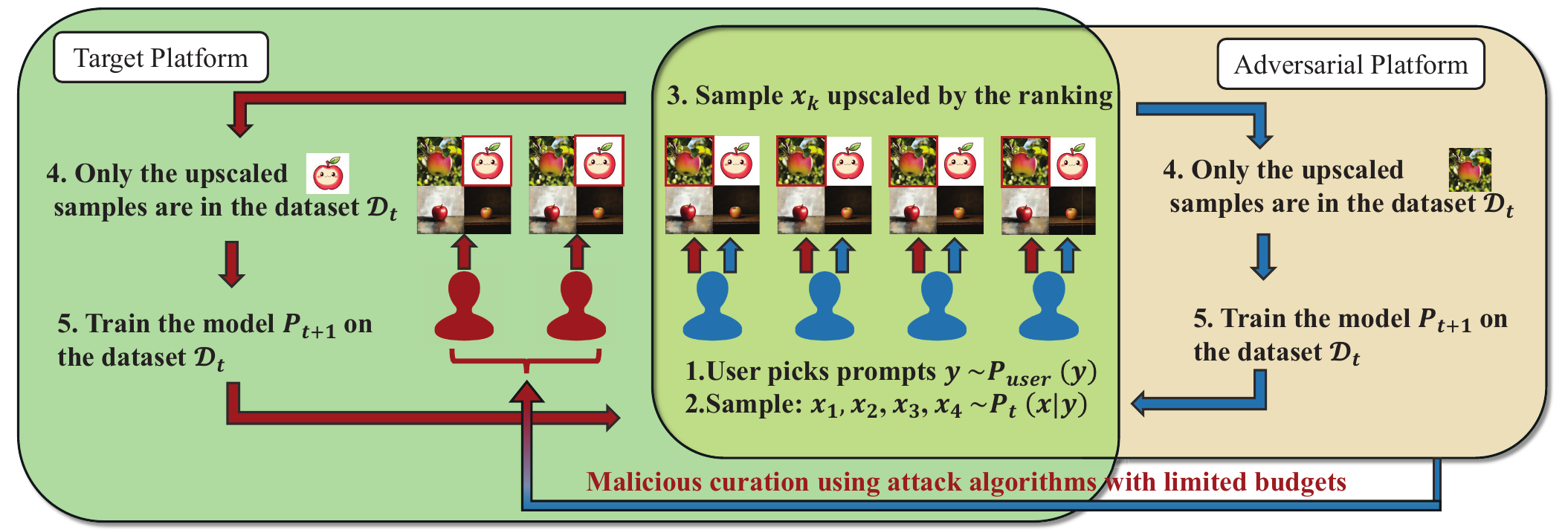}}
\caption{An example of adversarial data curation in a competitive setting: the adversarial platform and the target platform serve the same population of users. The adversarial platform can access the data generated by the target platform and obtain real user preferences through its own preference collection mechanisms. Using the attack algorithm, the adversarial platform employs malicious users to adversarially curate data on the target platform, preventing its model from aligning with genuine user preferences.}
\label{Fig:framework}
\end{center}
\vskip -0.3in
\end{figure*}

In contrast to these works, a recent study \cite{Curated} explores a more practical scenario in which synthetic data is curated by human users. To improve safety, user trust, and the relevance and quality of generated outputs, modern generative models are increasingly trained with human participation and feedback. For example, platforms such as JourneyDB \cite{pan2023} and Pika Labs \cite{PikaLabs} provide multiple variations of outputs for users to choose from, with only selected outputs being upscaled and used to train next-generation models. As shown in \citet{Curated}, when synthetic data is curated based on a reward model representing user preferences, the generative models trained iteratively on this curated data tend to converge to an output distribution that maximizes the expected reward.

However, in practice, data curated from users are likely to be noisy, biased, or even maliciously manipulated. Consider a scenario where multiple platforms compete for the same target user population with similar preference distributions (e.g., ChatGPT \cite{ChatGPT} and Claude \cite{Claude}, Stable Diffusion \cite{StableDiffusion} and MidJourney \cite{MidJourney}). To compete for market share, platforms may leverage their collected datasets, which contain rich information about actual user preferences, to design attack algorithms targeting their competitors. For example, as shown in Fig.~\ref{Fig:framework}, a platform may employ malicious users with limited budgets to deliberately select outputs on a rival’s platform that significantly deviate from genuine user preferences, creating a dataset on the competing platform that no longer reflects true user preferences. Over time, this adversarially curated data degrades the competitor’s ability to train models that align with user preferences, ultimately reducing their capacity to produce content that attracts users.

This paper takes a first step in examining the impact of adversarially curated data on the iterative retraining of generative models. We provide a theoretical analysis to understand how the output distribution of generative models evolves under adversarially curated data and assess the robustness of the self-consuming training loop to such manipulations. Our results demonstrate that under specific conditions, the self-consuming generative model trained from such adversarially curated data remains robust that it still converges to output distribution that optimizes user preferences. However, we also identify conditions (on the fraction of adversarial data and its associated reward function) under which this robustness guarantee fails to hold. 

Building on this theoretical understanding, we design attack algorithms for adversarial data curation aimed at disrupting the alignment of self-consuming generative models with user preferences. Specifically, we consider a competitive scenario in which platforms compete for market share by employing malicious users to curate adversarial data on a rival’s platform. Given a dataset reflecting genuine user preferences (collected from the platform’s own users), our attack algorithms strategically flip a limited number of preference labels. The modified dataset then guides malicious users in curating adversarial data on the rival platform. Over time, the generative models on the targeted platform, when iteratively trained on such adversarially curated data, fail to align with genuine user preferences, reducing their ability to remain appealing to users. To the best of our knowledge, this is the first attack algorithm for deviating self-consuming generative models from user preference. The most related work is \citet{attacksreward}, which investigates the vulnerability of reward model learning to preference poisoning. However, unlike our work that focuses on a self-consuming training loop, where the attacker aims to gradually misalign the target model with human preferences, \citet{attacksreward} considers a static setting where the attacker aims to flip preference labels to promote or demote specific target outcomes under the learned reward model. In Appendix \ref{sec:related}, we discuss more related works. 

Our contributions are summarized as follows:
\begin{itemize}[leftmargin=*,topsep=0cm,itemsep=-0.cm]
\item In \cref{section:iterative}, we theoretically analyze the long-term performance of self-consuming generative models under adversarial data curation, considering both pure synthetic data and mixtures of synthetic and real data. 
\item In \cref{lem:Kfty}, we prove that the convergence of the generative model toward maximizing user-expected rewards is governed by the covariance of adversarially curated synthetic data. This result establishes conditions where convergence remains robust and conditions where adversarial data curation hinders model alignment with human preferences.
\item In \cref{section:Preferenceattackalgorithm}, we model a competitive situation where an adversarial platform aims to disrupt a competitor's model alignment through adversarial data curation, and propose gradient-based and heuristic attack algorithms.
\item In \cref{sec:exp}, we validate the theorems and proposed algorithms through experiments on synthetic and real datasets.
\end{itemize}

\section{Problem Formulation}
\label{IterRetrain}

Consider a platform that iteratively trains generative models from data curated by users. Denote $p_{\text{data}}\in\mathcal{P}(\mathbb{R}^{d})$ as the real data distribution and for $t\in \mathbb{N}$, let $p_{t} \in\mathcal{P}(\mathbb{R}^{d})$ be the data distribution of the generative model at $t$-th round of iterative retraining loop. Throughout the paper, we use lowercase letters $p$ to denote densities and uppercase letters $\mathbb{P}$ to indicate the associated probabilities.

\textbf{Adversarially curated data.} At every round $t\in \mathbb{N}$, the platform presents synthetic data $x_1,\cdots,x_K$ randomly sampled from current model $p_t$ to users, and users select their preferred outputs that will be upscaled in the dataset for training next-generation of models. Following \citet{Curated}, we adopt the generalized Bradley-Terry model \cite{BT} to model user's choice. Specifically, let $r(x)$ be an underlying reward function that captures user preference of one data $x_i$ over another $x_j$, then the probability that a data $\hat{x}\in\{x_1,\cdots,x_K\}$ is curated by the user is as follows: 
\begin{equation}\label{BT}
    \begin{aligned}
        \mathbb{P}\left(\hat{x} = x_{k} \mid x_{1},\ldots,x_{K}\right) =
        \frac{e^{r(x_{k})}}{\sum_{j=1}^{K} e^{r(x_{j})}},~~ 
 k\in [K]
    \end{aligned}
\end{equation}
Because curated data in practice may be noisy and adversarially manipulated, we use a mixture model to account for such adversarial behavior. Specifically, assume another competing platform employs malicious users to curate noisy data deviating from actual user preference. Let $\phi_t\in(0,1)$ be the fraction of malicious users at round $t$ and $\widetilde{r}_t(x)$ the underlying reward function. Then the probability $\hat{x}$ is curated by a mixture of malicious and benign users is:
\begin{align}\label{MalBT}
    \mathbb{P}(\hat{x} = x_{k} \mid x_{1},\ldots,x_{K}) =& (1-\phi_t) \frac{e^{r(x_k)}}{\sum_{j=1}^K e^{r(x_j)}}
    \\&+ \phi_t \frac{e^{\widetilde{r}_t(x_k)}}{\sum_{j=1}^K e^{\widetilde{r}_t(x_j)}}, k\in [K] \nonumber
\end{align}
We use $\hat{x}\sim \mathcal{B} \mathcal{T}\left(x_{1}, \ldots, x_{K};\phi_t\right)$ to denote $\hat{x}$ is sampled from $x_{1}, \ldots, x_{K}$ according to probability \eqref{MalBT}.

\textbf{Self-consuming training loop.} Given adversarially curated data, the platform updates its model at round $t+1$ according to the following, either solely on the distribution of curated synthetic data ($\lambda\to \infty$) or on a mixture of synthetic and real data ($\lambda<\infty$):
 \begin{align}\label{eq:MixIt}
        p_{t+1} = &\mathop{\arg\max}\limits_{p\in \mathcal{P}}\frac{1}{1+\lambda}\mathbb{E}_{x\sim p_{\text{data}}}\left [\log\left( p\left(x\right)\right)\right] \nonumber
        \\
        &+\frac{\lambda}{1+\lambda}\mathbb{E}_{\substack{x_1,\cdots,x_K\sim p_t \\
        \hat{x} \sim \mathcal{B} \mathcal{T}\left(x_{1}, \ldots, x_{K};\phi_t\right)}}\left [\log \left (p\left (\hat{x}\right)\right)\right] 
\end{align}
where $\lambda\in [0,\infty)$ controls the fraction of real data and $\mathcal{P}$ is the set of achievable distributions with the platform's model.

\textbf{Objectives.} A recent study \cite{Curated} focused on a special case without malicious users $(\phi_t = 0)$. They showed that if data are curated based on the benign users' reward function $r(x)$ and the generative model is updated solely using curated synthetic data, the self-consuming training loop can result in $p_t$ converging to a distribution that optimizes user preferences, i.e., as $t \to \infty$, $\mathbb{E}_{x \sim p_t}[e^{r(x)}]$ converges to the maximum reward, and its variance $\text{Var}_{x \sim p_t}[e^{r(x)}]$ vanishes. However, in the presence of malicious users, it remains unclear how the model evolves and whether $p_t$ can align with actual user preferences in the long run. This paper explores this problem and we first examine the long-term impact of adversarially curated data on self-consuming models (\cref{section:iterative}) and then design attack algorithms that platforms can use to disrupt competitors' model training processes and misalign their models from user preferences (\cref{section:Preferenceattackalgorithm}).

\section{Impact of adversarially curated data}
\label{section:iterative}
Next, we examine the evolution of self-consuming generative models with adversarially curated data. We begin with the case of purely synthetic data ($\lambda\to\infty$) and then generalize to a mixture of synthetic and real data ($\lambda<\infty$). All proofs can be found in Appendix \ref{proof}. 

\textbf{Iterative retraining only on curated synthetic data.} Without real data, the iterative retraining process reduces to: 
\begin{align}\label{SyInter}
        p_{t+1} = \mathop{\arg\max}\limits_{p\in \mathcal{P}}\mathbb{E}_{\substack{
x_1,\cdots,x_K\sim p_t \\
\hat{x} \sim \mathcal{B} \mathcal{T}\left(x_{1}, \ldots, x_{K};\phi_t\right)
}}\left [\log \left (p\left (\hat{x}\right)\right)\right]
\end{align}
\begin{lemma}
\label{lem:Kinfty}
Consider the asymptotic case where the number of samples users select from satisfies $K \to \infty$. Suppose $\mathbb{E}_{x \sim p_{t}}[e^{r(x)}] < \infty$ and $p_{t+1}$ follows Eq.~\eqref{SyInter}. Then, we have 
\begin{align}
  \resizebox{\hsize}{!}{$\displaystyle      p_{t+1}(x) \to 
      p_{t}(x)\left[(1-\phi_t )\frac{e^{r(x)}}{\mathbb{E}_{z \sim p_{t}}\left[e^{r(z)}\right]}+\phi_t \frac{e^{\widetilde{r}_t}(x)}{\mathbb{E}_{z \sim p_{t}}\left[e^{\widetilde{r}_t(z)}\right]} \right] $} \nonumber
 \end{align}
\end{lemma}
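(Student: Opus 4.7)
The plan is to argue in two stages: first reduce the variational problem over $\mathcal{P}$ to identification of the sampling density of $\hat{x}$, then evaluate that density in the limit $K\to\infty$ via the law of large numbers.

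For stage one, let $q_t(\hat{x})$ denote the marginal density of $\hat{x}$ when $x_1,\ldots,x_K$ are drawn i.i.d.\ from $p_t$ and $\hat{x}$ is subsequently selected according to the mixed Bradley--Terry rule \eqref{MalBT}. The inner objective in \eqref{SyInter} is exactly $\mathbb{E}_{\hat{x}\sim q_t}[\log p(\hat{x})]$, which differs from $-\mathrm{KL}(q_t\Vert p)$ only by the constant $-H(q_t)$. Assuming the model class $\mathcal{P}$ is expressive enough to contain $q_t$ (the standing assumption in \citet{Curated}), Gibbs' inequality gives $p_{t+1}=q_t$, so it suffices to compute $q_t$ asymptotically.

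For stage two, by exchangeability of the $x_i$'s and by marginalizing the categorical choice,
\begin{align*}
q_t(\hat{x}) = K\, p_t(\hat{x})\, \mathbb{E}_{x_2,\ldots,x_K\sim p_t}\!\Bigg[&(1-\phi_t)\frac{e^{r(\hat{x})}}{e^{r(\hat{x})}+\sum_{j=2}^{K} e^{r(x_j)}} \\
&{}+\phi_t\frac{e^{\widetilde{r}_t(\hat{x})}}{e^{\widetilde{r}_t(\hat{x})}+\sum_{j=2}^{K} e^{\widetilde{r}_t(x_j)}}\Bigg].
\end{align*}
Dividing numerator and denominator of each fraction by $K$ and invoking the strong law of large numbers gives $\tfrac{1}{K}\sum_{j=2}^{K} e^{r(x_j)} \to \mathbb{E}_{z\sim p_t}[e^{r(z)}]$ almost surely under the stated integrability assumption, and analogously for $\widetilde{r}_t$. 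Thus the whole bracketed quantity multiplied by $K$ converges pointwise to $(1-\phi_t)\,e^{r(\hat{x})}/\mathbb{E}_{z\sim p_t}[e^{r(z)}] + \phi_t\,e^{\widetilde{r}_t(\hat{x})}/\mathbb{E}_{z\sim p_t}[e^{\widetilde{r}_t(z)}]$, which, multiplied by $p_t(\hat{x})$, matches the claimed limit.

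The main technical obstacle I expect is pushing the $K\to\infty$ limit inside the outer expectation: the scaled integrand $K/(e^{r(\hat{x})}+\sum_{j\ge 2} e^{r(x_j)})$ is not bounded by a constant uniformly in $K$ on the whole sample space. I would establish uniform integrability by splitting according to the high-probability event $\{\tfrac{1}{K}\sum_{j=2}^{K} e^{r(x_j)} \ge \tfrac{1}{2}\mathbb{E}_{z\sim p_t}[e^{r(z)}]\}$, on which the integrand is bounded by a $K$-independent constant so dominated convergence applies, and then showing that the complementary event contributes negligibly via standard concentration of the empirical mean. This is routine probability-theoretic bookkeeping under the assumption $\mathbb{E}_{x\sim p_t}[e^{r(x)}]<\infty$, together with the analogous (implicitly required) finiteness of $\mathbb{E}_{x\sim p_t}[e^{\widetilde{r}_t(x)}]$ needed for the limiting expression to be well defined.
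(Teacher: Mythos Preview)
Your proposal is correct and follows essentially the same route as the paper: identify $p_{t+1}$ with the marginal density of $\hat{x}$ via cross-entropy minimization, exploit exchangeability to write that density as $p_t(\hat{x})$ times $K$ copies of the selection probability integrated over the remaining $K-1$ samples, and pass to the limit via the law of large numbers. The paper's proof is in fact less detailed than yours---it simply asserts $H_{p_t}^K(x)\to e^{r(x)}/\mathbb{E}_{z\sim p_t}[e^{r(z)}]$ without discussing the exchange of limit and expectation---so your uniform-integrability discussion goes beyond what the paper provides.
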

\cref{lem:Kinfty} characterizes the relation between $p_{t+1}$ and $p_t$ in a self-consuming loop with adversarially curated data. Next, we analyze the evolution of $\mathbb{E}_{p_t}[e^{r(x)}]$, which quantifies the expected reward users experience from interacting with the generative model. We first introduce some technical assumptions similar to \citet{Curated}. 
\begin{assumption}
\label{assumption}
    There exist finite constants $r_{t,\min}$, $r_{t,\max}$, $\widetilde{r}_{t,\min}$, $\widetilde{r}_{t,\max} \in \mathbb{R}$, such that: $p_t$-almost surely, $\forall x \sim p_\text{data}$, $r_{t,\min} = \inf_{x} r(x)$, $\quad r_{t,\max} = \sup_{x} r(x)$, $\widetilde{r}_{t,\min} = \inf_{x} \widetilde{r}_t(x)$, $\quad \widetilde{r}_{t,\max} = \sup_{x} \widetilde{r}_t(x)$.
\label{ass:xfinite}
\end{assumption}
In most realistic scenarios, this assumption holds because user preferences typically have finite support or are bounded in a probabilistic sense. Under this assumption, \cref{lem:Kfty} below examines the impact of adversarially curated data on $\mathbb{E}_{p_{t+1}}[e^{r(x)}]$ and presents its upper and lower bounds.

\begin{lemma}\label{lem:Kfty}
    Let $p_{t+1}$ be the distribution induced from a discrete choice model in Eq.~\eqref{SyInter}. Suppose Assumption \ref{ass:xfinite} holds, then the following holds:
    \begin{align*}
    \mathbb{E}_{p_{t+1}}\left[e^{r(x)}\right] &\geq \mathbb{E}_{p_{t}}\left[e^{r(x)}\right] + \frac{\widetilde{\pmb{\operatorname{Var}}}}{e^{r_{t,\max}}} + \frac{\widetilde{\pmb{\operatorname{Cov}}}}{e^{\widetilde{r}_{t,\min}}}\\
    \mathbb{E}_{p_{t+1}}\left[e^{r(x)}\right] &\leq \mathbb{E}_{p_{t}}\left[e^{r(x)}\right]+\frac{\widetilde{\pmb{\operatorname{Var}}}}{e^{r_{t,\min}}} + \frac{\widetilde{\pmb{\operatorname{Cov}}}}{e^{\widetilde{r}_{t,\max}}}
    \end{align*}
    where 
    \begin{align*}
      \widetilde{\pmb{\operatorname{Var}}}  &:=(1-\phi_t)\frac{(K-1)}{K}\operatorname{Var}_{p_{t}}\left[e^{r(x)}\right]\\
\widetilde{\pmb{\operatorname{Cov}}}&:=\phi_t\frac{(K-1)}{K}\operatorname{Cov}_{p_{t}}\left[e^{r(x)},e^{\widetilde{r}_{t}(x)}\right]
    \end{align*}
\end{lemma}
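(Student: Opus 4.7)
The plan is to argue directly from the maximum-likelihood characterization in Eq.~\eqref{SyInter}. When $\mathcal{P}$ is sufficiently expressive, the cross-entropy objective is maximized by the marginal density of $\hat{x}$ under $x_{1:K}\sim p_t$ and $\hat{x}\sim\mathcal{BT}(x_1,\ldots,x_K;\phi_t)$, so $p_{t+1}$ is exactly that marginal. Substituting the mixture probability~\eqref{MalBT} and integrating $e^{r(x)}$ against it yields
\begin{align*}
\mathbb{E}_{p_{t+1}}[e^{r(x)}]
&= (1-\phi_t)\,\mathbb{E}_{x_{1:K}\sim p_t}\!\left[\frac{\sum_k e^{2r(x_k)}}{\sum_j e^{r(x_j)}}\right]\\
&\quad + \phi_t\,\mathbb{E}_{x_{1:K}\sim p_t}\!\left[\frac{\sum_k e^{r(x_k)+\widetilde{r}_t(x_k)}}{\sum_j e^{\widetilde{r}_t(x_j)}}\right],
\end{align*}
reducing the lemma to bounding these two ratio expectations.

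The next step is to apply the elementary identity $\sum_k b_k c_k = K\bar{b}\bar{c} + \sum_k(b_k-\bar b)(c_k-\bar c)$, once with $b_k=c_k=e^{r(x_k)}$ and once with $b_k=e^{\widetilde{r}_t(x_k)}$, $c_k=e^{r(x_k)}$. Dividing by $\sum_j b_j=K\bar b$ gives, in each case,
\[
\frac{\sum_k b_k c_k}{\sum_j b_j}
\;=\; \bar c \;+\; \frac{\sum_k(b_k-\bar b)(c_k-\bar c)}{\sum_j b_j}.
\]
Since $\mathbb{E}_{p_t}[\bar c]=\mathbb{E}_{p_t}[e^{r(x)}]$ in both cases, the convex combination $(1-\phi_t)+\phi_t=1$ collapses the two $\bar c$ contributions into the leading $\mathbb{E}_{p_t}[e^{r(x)}]$ that appears in both the upper and lower bound of the lemma.

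I then bound the two remaining fluctuation terms. By the standard unbiased-estimator identities for i.i.d.\ samples, $\mathbb{E}\sum_k(e^{r(x_k)}-\bar a)^2=(K-1)\operatorname{Var}_{p_t}[e^{r(x)}]$ and $\mathbb{E}\sum_k(e^{\widetilde{r}_t(x_k)}-\bar b)(e^{r(x_k)}-\bar c)=(K-1)\operatorname{Cov}_{p_t}[e^{r(x)},e^{\widetilde{r}_t(x)}]$, which is the source of the $(K-1)/K$ prefactor in $\widetilde{\pmb{\operatorname{Var}}}$ and $\widetilde{\pmb{\operatorname{Cov}}}$. Assumption~\ref{ass:xfinite} pins the random denominators to $\sum_j e^{r(x_j)}\in[Ke^{r_{t,\min}},Ke^{r_{t,\max}}]$ and $\sum_j e^{\widetilde{r}_t(x_j)}\in[Ke^{\widetilde{r}_{t,\min}},Ke^{\widetilde{r}_{t,\max}}]$ almost surely, so substituting these extremes into the ratios before taking expectation translates the envelope on the denominator into the envelope claimed by the lemma. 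For the variance term this transfer is immediate because its numerator is pointwise nonnegative.

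The main obstacle is the malicious-term ratio, whose numerator $\sum_k(b_k-\bar b)(c_k-\bar c)$ is sample-wise signed. A pointwise bound of the form $1/T\gtreqless 1/(Ke^{\widetilde{r}_{t,\mathrm{ext}}})$ only preserves its direction under the expectation when paired with the correct extreme, so I will pair the lower bound with $\widetilde{r}_{t,\min}$ and the upper bound with $\widetilde{r}_{t,\max}$, which is exactly the asymmetric pairing dictated by the covariance sign in the lemma. Carefully tracking this sign through the expectation -- rather than the algebraic decomposition itself -- is the delicate part of the argument, and determines how the two envelopes on $\mathbb{E}_{p_{t+1}}[e^{r(x)}]$ fit together.
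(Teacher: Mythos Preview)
Your approach is essentially the paper's. Your centered-sum identity $\frac{\sum_k b_kc_k}{\sum_j b_j}=\bar c+\frac{\sum_k(b_k-\bar b)(c_k-\bar c)}{\sum_j b_j}$ is algebraically equivalent to the paper's pairwise decomposition (via $\sum_{i<j}(a_i-a_j)(b_i-b_j)=K\sum_k(a_k-\bar a)(b_k-\bar b)$), after which both arguments sandwich the random denominator using Assumption~\ref{ass:xfinite}; the paper handles the signed covariance numerator by a case split on the sign of $\operatorname{Cov}_{p_t}[e^{r(x)},e^{\widetilde r_t(x)}]$ and selects the denominator extreme accordingly, which is exactly the ``pairing'' you anticipate.
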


According to \cref{lem:Kfty}, when $e^{r(x)}$ and $e^{\widetilde{r}_{t}(x)}$ are positively correlated, i.e., $\operatorname{Cov}_{p_{t}} \geq 0$, the expected reward increases, $ \mathbb{E}_{p_{t+1}}\left[e^{r(x)}\right] \geq \mathbb{E}_{p_{t}}\left[e^{r(x)}\right]$, allowing the generative model to align with user preferences despite adversarially curated data. In this case, the expected reward converges to the maximum, highlighting the model’s inherent \textbf{robustness} against noise and adversarial attacks. However, when $e^{r(x)}$ and $e^{\widetilde{r}_{t}(x)}$ are negatively correlated, i.e., $\operatorname{Cov}_{p_{t}} < 0$, the convergence is no longer guaranteed. Instead, the expected reward may oscillate and deviate from the maximum value. This shows the model's potential \textbf{vulnerability} in scenarios where adversarially curated data introduces negative correlations with the user reward function. The rationale for the upper bound is discussed in \cref{subsection:Kfty}. 

\textbf{Iterative retraining on mixed real and synthetic data.}
Prior works such as \citet{Curated,StaIntera,modelcollapse,GOMad} explored the role of real data in self-consuming generative models. Without data curation, \citet{StaIntera,modelcollapse,GOMad} showed that retraining models with a mix of real and synthetic data help stabilize the algorithm and prevent $p_t$ from deviating too much from $p_{\text{data}}$. When synthetic data is curated by users based on their preferences, $p_{\text{data}}$ is no longer a fixed point of the retraining loop, as different reward values can occur with positive probability; however, $ \mathbb{E}_{p_{t}}[e^{r(x)}]$ still increases compared to $ \mathbb{E}_{p_{\text{data}}}[e^{r(x)}]$ \cite{Curated}.
Next, we study whether incorporating real data can help defend against adversarially curated data during iterative model retraining.
\begin{lemma}\label{lem:Sfty}
     Let $p_{t+1}$ be defined as in Eq.~\eqref{eq:MixIt}, with $p_{0}=p_{\text{data}}$ and $\phi_t=\phi_\star, \forall t$. The following holds for all $t$:
    \begin{align}\label{eq:Sfty}
        \mathbb{E}_{p_{t+1}}\left[e^{r(x)}\right] &\geq  \mathbb{E}_{p_{\text{data}}}\left[e^{r(x)}\right]\\
        &+\phi_\star(1+\lambda) \left( 1 - \left( \frac{\lambda}{1+\lambda} \right)^t \right) \operatorname{Cov}_{\min} \nonumber
    \end{align}
    where $\displaystyle\operatorname{Cov}_{\min} =  \min_{i \in [t]}{\operatorname{Cov}_{p_{i}}\left[e^{r(x)},e^{\widetilde{r}_{i}(x)}\right]}$. As $t \to \infty$,
       \begin{equation}\label{eq:Sinfty}
        \begin{aligned}
        \mathbb{E}_{p_{t+1}}\left[e^{r(x)}\right] \geq  \mathbb{E}_{p_{\text{data}}}\left[e^{r(x)}\right]+\phi_\star(\lambda +1)\operatorname{Cov}_{\min}
    \end{aligned}
    \end{equation}
\end{lemma}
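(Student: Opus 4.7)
The plan is to combine the closed-form solution of the mixed-data optimization in Eq.~\eqref{eq:MixIt} with the one-step lower bound from \cref{lem:Kfty}, and then unroll the resulting geometric recursion.

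First, I would solve the optimization in Eq.~\eqref{eq:MixIt} explicitly. The objective is an affine combination of two log-likelihood functionals, so a standard Lagrangian computation under the normalization constraint $\int p = 1$ yields the convex combination
\[
p_{t+1}(x) = \frac{1}{1+\lambda}\, p_{\text{data}}(x) + \frac{\lambda}{1+\lambda}\, q_t(x),
\]
where $q_t$ denotes the density of adversarially-curated samples drawn from $p_t$ via $\mathcal{B}\mathcal{T}(\,\cdot\,;\phi_\star)$. Taking expectations of $e^{r(x)}$ on both sides, and abbreviating $A_t := \mathbb{E}_{p_t}[e^{r(x)}]$ and $a := \mathbb{E}_{p_{\text{data}}}[e^{r(x)}]$, this gives the affine identity
\[
A_{t+1} = \frac{a}{1+\lambda} + \frac{\lambda}{1+\lambda}\, \mathbb{E}_{q_t}[e^{r(x)}].
\]

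Next, I would invoke \cref{lem:Kfty} to lower bound $\mathbb{E}_{q_t}[e^{r(x)}]$ by $A_t$ plus a multiple of $\operatorname{Cov}_{p_{t}}[e^{r(x)}, e^{\widetilde{r}_t(x)}]$, discarding the non-negative variance term for a clean lower bound. Substituting into the affine identity yields the linear recursion
\[
A_{t+1} - a \,\geq\, \frac{\lambda}{1+\lambda}\Bigl[(A_t - a) + \phi_\star\, c_t\Bigr],
\]
with $c_t$ proportional to the one-step covariance. The initial condition $A_0 = a$ (from $p_0 = p_{\text{data}}$) then lets me telescope this into
\[
A_{t+1} - a \,\geq\, \phi_\star \sum_{s=0}^{t}\left(\frac{\lambda}{1+\lambda}\right)^{t+1-s} c_s.
\]

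Finally, I would lower-bound each $c_s$ by $\operatorname{Cov}_{\min}$, which is valid for either sign of the covariance because every recursion coefficient $\lambda/(1+\lambda)$ is strictly positive, and sum the resulting geometric series using the closed form $\sum_{k=1}^{t+1}(\lambda/(1+\lambda))^{k}$ together with $1 - \lambda/(1+\lambda) = 1/(1+\lambda)$. Collecting the constants produced by \cref{lem:Kfty} recovers the inequality~\eqref{eq:Sfty}, and the asymptotic statement~\eqref{eq:Sinfty} follows immediately by sending $t \to \infty$, since $\lambda/(1+\lambda)\in(0,1)$ forces $(\lambda/(1+\lambda))^t \to 0$.

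The main obstacle is pinning down the exact multiplicative constant in the one-step bound so that the final expression matches the $(1+\lambda)$ prefactor in the statement; this amounts to carefully choosing which denominator ($e^{\widetilde{r}_{t,\min}}$ or $e^{r_{t,\max}}$) from \cref{lem:Kfty} is absorbed into the definition of $\operatorname{Cov}_{\min}$ and tracking the resulting $1/(1-\lambda/(1+\lambda)) = 1+\lambda$ factor through the geometric sum. A secondary care-point is that the telescoping must remain valid even when $\operatorname{Cov}_{\min}<0$; this holds because substituting the pointwise minimum into an inequality whose coefficients are all positive only weakens the lower bound in a consistent direction.
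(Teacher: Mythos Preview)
Your skeleton---explicit mixture form $p_{t+1}=\tfrac{1}{1+\lambda}p_{\text{data}}+\tfrac{\lambda}{1+\lambda}q_t$, a one-step lower bound on $\mathbb{E}_{q_t}[e^{r}]$, linear recursion on $A_t-a$, then geometric summation---is exactly the paper's route. The gap is in which one-step bound you invoke.

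The paper does \emph{not} feed \cref{lem:Kfty} into the recursion. It uses a separate appendix lemma (proved by two applications of Jensen's inequality, once on the inner average over $z_1,\dots,z_{K-1}$ and once on the outer expectation in $x$) that yields the clean inequality
\[
\mathbb{E}_{q_t}\bigl[e^{r(x)}\bigr]\;\ge\;\mathbb{E}_{p_t}\bigl[e^{r(x)}\bigr]+\phi_\star\,\operatorname{Cov}_{p_t}\bigl[e^{r(x)},e^{\widetilde r_t(x)}\bigr],
\]
with no $(K-1)/K$ and no $e^{\widetilde r_{t,\min}}$ in sight. With this bound your $c_t$ is literally $\operatorname{Cov}_{p_t}$, and after telescoping the summands are precisely the quantities whose minimum defines $\operatorname{Cov}_{\min}$.

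Your proposed fix---absorb the $e^{\widetilde r_{t,\min}}$ denominator into $\operatorname{Cov}_{\min}$---cannot close, because $\operatorname{Cov}_{\min}$ is fixed in the statement as the bare minimum covariance; the $(K-1)/K$ and $e^{-\widetilde r_{t,\min}}$ factors from \cref{lem:Kfty} have nowhere legitimate to go. The obstacle you flagged is genuine, and its resolution is a different (Jensen-based) one-step lemma rather than constant-chasing. As a side note, your worry about the $(1+\lambda)$ prefactor is orthogonal: summing $\sum_{k\ge 1}\rho^k$ with $\rho=\lambda/(1+\lambda)$ gives $\lambda(1-\rho^t)$, not $(1+\lambda)(1-\rho^t)$, and the paper's own final line exhibits the same discrepancy with the stated constant---so that part is not a defect of your approach.
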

\cref{lem:Sfty} implies that when  $\operatorname{Cov}_{{\min}} > 0$, i.e., the adversarial reward values are positively correlated with the true rewards, model can align with genuine user preferences and $ \mathbb{E}_{p_{t}}[e^{r(x)}]> \mathbb{E}_{p_{\text{data}}}[e^{r(x)}]$ still holds. However, when $\operatorname{Cov}_{{\min}} < 0$, this is no longer the case, suggesting that simply adding real data is not sufficient to defend against adversarial curation.
\section{Attack algorithms }
\label{section:Preferenceattackalgorithm}

As shown in \cref{lem:Kfty}, adversarially curated data may result in $\mathbb{E}_{x\sim p_t}[e^{r(x)}]$ deviating from maximum reward in the long run. Next, we consider a competitive scenario illustrated in Fig.~\ref{Fig:framework}, where an \textit{adversarial platform} employs malicious users to curate data on a \textit{target platform} to compete for market share. Our goal is to design attack algorithms for the adversarial platform that guide malicious users to act in the most effective way to disrupt the target platform's model.

\textbf{Learning reward model
from pairwise comparisons.} Unlike \citet{Curated} that assumes user reward model $r(x)$ is known, we consider a more realistic setting that the adversarial platform does not have access to $r(x)$ but must learn it from a dataset indicating user preferences. Here, we consider learning from pairwise comparisons \cite{attacksreward,alignhuman,rmb} as detailed below. 

Let $\mathcal{D}=\{(x_{i},z_{i},o_{i})\}_{i=1}^{n}$ be the dataset adversarial platform collects from benign users, where $x_i,z_i\in\mathbb{R}^d$ are $i$-th pair of data samples acquired from the target platform and $o_{i}\in \{0, 0.5,1\}$ indicates the user's preference among them\footnote{To obtain this dataset, the adversarial platform can first acquire $K$ data samples $\{x_1,\cdots, x_K\}$ from target platform and then present them to its users to select. Suppose $K=3$ and the user selects $x_2$, the data pair $\{(x_1,x_2,1),(x_2,x_3,0)\}$ is added to $\mathcal{D}$.}. Specifically,
$o_{i}=0$ if $x_{i}$ is preferred to $z_{i}$ ($x_i \succ z_i$), $o_{i}=1$ if $z_{i}$ is preferred to $x_{i}$ ($x_i \prec z_i$), and $o_{i}=0.5$ if $y_{i}$ and $z_{i}$ are equally preferred. In this paper, we assume adversarial and target platforms face users with identically distributed preferences, so that the adversarial platform can learn $r(x)$ from $\mathcal{D}$.

Let $R_{\theta}$ be a parametric reward model of $r(x)$ learned from preference data $\mathcal{D}$, where $\theta\in\Theta$ is the parameter. A typical method is Maximum Likelihood Estimation (MLE) which minimizes the following loss function:
    \begin{align}\label{BTLoss}
        \mathcal{L}(\mathcal{D}; \theta)&=-\sum_{i}\left[\left(1-o_{i}\right) \log \operatorname{Pr}\left\{x_{i}\succ z_{i} \mid R_{\theta}\right\} \right.
       \nonumber  \\ 
        & ~~~~~~~~~~~~~~~~~ \left.+o_{i} \log \operatorname{Pr}\left\{z_{i}\succ x_{i} \mid R_{\theta}\right\}\right]    \end{align}
where preference label $o_i$ is generated according to
\begin{equation}\label{eq:label}
   o_i \sim  \operatorname{Pr}\{z_i \succ x_i  \mid r \} =\frac{e^{ r(z_i)}}{e^{  r(x_i)}+e^{ r(z_i)}} 
\end{equation}

\textbf{Objective and constraint of the adversarial platform.} To compete against the target platform, the adversarial platform employs malicious users to adversarially curate data on the target platform, aiming to cause the generative models iteratively trained on this curated data to deviate from user preferences. The behavior of malicious users can be formalized as ``flipping the preference label $o_i$ of data pairs." Specifically, given $\mathcal{D} = \{(x_i, z_i, o_i)\}_{i=1}^n$, malicious users can flip preference $o_i$ to $o_i + \delta_i$, resulting in an adversarial dataset $\widetilde{\mathcal{D}}(\pmb{\delta}) = \{(x_i, z_i, o_i + \delta_i)\}_{i=1}^n$, where $\delta_i = \{-1,0,1\}$ presents label perturbation and $\pmb{\delta}:= \{\delta_i\}_{i=1}^n$. In practice, there is a possibility that malicious users fail to curate data adversarially on the target platform, or the curated data is not selected for training the future model. To account for this, we impose a constraint on the total number of label flips $\sum_{i=1}^n |\delta_i| \leq \kappa \cdot n$, where $\kappa \in (0,1)$ represents the success rate of perturbing the data on the target platform.

The goal of the adversarial platform is to find $\pmb{\delta}$ such that the resulting perturbed dataset $\widetilde{\mathcal{D}}(\pmb{\delta})$, when mixed with benign users' preference data (via malicious users' adversarial data curation), disrupts the target platform’s alignment with user preferences. Formally, let $p_{t+1}$ be the self-consuming model of the target platform trained from such adversarially curated data, the goal is to  reduce the expected reward $\mathbb{E}_{x\sim p_t}[e^{r(x)}]$ such that it can deviate from the maximum reward in the long run, i.e., 
 \begin{align}\label{AGoal}
    \mathbb{E}_{p_{t+1}}\left[e^{{R}_{\theta}(x)}\right] <\mathbb{E}_{p_{t}}\left[e^{R_\theta(x)}\right]
    \end{align}
where ${R}_{\theta}$ is parametric reward model learned from benign users. Based on \cref{lem:Kfty}, we formulate the following optimization for adversarial platform:
\begin{align}\label{BGoal}
        \min _{\pmb{\delta}} & \mathcal{J} (\pmb{\delta}) :=  \operatorname{Cov}_{p_{t}}\left[e^{{R}_{\theta}(x)}, e^{\widetilde{R}_{\widetilde{\theta}}(x)}\right]+\alpha \operatorname{dist}\left({R}_{\theta}, \widetilde{R}_{\widetilde{\theta}}\right) \nonumber\\
        \text { s.t.  } & \quad \widetilde{\theta} \in \arg \min _{\theta'} \mathcal{L}\left(\widetilde{D}(\pmb{\delta}), \theta'\right)
    \end{align}  
where $\widetilde{R}_{\widetilde{\theta}}$ is parametric reward model learned from perturbed preference data $\widetilde{D}(\pmb{\delta})$, which may belong to a different function family than $R_{\theta}$. To achieve the objective in Eq.~\eqref{AGoal}, we aim to make $\operatorname{Cov}_{p_{t}}[e^{{R}_{\theta}(x)}, e^{\widetilde{R}_{\widetilde{\theta}}(x)}]$ as negative as possible. Meanwhile, to prevent the adversarial behavior from being easily detected as anomalous, we impose a penalty on the difference between  $\widetilde{R}_{\widetilde{\theta}}$ and ${R}_{\theta}$, quantified by $\operatorname{dist}({R}_{\theta},\widetilde{R}_{\widetilde{\theta}})$ and can be defined as $\mathbb{E}_{p_t}[d({R}_{\theta}(x),\widetilde{R}_{\widetilde{\theta}}(x))]$ for some distance metric $d$ (e.g., $\ell_p$ norm).

\textbf{Dynamic attack during iterative training. } Since the target platform dynamically updates its model over time, the adversarial platform must repeatedly solve optimization~\eqref{BGoal} as $p_t$ evolves. In practice, the adversarial platform can periodically interact with the target platform to acquire data pairs $x^{(t)}_i,z^{(t)}_i \sim p_t$ and collect user preference $o_i^{(t)}$ from its own customers. The dataset $\mathcal{D}^{(t)}=\{(x^{(t)}_i,z^{(t)}_i,o_i^{(t)})\}_{i=1}^n$ can be first used to fine-tune benign reward model $R_{\theta}$ and then solve for $\pmb{\delta}^{(t)}$ in optimization~\eqref{BGoal}. The resulting $\pmb{\delta}^{(t)}$ can then guide the malicious users in curating data on the target platform at $t$. Such adversarially curated data is subsequently used by the target platform to retrain its model $p_{t+1}$. Over time, this iterative interaction may cause the target platform to deviate significantly from user preferences. 

\textbf{Challenges to solve optimization~\eqref{BGoal}.} The solution to \eqref{BGoal} is difficult to find because it is a \textit{bi-level} optimization problem. Moreover, the variables to be optimized are a subset of preference comparison labels, which involves solving a \textit{combinatorial} optimization problem over a discrete space.  
Next, we tackle these challenges and introduce two methods for finding approximated solutions to optimization \eqref{BGoal}. 

\subsection{Gradient-based methods}
To tackle discrete decision space, we relax the action space $\delta_{i} \in \{-1,0,1\}$
to interval $\delta_{i} \in [-1,1]$. If reward models are differentiable with respect to $\widetilde{\theta}$, then we can compute the gradient of the objective function in \eqref{BGoal} as follows:
\begin{equation*}
    \resizebox{\hsize}{!}{$ \displaystyle
    \nabla_{\pmb{\delta}} \mathcal{J}(\pmb{\delta})=\nabla_{\pmb{\delta}}\left(\operatorname{Cov}_{p_{t}}\left[e^{R_{\theta}(x)}, e^{\widetilde{R}_{\widetilde{\theta}}(x)}\right]
        + \alpha \operatorname{dist}\left(R_{\theta}, \widetilde{R}_{\widetilde{\theta}}\right)\right) 
    $}
\end{equation*}
\begin{equation*}
    \resizebox{0.9\hsize}{!}{$ =\displaystyle
\nabla_{\widetilde{\theta}}\left(\operatorname{Cov}_{p_{t}}\left[e^{R_{\theta}(x)}, e^{\widetilde{R}_{\widetilde{\theta}}(x)}\right]
        + \alpha \operatorname{dist}\left(R_{\theta}, \widetilde{R}_{\widetilde{\theta}}\right)\right) \frac{d \widetilde{\theta}}{d \pmb{\delta}} 
    $}
\end{equation*}
Recall that $\widetilde{R}_{\widetilde{\theta}}$ is the reward model trained from perturbed dataset $\widetilde{\mathcal{D}}(\pmb{\delta})$, parameter $\widetilde{\theta}$ is a function of $\pmb{\delta}$. Similar to \citet{attacksreward}, we can leverage the implicit function theorem \cite{IPattacks, Black-box} to compute the implicit derivative
$\frac{d \widetilde{\theta}}{d \pmb{\delta}}$:
\begin{equation}\label{GGoal}
    \begin{aligned}
         \frac{d \widetilde{\theta}}{d \pmb{\delta}} =-\left[H_{\widetilde{\theta}} \mathcal{L}\right]^{-1}\left[\frac{d \nabla_{\widetilde{\theta}} \mathcal{L}}{d \pmb{\delta}}\right] 
    \end{aligned}
\end{equation}
where $H_{\widetilde{\theta}} \mathcal{L}$ is the Hessian of $\mathcal{L}$ with respect to $\widetilde{\theta}$. Given the gradient, we can then apply projected gradient descent to find optimal $\pmb{\delta}^*$. Since original action space is $\delta_i\in\{-1,0,1\}$ and the total number of flips is constrained by $\sum_{i=1}^n|\delta_i|= \kappa\cdot n$, we select the the top $\kappa\cdot n$ among $\pmb{\delta}^*$ based on magnitude $|\delta_i^*|$ and only flip $o_i$ associated to them. The complete procedure is shown in Algorithm \ref{alg:GradAttack}. 
\begin{algorithm}[tb]
   \caption{Gradient-based attack}
   \label{alg:GradAttack}
\begin{algorithmic}
   \STATE {\bfseries Input:} Benign preference data $\mathcal{D}$, parameter $\kappa$
   \STATE Train the reward model $R_{\theta}$ of benign users on $\mathcal{D}$;
    \STATE Randomly initialize $\pmb{\delta}$, ensuring $(o_i+\delta_i) \in [-1,1]$;
   \FOR{$m=1$ {\bfseries to} $M$}
    \STATE Create a perturbed dataset $\widetilde{\mathcal{D}}(\pmb{\delta})$;
    \STATE Train a new reward model $\widetilde{R}_{\widetilde{\theta}}$ on the dataset $\widetilde{\mathcal{D}}(\pmb{\delta})$;
    \STATE Compute the gradient $\nabla_{\pmb{\delta}}\mathcal{J}(\pmb{\delta})$;
    \STATE $\pmb{\delta} \leftarrow \pmb{\delta}-\eta\nabla_{\pmb{\delta}}\mathcal{J}(\pmb{\delta})$; 
    \STATE Clip $\pmb{\delta}$ such that $(o_i+\delta_i) \in [-1,1]$, $\forall i\in[n]$;
   \ENDFOR
   \STATE Select the top $\kappa\cdot n$ indices based on $|\delta_i|$;
   \STATE Flip the preference label of the corresponding data pair;
 \STATE {\bfseries Output:} Label perturbations $\pmb{\delta}$, reward model  $\widetilde{R}_{\widetilde{\theta}}$ 
\end{algorithmic}
\end{algorithm}

The efficiency of this algorithm is primarily influenced by two key factors: batch size and model structure. Batch size plays a critical role in the accuracy of covariance estimation, which is essential for aligning the computed covariance with the true data distribution. While larger batch sizes generally yield more precise covariance estimates and reduce the discrepancy between theoretical and practical outcomes, they also incur greater memory requirements and computational overhead. Meanwhile, the structure of the initialized model directly influences the size of the implicit Hessian matrix, which emerges during gradient computations involving second-order derivatives. These calculations are computationally expensive. Although approximate methods can be used to estimate the Hessian matrix and reduce the computational burden, models with complex architectures—such as deep neural networks with many layers and parameters—substantially increase both the computational time and memory requirements for these calculations.

\subsection{Heuristic methods}
The high computational costs associated with gradient-based calculations for complex models present a significant challenge. To mitigate this issue, we propose leveraging heuristic methods as an alternative. These methods eliminate the need for explicit gradient computation, providing a more computationally efficient approach, as detailed below. 

\textbf{Reward-based heuristic method.}
Instead of directly optimizing perturbations  $\pmb{\delta}$ to minimize $\operatorname{Cov}_{p_{t}}[e^{{R}_{\theta}(x)}, e^{\widetilde{R}_{\widetilde{\theta}}(x)}]+\alpha \operatorname{dist}({R}_{\theta}, \widetilde{R}_{\widetilde{\theta}}) $, we adopt a heuristic approach by flipping the preference label $o_i$ based on the rewards of data samples. Specifically, given preference data $\mathcal{D} = \{(x_i,z_i,o_i)\}_{i=1}^n$, we first learn the reward model $R_{\theta}$ from $\mathcal{D}$. The idea is to identify $\kappa\cdot n$ data pairs $(x_i,z_i)$ based on their rewards $R_{\theta}(x_i), R_{\theta}(z_i)$ such that flipping their preference label $o_i$ has the greatest impact on the underlying reward model. 

We propose two methods for finding such data pairs: (i) finding $(x_i,z_i)$ based on dissimilarity between $R_{\theta}(x_i)$ and $R_{\theta}(z_i)$; (ii) finding $(x_i,z_i)$ based on maximum of $|R_{\theta}(x_i)|$ and $|R_{\theta}(z_i)|$. Specifically, define $f:\mathbb{R}^d\times \mathbb{R}^d\to \mathbb{R}^+$ as
$$f(x,z) := |R_{\theta}(x)-R_{\theta}(z)| \text{ or } \max\{|R_{\theta}(x)|,|R_{\theta}(z)|\}. $$
We select the $\kappa\cdot n$ data pairs $(x_i,z_i)$ with the highest $f(x_i,z_i)$ values to flip their preference label. Intuitively, a larger $|R_{\theta}(x_i)-R_{\theta}(z_i)|$ indicates greater differences in user preferences for the data pair $(x_i,z_i)$, making the preference flip more impactful. Similarly, a larger $\max\{|R_{\theta}(x_i)|,|R_{\theta}(z_i)|\}$ suggests that the pair includes a sample that is either highly favored or strongly disliked by the user, making the preference flip most effective. 

\textbf{Multi-objective heuristic method.} Since optimization problem \eqref{BGoal} simultaneously considers two competing objectives, we also propose a heuristic method that finds the Pareto front \cite{Pareto} for multi-objective optimization. Specifically, a solution is considered Pareto optimal if no other solution exists that can improve one objective without degrading at least one other objective; the set of all Pareto optimal solutions forms the Pareto front.

Our method begins by generating $\kappa\cdot n$ random perturbations $\pmb{\delta}$ to flip data $\mathcal{D}$, resulting in $\widetilde{\mathcal{D}}(\pmb{\delta})$. For each perturbation, we record the empirical performances of $\operatorname{Cov}_{p_{t}}[e^{{R}_{\theta}(x)}, e^{\widetilde{R}_{\widetilde{\theta}}(x)}]$ and $\operatorname{dist}({R}_{\theta}, \widetilde{R}_{\widetilde{\theta}})$. Using a Pareto optimization algorithm, such as NSGA-II \cite{NSGA-II}, we compute the Pareto front of non-dominated solutions, which represents the best trade-offs between objectives. Finally, we select the optimal solution from the Pareto front based on a specific prioritized objective, and apply the corresponding flip.
\section{Experiments}
\label{sec:exp}
\begin{figure*}[t]
\begin{center}
\centerline{\includegraphics[width=0.9\textwidth]{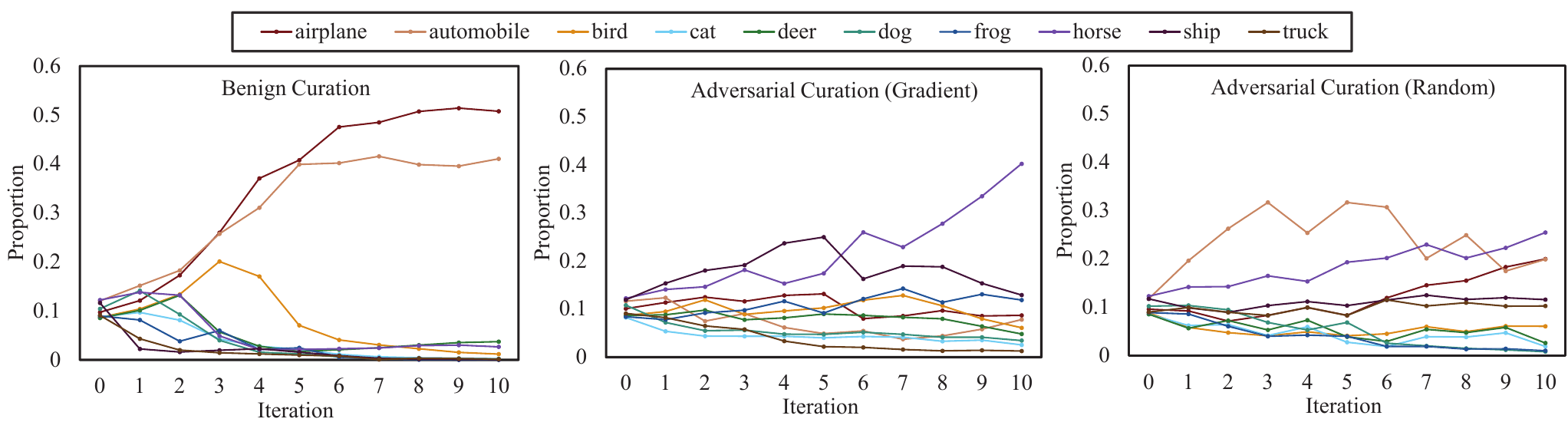}}
\vskip -0.1in
\caption{The proportion of each class generated by the self-consumption model retrained with different data curation methods on CIFAR-10: benign curation based on actual user preferences (left), adversarial curation using the proposed gradient-based attack algorithm (middle), and adversarial curation via a random attack (right). The results show that the proposed gradient-based attacks are the most effective in deviating the model from user preferences.}
\label{fig:Long-term_C10_dataset}
\end{center}
\vskip -0.2in
\end{figure*}

\begin{figure*}[t]
\begin{center}
\centerline{\includegraphics[width=0.9\textwidth]{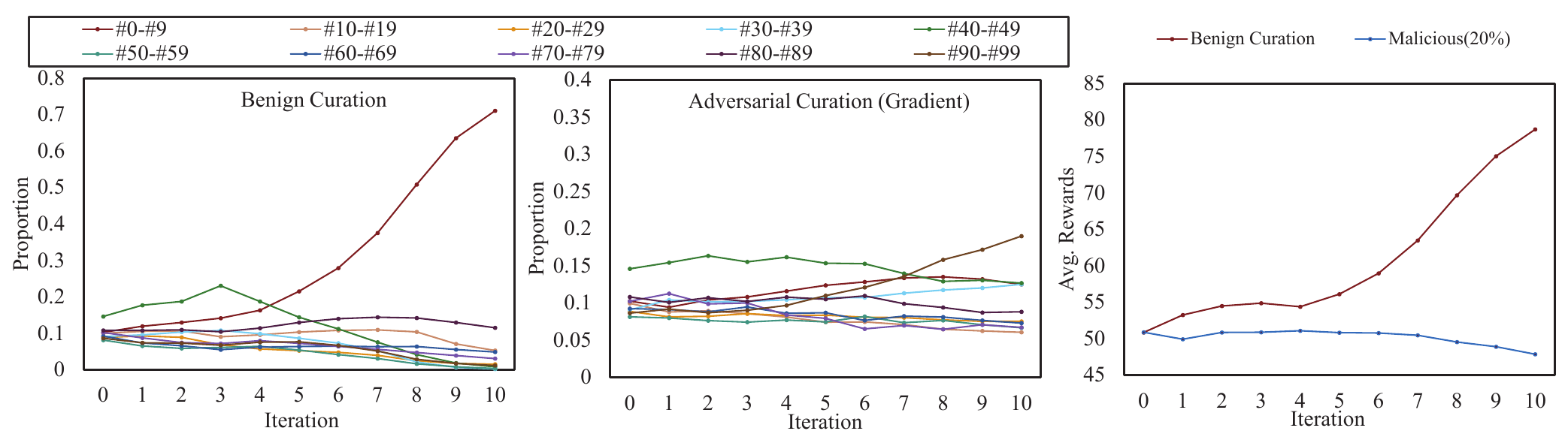}}
\vskip -0.1in
\caption{The proportion of each ten classes generated by the self-consumption model retrained with different data curation methods on CIFAR-100: benign curation based on actual user preferences (left), adversarial curation using the proposed gradient-based attack algorithm (middle). And empirical estimate of $\mathbb{E}_{p_t}[r(x)]$ from samples generated by the model over iterations (right).}
\label{fig:Long-term_C100_dataset}
\end{center}
\vskip -0.1in
\end{figure*}

In this section, we conduct experiments on both synthetic and real data to validate our theorems and proposed algorithms.  We first evaluate the evolution of generative model $p_t$ and user reward $\mathbb{E}_{p_t}[r(x)]$ under self-consuming training loop (\cref{section:exp1}). Then, we demonstrate the effectiveness of proposed attack algorithms (\cref{section:exp2}). 

\textbf{Datasets.} Similar to \citet{Curated}, 
we conduct experiments on three datasets:
\begin{enumerate}[leftmargin=*,topsep=-0.cm,itemsep=-0.05cm]
    \item \textbf{Synthetic Gaussian}: A dataset following 8-mode Gaussian mixture model, the details are in Appendix \ref{subsec:ExGaussian}.
   \item  \textbf{CIFAR-10} \cite{cifar10}: It contains 60,000 images from 10 classes $\{\text{airplne}:=0, \text{automobile}:=1, \text{bird}:=2, \text{cat}:=3, \text{deer}:=4, \text{dog}:=5, \text{frog}:=6, \text{horse}:=7, \text{ship}:=8, \text{truck}:=9 \}$.
   \item  \textbf{CIFAR-100} \cite{cifar10}: It contains 100 classes, each with 600 images. Class labels from 0 to 99 are assigned according to the alphabetical order of class names, i.e., $\{\text{aquatic mammals-beaver}:=0, \cdots, \text{vehicles 2-tractor}:=99 \}$.
\end{enumerate}
We present the results for CIFAR-10 and CIFAR-100 below, while the results for the Gaussian data are provided in the Appendix \ref{subsec:ExGaussian}.

\textbf{Reward functions and user preference labels.} Using Gaussian, CIFAR-10, and CIFAR-100 datasets, we can construct user preference dataset $\mathcal{D} = \{(x_i,z_i,o_i)\}_{i=1}^n$  using a reward function $r(x)$. Specifically, given a data pair $(x_i,z_i)$ sampled from Gaussian, CIFAR-10 or CIFAR-100 datasets, we generate the corresponding preference label $o_i\sim \Pr\{z_i \succ x_i  \mid r \}$ based on Eq.~\eqref{eq:label}. $r(x)$ for each dataset is defined as follows:

\begin{enumerate}[leftmargin=*,topsep=-0.cm,itemsep=-0.05cm]
    \item \textbf{Synthetic Gaussian:}  $r(x):=-\gamma \max\{0, \|x - \mu_{*}\|-\tau\}$, where $\|x - \mu_{*}\|$ is the distance from one Gaussian center $\mu_{*}$, $\tau$ is the minimum clipped radius, and $\gamma$ controls the scaling of the reward (details are in Appendix \ref{subsec:ExGaussian}). Intuitively, this function captures user preferences by assigning higher rewards to samples farther from Gaussian center within a threshold. 

   \item  \textbf{CIFAR-10:} First identify the label $I(x)\in\{0,\cdots,9\}$ of image $x$ by a pretrained VGG11 \cite{VGG11} classifier with $92.79\%$ test accuracy. Suppose users prefer classes with smaller indices and define $r(x):= 10 - I(x)$. It reflects user's preference ordering by assigning higher rewards to images classified closer to the most preferred class (class 0) in the hierarchy.
   
   \item  \textbf{CIFAR-100:} Similar to CIFAR-10, the label $I(x)\in\{0,\cdots,99\}$ of image $x$ is first identified by a pretrained ResNet56 \cite{ResNet} classifier with $72.63\%$ test accuracy. Suppose users prefer classes with smaller indices and define $r(x):= 100 - I(x)$.

\end{enumerate}

\textbf{Generative model and reward model.} Following \citet{Curated}, we iteratively retrain a denoising
diffusion probabilistic model
(DDPM) \cite{ddpm}, a generative framework known for its ability to model complex data distributions through a reversible diffusion process.
In addition to the generative model, the target and adversarial platforms leverage user preference data $\mathcal{D} = \{(x_i,y_i,o_i)\}_{i=1}^n$ to train a reward model $R_{\theta}$. For the adversarial platform, it also trains $\widetilde{R}_{\widetilde{\theta}}$ using perturbed preference data $\widetilde{\mathcal{D}}(\pmb{\delta})$. 
The reward models $R$ and $\tilde{R}$ may or may not have the same architecture, we discuss details in Appendix \ref{subsec:ExCIRFAR}. 

\textbf{Iterative retraining process.} At each iteration, the generative model produces 50,000 data samples, of which 25,000 are selected (after curation by the reward model) for retraining the next-generation model. In Appendix \ref{sec:addExp}, we provide details on the process of (adversarial) data curation and their interaction with the generative model training. The complete process is presented in Algorithm~\ref{alg:ReTrain}.

\subsection{Evolution of model under adversarial data curation}
\label{section:exp1}
First, we examine the impact of adversarial data curation on the self-consuming generative model $p_t$ and the respective reward $\mathbb{E}_{p_t}[r(x)]$. Specifically, we employ both the gradient-based attack (\cref{alg:GradAttack}) and random attack (i.e., flip preference labels uniformly at random) algorithms to target $20\%$ of data pairs.

Fig.~\ref{fig:Long-term_C10_dataset} shows the evolution of the proportion of each class generated during the iterative retraining process on CIFAR-10 and it demonstrates the impact of three types of data curation on self-consuming models: benign curation, adversarial curation via gradient-based attacks, and adversarial curation via random attacks. Without adversarially curated data (left), the generative model gradually aligns with human preferences, producing an increasing number of samples from class 0 (airplane), the most preferred class in CIFAR-10. This observation is consistent with \citet{Curated}. Under random attacks, while the model does not align as well with user preferences as in benign curation, it still generates a reasonable proportion of samples from the most favored classes, 0 (airplane) and 1 (automobile). In contrast, with our proposed attack algorithm, the model becomes highly misaligned with user preferences, generating samples predominantly from the least favored classes: 7 (horse), 8 (ship), and 9 (frog).

Fig.~\ref{fig:AvgReward} shows the empirical estimate of $\mathbb{E}_{p_t}[r(x)]$ for generated samples over iterations on CIFAR-10, with observations consistent with \cref{lem:Kfty}. Under benign curation, the expected reward steadily increases as the model aligns with user preferences. In contrast, adversarial curation using the gradient-based attack continuously lowers the reward, indicating a significant deviation from the optimal reward distribution. Random attacks initially lead to a slight increase in rewards, suggesting some robustness. However, as attacks progress, the average reward fluctuates due to the inherent randomness of this method. Indeed, the outcomes for random attacks vary significantly across different runs of experiments, and we provide additional examples in \cref{subsec:ExCIRFAR}.

\begin{figure}[t]
\begin{center}
\centerline{\includegraphics[width=0.73\columnwidth]{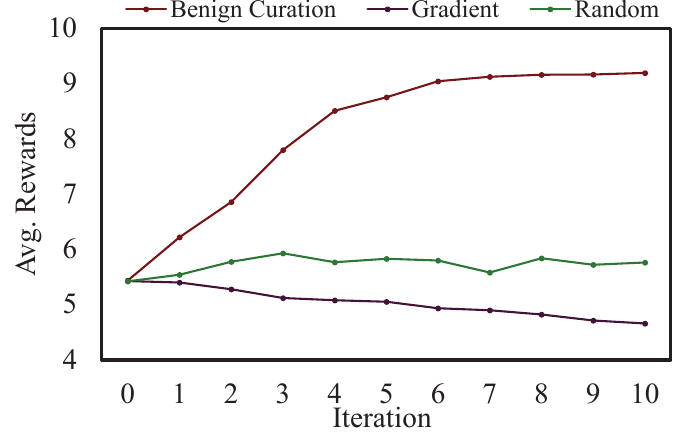}}
\vskip -0.1in
\caption{Empirical estimate of $\mathbb{E}_{p_t}[r(x)]$ from model-generated samples over iterations on CIFAR-10: it increases (resp. decreases) over iterations under benign (resp. adversarial) data curation. }
\label{fig:AvgReward}
\end{center}
\vskip -0.4in
\end{figure}

We extend our experiments to CIFAR-100, which contains a larger number of classes. Fig.~\ref{fig:Long-term_C100_dataset} illustrates the different behaviors under benign curation and adversarial curation using the gradient-based attack. With benign curation, the model progressively generates more samples from user-preferred classes, reflecting an increasing alignment with user preferences over time. In contrast, under adversarial curation, the model generates more samples from less preferred classes (i.e., classes 60 to 99). The average rewards shown on the right further highlight this difference: benign curation leads to a steady increase in expected rewards, whereas adversarial curation causes a continuous decline, indicating growing misalignment between the model and user preferences.

We also conducted iterative model retraining on a mixture of adversarially curated synthetic and real data on CIFAR-10, where adversarial data generated through gradient-based attacks is combined with real data at varying proportions. Fig.~\ref{fig:Mix_C10_dataset} shows the class proportions of the generated samples over iterations. The results demonstrate that incorporating real data helps align the distribution of generated samples with the real data distribution. However, it does not steer the model toward the user-preferred distribution. This suggests that adding a limited amount of real data is insufficient and fails to defend against adversarially curated data.

To examine model quality, we present synthetic images generated by self-consuming models under both adversarial and benign data curation in Fig.~\ref{fig:TranFig} (Appendix \ref{subsec:ExCIRFAR}). The results show that the image quality does not significantly degrade during the iterative retraining process.

\begin{figure*}[ht]
\begin{center}
\centerline{\includegraphics[width=0.9\textwidth]{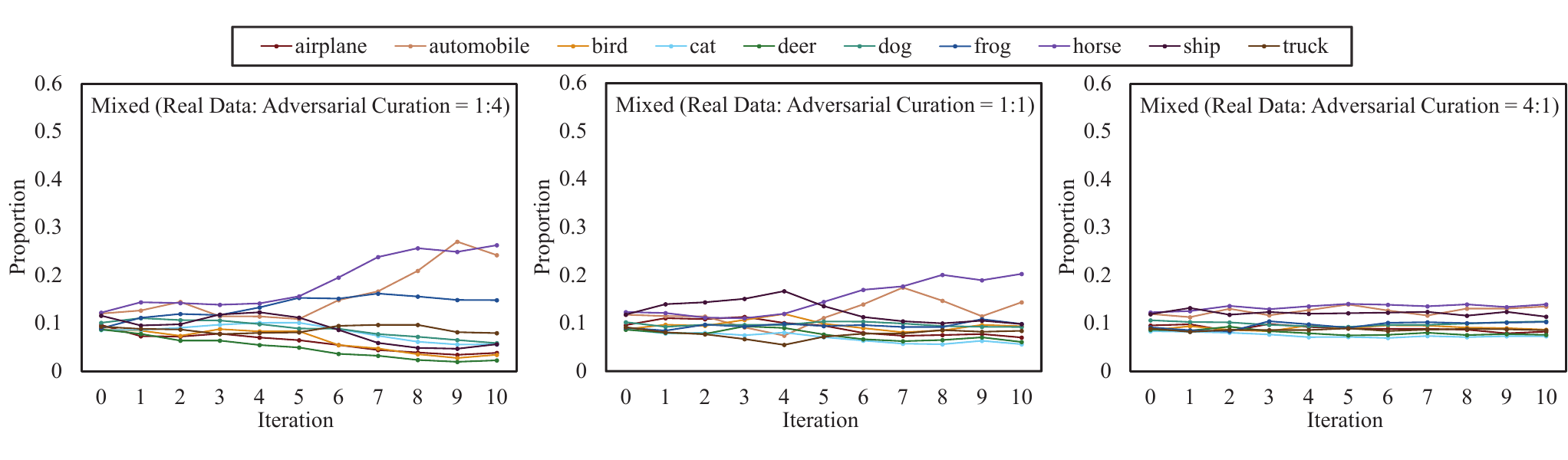}}
\vskip -0.1in
\caption{The proportion of each class generated by the self-consumption model retrained with a mix of adversarially curated synthetic and real CIFAR-10 data.  It shows that adding real data only helps the model align with the real data distribution $p_{\text{data}}$ but does not defend against adversarial data curation.}
\label{fig:Mix_C10_dataset}
\end{center}
\vskip -0.3in
\end{figure*}

\subsection{Effectiveness of attack methods}
\label{section:exp2}

\begin{table}[t]
\caption{Effectiveness of one-round attack under different methods on the same CIFAR-10 dataset: The results are empirical estimates of $\mathbb{E}_{p_1}[r(x)]$ for generated samples at $t=1$; method with a lower reward is more effective.}
\label{tab:attack_methods}
\begin{center}
\begin{small}
\begin{sc}
\begin{tabular}{lcccc}
\toprule
\textbf{Method} & Benign & Gradient \#1 & Gradient \#2 \\ 
\midrule
\textbf{Avg. R} & 6.3606 & 5.6460 & 5.5959  \\
\midrule
\textbf{Method} & Pareto & R-Based \#1 & R-Based \#2\\ 
\midrule
\textbf{Avg. R} & 5.4740 & 5.5982 & 5.5612 \\ 
\bottomrule
\end{tabular}
\end{sc}
\end{small}
\end{center}
\vskip -0.3in
\end{table}

To evaluate the effectiveness of various attack algorithms, we applied different attack methods to the same CIFAR-10 dataset, flipping 20\% of the data pairs. Table \ref{tab:attack_methods} summarizes the empirical estimates of $\mathbb{E}_{p_1}[r(x)]$ for generated samples at $t=1$. Comparisons include benign curation (\textsc{Benign}), gradient-based attacks that target platform and adversarial platform using different (\textsc{Gradient \#1}) or identical (\textsc{Gradient \#2}) reward models, reward-based heuristic methods with $f(x,z)=|R_{\theta}(x)-R_{\theta}(z)|$ (\textsc{R-Based \#1}) or $ \max\{|R_{\theta}(x)|,|R_{\theta}(z)|\}$ (\textsc{R-Based \#2}), and multi-objective heuristic method (\textsc{Pareto}), where we select the solution from the Pareto front with the lowest sum of the two metrics. We also show the class proportions of generated samples over iterations in Fig.~\ref{fig:attacks} (Appendix \ref{subsec:ExCIRFAR}).

Overall, the results demonstrate the effectiveness of all attack methods in this experimental scenario. Despite employing different reward model architectures, gradient-based attack methods consistently maintained high efficacy. In contrast, the relatively weaker performance of heuristic methods is reflected in their higher average rewards. As shown in Fig.~\ref{fig:attacks}, this is primarily due to heuristic methods failing to account for all classes comprehensively, resulting in limited effects on certain classes. The multi-objective heuristic method performs best by exploring a broader range of potential solutions, yielding the lowest average reward. However, this performance comes at the cost of significantly higher computational time and resource requirements.

It is important to note that the effectiveness of these attack methods may depend heavily on the specific context, and no method can be considered ``optimal". Future research could focus on developing solutions that are more general and universally applicable.
\section{Discussion}
In this section, we discuss the robustness of existing defense strategies and analyze the practical challenges of defending against our proposed attack. We also discuss key assumptions made in our analysis and identify limitations that may affect applicability in real-world scenarios.

\textbf{Defense.}
A common strategy proposed in prior work to stabilize the self-consuming retraining loop of generative models is to regularly inject real data during training \cite{GOMad,StaIntera}. However, as shown in the experiments in Fig.~\ref{fig:Mix_C10_dataset}, adding real data only partially mitigates adversarial effects by driving the model closer to the true data distribution $p_{\text{data}}$. It does not effectively prevent model misalignment under targeted adversarial curation attacks.

Although outlier detection may assist in identifying and filtering adversarially curated samples, they can inadvertently remove genuine preferences. When users are heterogeneous and come from multiple groups, removing genuine preferences from minority groups may potentially introduce biases. Additionally, our attack algorithm already considers such defense mechanisms: when formulating in Eq.~\eqref{BGoal}, we impose a penalty term $\text{dist}(R_{\theta},\widetilde{R}_{\widetilde{\theta}})$ to prevent the adversarial behavior from being easily detected as anomalous.

\textbf{Limitations.}
Our theoretical analysis assumes that each model update converges to the global optimum of the training objective. While our experiments validate the effectiveness of the proposed attack under this assumption, such convergence may not always hold in practice due to optimization noise, local minima, or limited training budgets.

Additionally, our experiments rely on a known success ratio $\kappa$ to control the adversarial curation. In real-world scenarios, however, the attacker may not have direct access to or precise knowledge of the effective success rate, which could affect the practical impact of the attack.

\section{Conclusion}
This paper examines the evolution of self-consuming generative models under adversarially curated data. We theoretically analyze the impact of adversarial data curation on these models and identify conditions for the (in)stability of the iterative retraining process. Building on theoretical insights, we develop attack algorithms that effectively disrupt model training and prevent alignment with user preferences. The findings highlight the potential vulnerability of self-consuming generative models to adversarial data curation, suggesting that developing effective defense mechanisms could be a promising direction for future work.

\section*{Acknowledgement}
This material is based upon work supported by the U.S. National Science Foundation under award IIS2202699 and IIS-2416895, by OSU President’s Research Excellence Accelerator Grant, and grants from the Ohio State University’s Translational Data Analytics Institute and College of Engineering Strategic Research Initiative.

\section*{Impact Statement}
This paper advances the field of Machine Learning by examining the long-term impact of the self-consuming retraining loop on generative models, as well as the role of (adversarial) data curation in this process. There are many potential societal consequences of our work, none of which we feel must be specifically highlighted here.


\bibliography{adversarial-curation}
\bibliographystyle{icml2025}

\newpage
\appendix
\onecolumn
\section*{Appendix}
\section{Related work}
\label{sec:related}

\subsection{Self-consuming generative models}

Generative models have demonstrated remarkable success in synthesizing high-quality data. Recent research has focused on the challenges faced by generative models trained iteratively on their own synthetic data. Despite slight differences in the definition of collapse, \citet{GOMad,collapse,modelcollapse} all confirm that the model collapses when trained exclusively on synthetic datasets, leading to a degradation in quality or diversity over successive generations. \citet{StaIntera,amplification} theorizes that this collapse arises due to the nature of the iterative training process. Another potential consequence of this iterative training is bias amplification, where the generative model amplifies specific features while neglecting other equally important data characteristics \cite{AmplifyBias,amplification,FairnessFeedback}.

There are two main solutions to this problem. One effective approach to stabilizing generative models is to introduce real data into the training process at each iteration. \citet{GOMad} provided empirical evidence that injecting fresh real data mitigates model collapse. \citet{StaIntera} further substantiated this observation with theoretical proofs, demonstrating that maintaining a sufficient proportion of real data ensures the stability of iterative training. \citet{modelcollapse} leverages cumulative data, where previously generated samples are stored and reused alongside new real data to stabilize the performance of the generative model. This approach aligns with practical data accumulation strategies. An alternative strategy is to introduce corrective mechanisms to prevent model collapse. \citet{SIMS} proposed a self-improvement framework for generative diffusion models (SIMS), which mitigates model collapse by employing a negative guidance mechanism during the generation process. \citet{Self-Correcting} introduced a self-correcting generative model training loop, where synthetic data undergoes transformation through expert-informed correction functions before being reintroduced into the training set. This method significantly enhances stability by ensuring that synthetic samples retain high fidelity and do not reinforce existing biases.

Recent studies have also examined the role of human feedback in iterative retraining. \citet{Curated} explored how user-curated synthetic samples can implicitly optimize a model’s reward function, aligning generative outputs with human preferences. While preference alignment improves the user experience, it can also introduce systematic biases that may be exploited by adversaries. This potential issue forms the basis for the exploration in our work.

\subsection{Data poisoning attack}
Data poisoning attack is an attack that disrupts model learning by modifying training data \cite{Poisoning}. Two major methods of this attack are the injection of poison data and label-flipping attacks.

One common data poisoning technique involves adding maliciously produced samples to the training dataset. In generative models, \citet{GenerativePoisoning} demonstrates that poisoning even a small fraction of the training data can significantly alter the model’s output, such as in text summarization or text completion tasks. \citet{ADD5} shows how adversaries can inject poisoned examples into datasets at minimal cost by exploiting web-based data collection mechanisms, which is not merely a theoretical concern but a practical threat. Furthermore, \citet{LLMPoisoning} explored the vulnerability of large language model (LLM) fine-tuning to poisoning attacks. In this scenario, an adversary can skillfully modify a small portion of the training data, injecting undetectable biases into the generated text, potentially leading to misinformation or biased outputs.

The label flipping attack modifies the underlying true labels assigned to a subset of training samples. This attack is particularly effective in classification and regression tasks \cite{SVM,RobustLinear,againstLabel,Indiscriminate}, where incorrect labels can mislead the model's learning process. In recommendation systems, \citet{LOKI} proposed an adversarial reinforcement learning approach that strategically flips labels to manipulate item rankings. Similarly, in Reinforcement Learning with Human Feedback (RLHF) training, preference poisoning attacks introduce incorrect label flips in reward datasets, causing reinforcement learning models to produce biased responses \cite{attacksreward}. Several studies have explored defenses against label flipping attacks. Traditional methods include outlier detection techniques \cite{OutlierDefense}, which identify and discard suspicious training samples. \citet{againstLabel} introduced a robust filtering approach that leverages bilevel optimization to identify and remove mislabeled samples from poisoned datasets, providing an effective solution against label flipping attacks.

Unlike previous research that focuses on flipping labels to promote or demote a specific target, our approach emphasizes perturbing the entire alignment process. Notably, our attacker does not require access to data collection pipelines or backend systems; instead, they can act entirely through public feedback mechanisms, such as voting or ranking systems. This makes our approach more practical and harder to detect, as it unfolds gradually over time without direct data manipulation. Whereas traditional poisoning attacks often aim to induce outright model failure, our objective is more subtle: gradually misaligning the model from genuine user preferences over time in a self-consuming training loop. In competitive settings, such gradual misalignment can be highly damaging while remaining difficult to trace.
\section{Proofs}
\label{proof}
\subsection{Proof of \cref{lem:Kinfty}}

\begin{lemma}\label{Alem:Kinfty}
    Let $p_{t+1}$ be defined as in Equation \eqref{SyInter}. If we assume that $\mathbb{E}_{z \sim p_{t}}[e^{r(z)}] < \infty$, then we have for all $x \in \mathbb{R}^{d}$,
    \begin{equation}\label{eq:Kinfty}
    \begin{aligned}
        p_{t+1}(x) \xrightarrow{K \rightarrow \infty} p_{t}(x)\left[(1-\phi_t)\frac{e^{r(x)}}{\mathbb{E}_{z \sim p_{t}}\left[e^{r(z)}\right]}+\phi_t \frac{e^{\widetilde{r}_t}(x)}{\mathbb{E}_{z \sim p_{t}}\left[e^{\widetilde{r}_t(z)}\right]} \right]
    \end{aligned}
\end{equation}
\end{lemma}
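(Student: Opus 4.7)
The plan is to recognize the variational problem in Eq.~\eqref{SyInter} as a maximum likelihood fit: the objective coincides with $\mathbb{E}_{\hat{x}\sim q_{t+1}}[\log p(\hat{x})]$, where $q_{t+1}$ denotes the marginal density of $\hat{x}$ induced by drawing $x_1,\ldots,x_K \sim p_t$ and then applying the mixed choice model in Eq.~\eqref{MalBT}. Under the standard richness assumption on $\mathcal{P}$, the unique maximizer of this cross-entropy is $p_{t+1} = q_{t+1}$, so the task reduces to writing $q_{t+1}$ in closed form and letting $K\to\infty$.

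Exploiting the exchangeability of $x_1,\ldots,x_K$ and summing over which of the $K$ indices is selected, one obtains
\begin{equation*}
    p_{t+1}(x) = K\, p_t(x)\, \mathbb{E}_{x_2,\ldots,x_K \sim p_t}\!\left[ (1-\phi_t)\frac{e^{r(x)}}{e^{r(x)} + \sum_{j=2}^K e^{r(x_j)}} + \phi_t\frac{e^{\widetilde{r}_t(x)}}{e^{\widetilde{r}_t(x)} + \sum_{j=2}^K e^{\widetilde{r}_t(x_j)}}\right].
\end{equation*}
Setting $S_K^{(r)} := \tfrac{1}{K-1}\sum_{j=2}^K e^{r(x_j)}$, the strong law of large numbers gives $S_K^{(r)}\to \mathbb{E}_{z\sim p_t}[e^{r(z)}]$ almost surely, and rewriting the $K$-scaled integrand as
\begin{equation*}
K\cdot\frac{e^{r(x)}}{e^{r(x)}+\sum_{j=2}^K e^{r(x_j)}} = \frac{K}{K-1}\cdot\frac{e^{r(x)}}{\tfrac{e^{r(x)}}{K-1}+S_K^{(r)}}
\end{equation*}
shows it converges almost surely to $e^{r(x)}/\mathbb{E}_{z\sim p_t}[e^{r(z)}]$. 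An identical argument handles the $\widetilde{r}_t$ summand.

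The main obstacle will be exchanging this pointwise-a.s.\ limit with the outer expectation, since the $K$-scaled integrand is bounded deterministically only by $K$ and $S_K^{(r)}$ is not uniformly bounded away from zero. The plan is to split on the good event $\mathcal{G}_K := \{S_K^{(r)}\geq \tfrac{1}{2}\mathbb{E}_{z\sim p_t}[e^{r(z)}]\}$: on $\mathcal{G}_K$ the $K$-scaled integrand is dominated by the deterministic constant $2e^{r(x)}/\mathbb{E}_{z\sim p_t}[e^{r(z)}]$, so dominated convergence applies; on $\mathcal{G}_K^c$, the LLN (quantified via Chebyshev under a mild moment condition) forces $\Pr(\mathcal{G}_K^c)\to 0$, and combining this with the crude deterministic cap of $K$ on the integrand via a H\"older step shows the contribution from this event is negligible. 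Assembling the two cases for both the $r$- and $\widetilde{r}_t$-summands then yields the stated closed-form limit for $p_{t+1}(x)$.
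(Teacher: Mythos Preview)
Your approach is essentially the paper's: recognize $p_{t+1}$ as the marginal law of $\hat{x}$ via cross-entropy maximization, exploit exchangeability to reduce to a single selected index, and invoke the law of large numbers on the denominator to identify the limit. You actually go further than the paper, which simply asserts $H_{p_t}^K(x)\to e^{r(x)}/\mathbb{E}_{p_t}[e^{r(z)}]$ without justifying the interchange of limit and expectation; one caution on your bad-set piece is that with Chebyshev alone the bound $K\cdot\Pr(\mathcal{G}_K^c)$ is only $O(1)$, not $o(1)$, so you need a slightly stronger moment input (or the reward-boundedness assumed elsewhere in the paper) to make that contribution vanish.
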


\begin{proof} 
Consider the limit of $K\to \infty$.
By minimization of the cross-entropy, we know that for any distribution $q$,
 $ \arg \max _{p} \mathbb{E}_{x \sim q}[\log (p(x))]= q$.

 So, sample $K$ samples from $p(t)$ independently and identically distributed, then sample $i_{K}$ with the following probability: $$ \mathbb{P}(i_{K} = i | x_1,...,x_K) = \phi_t \frac{e^{\widetilde{r}_t(x_i)}}{\sum_{j=1}^K e^{\widetilde{r}_t(x_j)}} + (1-\phi_t) \frac{e^{r(x_i)}}{\sum_{j=1}^K e^{r(x_j)}}$$
 Noting that the events $\left \{i_{K }=i\right\}_{i=1}^{K}$ are disjoint, the resulting density can be written:

$$
\begin{aligned}
p_{t+1}(x) & =\sum_{i=1}^{K} \int_{y_{j}, j \neq i} p_{t}\left(y_{1}, \cdots, y_{i-1}, x, y_{i+1}, \cdots, y_{K}\right) \mathbb{P}\left(i_{K}=i \mid x, y_{j}, j \neq i\right) \prod_{j \neq i} d y_{j}\\
&
=K \int_{y_{1}, \cdots, y_{K-1}} p_{t}\left(y_{1}, \cdots, y_{K-1}, x\right) \mathbb{P}\left(i_{K}=K \mid y_{1}, \cdots, y_{K-1}, x\right) d y_{1} \cdots d y_{K-1}\\
& =p_{t}(x) K  (1-\phi_t) \int_{y_{1}, \cdots, y_{K-1}} \frac{e^{r(x)}}{e^{r(x)}+\sum_{i=1}^{K-1} e^{r\left(y_{i}\right)}} p_{t}\left(y_{1}\right) \cdots p_{t}\left(y_{K-1}\right) d y_{1} \cdots d y_{K-1} \\
& + p_{t}(x) K \phi_t \int_{y_{1}, \cdots, y_{K-1}} \frac{e^{\widetilde{r}_{t}(x)}}{e^{\widetilde{r}_{t}(x)}+\sum_{i=1}^{K-1} e^{\widetilde{r}_{t}\left(y_{i}\right)}} p_{t}\left(y_{1}\right) \cdots p_{t}\left(y_{K-1}\right) d y_{1} \cdots d y_{K-1}  \\
& =p_{t}(x) \cdot \left[(1-\phi_t) H_{p_{t}}^{K}(x) + \phi_t {\widetilde{H}}_{p_{t}}^{K}(x)\right]
\end{aligned}
$$

where 

$$
H_{p_{t}}^{K}(x)=\int_{y_{1}, \cdots, y_{K-1}} \frac{e^{r(x)}}{\frac{e^{r(x)}}{K}+\frac{K-1}{K} \frac{\sum_{i=1}^{K-1} e^{r\left(y_{i}\right)}}{K-1}} p_{t}\left(y_{1}\right) \cdots p_{t}\left(y_{K-1}\right) d y_{1} \cdots d y_{K-1}
$$
$$
{\widetilde{H}}_{p_{t}}^{K}(x)=\int_{y_{1}, \cdots, y_{K-1}} \frac{e^{\widetilde{r}_{t}(x)}}{\frac{e^{\widetilde{r}_{t}(x)}}{K}+\frac{K-1}{K} \frac{\sum_{i=1}^{K-1} {e}^{\widetilde{r}_{t}\left(y_{i}\right)}}{K-1}} p_{t}\left(y_{1}\right) \cdots p_{t}\left(y_{K-1}\right) d y_{1} \cdots d y_{K-1}
$$

When $K\to \infty$:
$$H_{p_{t}}^{K}(x) \xrightarrow{K \rightarrow \infty} \frac{e^{r(x)}}{\mathbb{E}_{z \sim p_{t}}\left[e^{r(z)}\right]}$$
$${\widetilde{H}}_{p_{t}}^{K}(x) \xrightarrow{K \rightarrow \infty} \frac{e^{{\widetilde{r}_t}(x)}}{\mathbb{E}_{z \sim p_{t}}\left[e^{{\widetilde{r}_t}(z)}\right]}$$
So,
$$
p_{t+1}(x) \xrightarrow{K \rightarrow \infty} p_{t}(x) \left[(1-\phi_t )\frac{e^{r(x)}}{\mathbb{E}_{z \sim p_{t}}\left[e^{r(z)}\right]} +\phi_t \frac{e^{\widetilde{r}_t}(x)}{\mathbb{E}_{z \sim p_{t}}\left[e^{\widetilde{r}_t(z)}\right]} \right]
$$
\end{proof}

\subsection{Additional lemma: the reward expectation is not increasing}
Without assuming that the reward is bounded, we can show using Jensen inequality that:
\begin{lemma}\label{Alem:AInc}
    When performing K-wise filtering, $\forall t \geq 0$,the expected reward:
    \begin{equation}\label{Aeq:AInc}
    \mathbb{E}_{p_{t+1}}  \geq \mathbb{E}_{x \sim p_{t}}\left[e^{r(x)}\right] + \phi_t \operatorname{\operatorname{Cov}}_{x \sim p_{t}}\left[e^{r(x)}, e^{ \widetilde{r}_{t}(x)}\right]
\end{equation}
\end{lemma}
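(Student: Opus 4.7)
The plan is to invoke Lemma \ref{Alem:Kinfty} to obtain an explicit expression for $p_{t+1}$ in the $K\to\infty$ limit and then combine Jensen's inequality with an elementary covariance identity to produce the claimed lower bound. First I would substitute the density formula from Lemma \ref{Alem:Kinfty} into the expectation, yielding
\begin{equation*}
\mathbb{E}_{p_{t+1}}\!\left[e^{r(x)}\right] = (1-\phi_t)\frac{\mathbb{E}_{p_t}\!\left[e^{2r(x)}\right]}{\mathbb{E}_{p_t}\!\left[e^{r(z)}\right]} + \phi_t \,\frac{\mathbb{E}_{p_t}\!\left[e^{r(x)}e^{\widetilde{r}_t(x)}\right]}{\mathbb{E}_{p_t}\!\left[e^{\widetilde{r}_t(z)}\right]},
\end{equation*}
which cleanly separates the ``benign'' and ``adversarial'' contributions to the updated expected reward.

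For the benign term, I would apply Jensen's inequality to the random variable $e^{r(x)}$ under $p_t$: since $\mathbb{E}_{p_t}[(e^{r})^2] \geq (\mathbb{E}_{p_t}[e^{r}])^2$, the ratio $\mathbb{E}_{p_t}[e^{2r}]/\mathbb{E}_{p_t}[e^{r}]$ is at least $\mathbb{E}_{p_t}[e^{r}]$, with slack equal to $\operatorname{Var}_{p_t}(e^r)/\mathbb{E}_{p_t}[e^r] \geq 0$. I would drop this nonnegative slack, since the claim only requires a lower bound. For the adversarial term, I would use the algebraic identity $\mathbb{E}_{p_t}[e^r e^{\widetilde{r}_t}] = \mathbb{E}_{p_t}[e^r]\,\mathbb{E}_{p_t}[e^{\widetilde{r}_t}] + \operatorname{Cov}_{p_t}(e^r, e^{\widetilde{r}_t})$ to rewrite the adversarial ratio as $\mathbb{E}_{p_t}[e^r] + \operatorname{Cov}_{p_t}(e^r, e^{\widetilde{r}_t})/\mathbb{E}_{p_t}[e^{\widetilde{r}_t}]$. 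Combining the two contributions and using $(1-\phi_t) + \phi_t = 1$ collapses the coefficients of $\mathbb{E}_{p_t}[e^r]$ into a single copy, leaving the covariance correction as the only nontrivial term.

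The main obstacle is bridging the resulting bound with the precise form stated in the lemma, which omits the normalizer $\mathbb{E}_{p_t}[e^{\widetilde{r}_t(x)}]$ in the denominator of the covariance term. One route is to invoke a mild centering convention on $\widetilde{r}_t$ ensuring $\mathbb{E}_{p_t}[e^{\widetilde{r}_t(x)}] \leq 1$ when the covariance is nonnegative, so that dropping the denominator only makes the bound tighter; when the covariance is negative one can similarly absorb the factor using the boundedness Assumption \ref{ass:xfinite}. Alternatively, the inequality is most naturally read with the denominator retained, in which case the conclusion follows directly from the three steps above without further work. I would flag this normalization subtlety as the chief point requiring care, since it is the only place where the plain Jensen-based derivation departs from the compact form stated in the lemma.
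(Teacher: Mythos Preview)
Your derivation is sound in the $K\to\infty$ regime, but the lemma as stated is meant to hold for \emph{finite} $K$ (``K-wise filtering''), and the paper's proof works at that level of generality rather than passing through Lemma~\ref{Alem:Kinfty}. Concretely, the paper keeps the finite-$K$ density $p_{t+1}(x)=p_t(x)\big[(1-\phi_t)H_{p_t}^K(x)+\phi_t\widetilde{H}_{p_t}^K(x)\big]$ and applies Jensen \emph{twice}: first to the inner average over the $K-1$ companion samples (using convexity of $u\mapsto a/(b+u)$) to bound $H_{p_t}^K(x)\ge e^{r(x)}/(\tfrac{1}{K}e^{r(x)}+\tfrac{K-1}{K}\mathbb{E}_{p_t}[e^{r}])$ and likewise for $\widetilde{H}$, and then a second time to the outer integral in $x$. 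This collapses the denominators to $\mathbb{E}_{p_t}[e^{r}]$ and $\mathbb{E}_{p_t}[e^{\widetilde r_t}]$ exactly as in your limiting formula, after which the argument proceeds identically to yours (variance nonnegativity for the benign piece, the covariance identity for the adversarial piece). So your route is essentially the paper's with the first Jensen step pre-empted by the law of large numbers; to match the stated generality you would need to retain the finite-$K$ integrals and insert that first Jensen bound.

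On the denominator issue you flag: you are right, and the paper's own proof has the very same gap. Its penultimate line yields $\mathbb{E}_{p_t}[e^{r}]+\phi_t\,\operatorname{Cov}_{p_t}(e^{r},e^{\widetilde r_t})/\mathbb{E}_{p_t}[e^{\widetilde r_t}]$, and the final equality simply drops the normalizer $\mathbb{E}_{p_t}[e^{\widetilde r_t}]$ without comment. Neither a centering convention nor Assumption~\ref{ass:xfinite} is invoked there, so the cleanest reading is that the inequality should carry the denominator; your instinct to retain it is the correct one.
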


\begin{proof}
Suppose $Z=\frac{K-1}{K} \frac{\sum_{i=1}^{K-1} e^{r\left(z_{i}\right)}}{K-1}$, by Jensen inequality, for any $x$ ($\frac{a}{b+x}$is convex):
$$H_{p_{t}}^{K}(x)=\mathbb{E}_{Z}\left[\frac{e^{r(x)}}{\frac{e^{r(x)}}{K}+Z}\right] \geq \frac{e^{r(x)}}{\frac{e^{r(x)}}{K}+\mathbb{E}[Z]}=\frac{e^{r(x)}}{\frac{e^{r(x)}}{K}+\frac{K-1}{K} \mathbb{E}_{p_{t}}\left[e^{r(x)}\right]}
$$
$${\widetilde{H}}_{p_{t}}^{K}(x)=\mathbb{E}_{Z}\left[\frac{e^{\widetilde{r}(x)}}{\frac{e^{\widetilde{r}(x)}}{K}+\widetilde{z}}\right] \geq \frac{e^{\widetilde{r}x)}}{\frac{e^{\widetilde{r}(x)}}{K}+\mathbb{E}[Z]}=\frac{e^{\widetilde{r}(x)}}{\frac{e^{\widetilde{r}(x)}}{K}+\frac{K-1}{K} \mathbb{E}_{p_{t}}\left[e^{\widetilde{r}(x)}\right]}
$$
So:
$$
\begin{aligned}
\mathbb{E}_{p_{t+1}}\left[e^{r(x)}\right] & =\int e^{r(x)} p_{t}(x) \cdot \left[(1-\phi_t) H_{p_{t}}^{K}(x) + \phi_t {\widetilde{H}}_{p_{t}}^{K}(x)\right] d x \\
&= (1-\phi_t) \int e^{r(x)} p_{t}(x) H_{p_{t}}^{K}(x) d x   + \phi_t \int e^{r(x)}  p_{t}(x){\widetilde{H}}_{p_{t}}^{K}(x) d x\\
& \geq (1-\phi_t) \int p_{t}(x) \frac{e^{2 r(x)}}{\frac{e^{r(x)}}{K}+\frac{K-1}{K} \mathbb{E}_{z \sim p_{t}}\left[e^{r(z)}\right]} d x + \phi_t \int p_{t}(x) \frac{e^r(x)e^{ \widetilde{r}_{t}(x)}}{\frac{e^{\widetilde{r}_{t}(x)}}{K}+\frac{K-1}{K} \mathbb{E}_{\widetilde{z} \sim p_{t}}\left[e^{\widetilde{r}_{t}(\widetilde{z})}\right]} d x \\
& (\mathbb{E}_{x \sim p_{t}}[f(x)]=\int p_{t}(x) f(x) d x)\\
& = (1-\phi_t) \mathbb{E}_{x \sim p_{t}} \frac{e^{2 r(x)}}{\frac{e^{r(x)}}{K}+\frac{K-1}{K} \mathbb{E}_{z \sim p_{t}}\left[e^{r(z)}\right]} + \phi_t \mathbb{E}_{x \sim p_{t}} \frac{e^r(x)e^{ \widetilde{r}_{t}(x)}}{\frac{e^{\widetilde{r}_{t}(x)}}{K}+\frac{K-1}{K} \mathbb{E}_{\widetilde{z} \sim p_{t}}\left[e^{\widetilde{r}_{t}(\widetilde{z})}\right]}\\
&\text{(Jensen's inequality)} \\
& \geq (1-\phi_t) \frac{\mathbb{E}_{x \sim p_{t}}\left[e^{r(x)}\right]^{2}}{\frac{\mathbb{E}_{x \sim p_{t}}\left[e^{r(x)}\right]}{K}+\frac{K-1}{K} \mathbb{E}_{z \sim p_{t}}\left[e^{r(z)}\right]} +\phi_t \frac{\mathbb{E}_{x \sim p_{t}}\left[e^r(x)e^{ \widetilde{r}_{t}(x)}\right]}{\frac{\mathbb{E}_{x \sim p_{t}}\left[e^{\widetilde{r}_{t}(x)}\right]}{K}+\frac{K-1}{K} \mathbb{E}_{z \sim p_{t}}\left[e^{\widetilde{r}_{t}(z)}\right]} \\
& =(1-\phi_t) \mathbb{E}_{x \sim p_{t}}\left[e^{r(x)}\right] + \phi_t \frac{\mathbb{E}_{x \sim p_{t}}\left[e^{r(x)}e^{ \widetilde{r}_{t}(x)}\right]}{\mathbb{E}_{x \sim p_{t}}\left[e^{\widetilde{r}_{t}(x)}\right]}\\
& =(1-\phi_t) \mathbb{E}_{x \sim p_{t}}\left[e^{r(x)}\right] + \phi_t \operatorname{Cov}_{x \sim p_{t}}\left[e^{r(x)}, e^{ \widetilde{r}_{t}(x)}\right] +  \phi_t  \mathbb{E}_{x \sim p_{t}}\left[e^{r(x)}\right]\\
& =\mathbb{E}_{x \sim p_{t}}\left[e^{r(x)}\right] + \phi_t \operatorname{Cov}_{x \sim p_{t}}\left[e^{r(x)}, e^{ \widetilde{r}_{t}(x)}\right]
\end{aligned}
$$
That is:
$$    \mathbb{E}_{p_{t+1}}  \geq \mathbb{E}_{x \sim p_{t}}\left[e^{r(x)}\right] + \phi_t \operatorname{Cov}_{x \sim p_{t}}\left[e^{r(x)}, e^{ \widetilde{r}_{t}(x)}\right]$$
\end{proof}

\subsection{Proof of \cref{lem:Kfty}}
\label{subsection:Kfty}
\begin{lemma}\label{Alem:Kfty}
    Let $p_{t+1}$ the distribution induced from a discrete choice model on in Eq.\eqref{SyInter}. Suppose Assumption \ref{assumption} holds, then the expected reward,
    \begin{equation}\label{Aeq:Kfty}
    \begin{aligned}
        &\mathbb{E}_{p_{t+1}}\left[e^{r(x)}\right] \geq \mathbb{E}_{p_{t}}\left[e^{r(x)}\right]+(1-\phi_t)\frac{(K-1)}{K} \frac{\operatorname{Var}_{p_{t}}\left[e^{r(x)}\right]}{e^{r_{t,\max}}}+\phi_t\frac{(K-1)}{K} \frac{\operatorname{Cov}_{p_{t}}\left[e^{r(x)},e^{\widetilde{r}_{t}(x)}\right]}{e^{{\widetilde{r}}_{t,\min}}}\\
        &\mathbb{E}_{p_{t+1}}\left[e^{r(x)}\right] \leq \mathbb{E}_{p_{t}}\left[e^{r(x)}\right]
        +(1-\phi_t)\frac{(K-1)}{K} \frac{\operatorname{Var}_{p_{t}}\left[e^{r(x)}\right]}{e^{r_{t,\min}}}+\phi_t\frac{(K-1)}{K} \frac{\operatorname{Cov}_{p_{t}}\left[e^{r(x)},e^{\widetilde{r}_{t}(x)}\right]}{e^{{\widetilde{r}_{t,\max}}}}
    \end{aligned}
\end{equation}
\end{lemma}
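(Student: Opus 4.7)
The starting point is the exact finite-$K$ identity embedded in the proof of \cref{lem:Kinfty}: the density $p_{t+1}$ is a $(1-\phi_t,\phi_t)$ mixture whose two components are $K\,p_t(x)\,\mathbb{E}[e^{s(x)}/(e^{s(x)}+\sum_{i=1}^{K-1}e^{s(y_i)})]$ for $s\in\{r,\widetilde r_t\}$ and $y_i\sim p_t$ iid. I would take $\mathbb{E}_{p_{t+1}}[e^{r(x)}]$ and, using the exchangeability of the $K$ samples (treating $x$ as a $K$-th iid copy), rewrite it as
\[
(1-\phi_t)\,\mathbb{E}\!\left[\frac{\sum_{k=1}^{K} u_k^{2}}{\sum_{i=1}^{K} u_i}\right]+\phi_t\,\mathbb{E}\!\left[\frac{\sum_{k=1}^{K} u_k v_k}{\sum_{i=1}^{K} v_i}\right],
\]
with $u_i:=e^{r(y_i)}$ and $v_i:=e^{\widetilde r_t(y_i)}$, reducing the lemma to bounding two expectations of ratios of iid sums.

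The key algebraic move is the identity $\tfrac{\sum_k u_k^{2}}{\sum_i u_i}=\bar u+\tfrac{\sum_{i<j}(u_i-u_j)^{2}}{K\sum_i u_i}$ and its bilinear analogue $\tfrac{\sum_k u_k v_k}{\sum_i v_i}=\bar u+\tfrac{\sum_{i<j}(u_i-u_j)(v_i-v_j)}{K\sum_i v_i}$, with $\bar u=K^{-1}\sum_i u_i$. The linear parts have expectation $\mathbb{E}_{p_t}[e^{r(x)}]$, giving the leading term of the statement. The remainders are where the variance and covariance enter: $\mathbb{E}[(u_i-u_j)^{2}]=2\operatorname{Var}_{p_t}[e^{r(x)}]$ and $\mathbb{E}[(u_i-u_j)(v_i-v_j)]=2\operatorname{Cov}_{p_t}[e^{r(x)},e^{\widetilde r_t(x)}]$, and summing over the $\binom{K}{2}$ unordered pairs is precisely what produces the $\frac{K-1}{K}$ prefactor appearing in both $\widetilde{\operatorname{Var}}$ and $\widetilde{\operatorname{Cov}}$.

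To close the argument I would invoke Assumption~\ref{ass:xfinite} to bound the random denominators pointwise: $\sum_i u_i\in[Ke^{r_{t,\min}},Ke^{r_{t,\max}}]$ and $\sum_i v_i\in[Ke^{\widetilde r_{t,\min}},Ke^{\widetilde r_{t,\max}}]$. For the variance remainder the pair-sum numerator is nonnegative, so replacing $\sum_i u_i$ by its uniform maximum (resp.\ minimum) is immediate and yields the $\widetilde{\operatorname{Var}}/e^{r_{t,\max}}$ term of the lower bound (resp.\ the $\widetilde{\operatorname{Var}}/e^{r_{t,\min}}$ term of the upper bound). The main obstacle is the covariance remainder: its numerator $(u_i-u_j)(v_i-v_j)$ is signed, so a naive pointwise bound on $1/\sum_i v_i$ does not translate directly into a one-sided inequality. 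I would resolve this by pairing the lower bound with the smaller denominator $e^{\widetilde r_{t,\min}}$ (and the upper bound with $e^{\widetilde r_{t,\max}}$), verifying the resulting inequality via a sign-based split of $(u_i-u_j)(v_i-v_j)$ into its positive and negative parts so that the direction of the denominator bound is always conservative for the target inequality. Assembling both pieces with weights $(1-\phi_t)$ and $\phi_t$ delivers the stated lower and upper bounds simultaneously.
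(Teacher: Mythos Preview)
Your plan coincides with the paper's: symmetrize over the $K$ iid copies, apply the pair-sum identities
\[
\frac{\sum_k u_k^{2}}{\sum_i u_i}=\bar u+\frac{\sum_{i<j}(u_i-u_j)^{2}}{K\sum_i u_i},\qquad
\frac{\sum_k u_k v_k}{\sum_i v_i}=\bar u+\frac{\sum_{i<j}(u_i-u_j)(v_i-v_j)}{K\sum_i v_i},
\]
and then freeze the random denominators at an endpoint via Assumption~\ref{ass:xfinite}. You also correctly isolate the one genuine difficulty: the covariance remainder has a signed integrand, so a pointwise denominator bound does not immediately deliver a one-sided inequality.

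Your proposed resolution, however, does not close this gap. Splitting the pair product into positive and negative parts $N=N^{+}-N^{-}$ and bounding each piece with the appropriate extreme denominator gives at best
\[
\mathbb{E}\!\left[\frac{N}{D}\right]\ \ge\ \frac{\mathbb{E}[N^{+}]}{Ke^{\widetilde r_{t,\max}}}-\frac{\mathbb{E}[N^{-}]}{Ke^{\widetilde r_{t,\min}}},
\]
and the two distinct denominators cannot be collapsed into the single $e^{\widetilde r_{t,\min}}$ the stated lower bound demands (that would force $e^{\widetilde r_{t,\max}}\le e^{\widetilde r_{t,\min}}$). The paper handles this step by a different device---it splits into cases on the sign of $\operatorname{Cov}_{p_t}[e^{r(x)},e^{\widetilde r_t(x)}]$ and bounds the denominator as though the sign of that expectation fixed the pointwise sign of every $(u_i-u_j)(v_i-v_j)$---but that move is equally unjustified. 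Indeed a two-point check already breaks both inequalities: with $K=2$, $(e^{r},e^{\widetilde r_t})\in\{(1,1),(2,2)\}$ each with $p_t$-probability $\tfrac12$, and $\phi_t$ close to~$1$, one computes $\mathbb{E}_{p_{t+1}}[e^{r}]=\tfrac{19}{12}$, while the claimed lower and upper bounds tend to $\tfrac{13}{8}$ and $\tfrac{25}{16}$. So the structural part of your outline is sound and matches the paper exactly, but the covariance step---in your sketch and in the paper's proof alike---does not go through as written.
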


\begin{proof}
$$
\begin{aligned}
K \mathbb{E}_{p_{t+1}}\left[e^{r(x)}\right]=
& K \int_{x_{1}, \cdots, x_{K}} (1-\phi_t )\frac{e^{2 r\left(x_{1}\right)}+\cdots+e^{2 r\left(x_{K}\right)}}{e^{r\left(x_{1}\right)}+\cdots+e^{r\left(x_{K}\right)}} + \phi_t \frac{e^{ \widetilde{r}_{t}\left(x_{1}\right)} e^{ r\left(x_{1}\right)} +\cdots+e^{ \widetilde{r}_{t} \left(x_{K}\right)}e^{ r\left(x_{K}\right)}}{e^{\widetilde{r}_{t}\left(x_{1}\right)}+\cdots+e^{\widetilde{r}_{t}\left(x_{K}\right)}} \prod_{k=1}^{K} p_{t}\left(x_{k}\right) d x_{k} \\
=&  (1-\phi_t) \int_{x_{1}, \cdots, x_{K}} K\frac{e^{2 r\left(x_{1}\right)}+\cdots+e^{2 r\left(x_{K}\right)}}{e^{r\left(x_{1}\right)}+\cdots+e^{r\left(x_{K}\right)}} \prod_{k=1}^{K} p_{t}\left(x_{k}\right) d x_{k} \\
+& \phi_t \int_{x_{1}, \cdots, x_{K}} K \frac{e^{ \widetilde{r}_{t}\left(x_{1}\right)} e^{ r\left(x_{1}\right)} +\cdots+e^{ \widetilde{r}_{t} \left(x_{K}\right)}e^{ r\left(x_{K}\right)}}{e^{\widetilde{r}_{t}\left(x_{1}\right)}+\cdots+e^{\widetilde{r}_{t}\left(x_{K}\right)}} \prod_{k=1}^{K} p_{t}\left(x_{k}\right) d x_{k} \\
= & (1-\phi_t) \int_{x_{1}, \ldots, x_{K}} \sum_{j=1}^{K}\left[e^{r\left(x_{j}\right)} \frac{e^{r\left(x_{1}\right)}+\cdots+e^{r\left(x_{K}\right)}}{e^{r\left(x_{1}\right)}+\cdots+e^{r\left(x_{K}\right)}}+e^{r\left(x_{j}\right)} \frac{(K-1) e^{r\left(x_{j}\right)}-\sum_{i \neq j} e^{r\left(x_{i}\right)}}{e^{r\left(x_{1}\right)}+\cdots+e^{r\left(x_{K}\right)}}\right]\\
& \prod_{k=1}^{K} p_{t}\left(x_{k}\right) d x_{k} \\
+&\phi_t \int_{x_{1}, \ldots, x_{K}} \sum_{j=1}^{K}\left[e^{r\left(x_{j}\right)} \frac{e^{\widetilde{r}_{t}\left(x_{1}\right)}+\cdots+e^{\widetilde{r}_{t}\left(x_{K}\right)}}{e^{\widetilde{r}_{t}\left(x_{1}\right)}+\cdots+e^{\widetilde{r}_{t}\left(x_{K}\right)}}+e^{r\left(x_{j}\right)} \frac{(K-1) e^{\widetilde{r}_{t}\left(x_{j}\right)}-\sum_{i \neq j} e^{\widetilde{r}_{t}\left(x_{i}\right)}}{e^{\widetilde{r}_{t}\left(x_{1}\right)}+\cdots+e^{\widetilde{r}_{t}\left(x_{K}\right)}}\right] \\
& \prod_{k=1}^{K} p_{t}\left(x_{k}\right) d x_{k} \\
= & (1-\phi_t) \left[K \mathbb{E}_{p_{t}}\left[e^{r(x)}\right] +\int_{x_{1}, \ldots, x_{K}} \frac{\sum_{i<j}\left(e^{r\left(x_{i}\right)}-e^{r\left(x_{j}\right)}\right)^{2}}{e^{r\left(x_{1}\right)}+\cdots+e^{r\left(x_{K}\right)}} \prod_{k=1}^{K} p_{t}\left(x_{k}\right) d x_{k} \right] \\
+ & \phi_t \left[K \mathbb{E}_{p_{t}}\left[e^{r(x)}\right]+\int_{x_{1}, \ldots, x_{K}} \frac{\sum_{i<j}\left(e^{r\left(x_{i}\right)}-e^{r\left(x_{j}\right)}\right)\left(e^{\widetilde{r}_{t}\left(x_{i}\right)}-e^{\widetilde{r}_{t}\left(x_{j}\right)}\right)}{e^{\widetilde{r}_{t}\left(x_{1}\right)}+\cdots+e^{\widetilde{r}_{t}\left(x_{K}\right)}} \prod_{k=1}^{K} p_{t}\left(x_{k}\right) d x_{k}\right]\\
\end{aligned}
$$
Suppose $$A=\int_{x_{1}, \ldots, x_{K}} \frac{\sum_{i<j}\left(e^{r\left(x_{i}\right)}-e^{r\left(x_{j}\right)}\right)^{2}}{e^{r\left(x_{1}\right)}+\cdots+e^{r\left(x_{K}\right)}} \prod_{k=1}^{K} p_{t}\left(x_{k}\right) d x_{k}$$ 
and $$B=\int_{x_{1}, \ldots, x_{K}} \frac{\sum_{i<j}\left(e^{r\left(x_{i}\right)}-e^{r\left(x_{j}\right)}\right)\left(e^{\widetilde{r}_{t}\left(x_{i}\right)}-e^{\widetilde{r}_{t}\left(x_{j}\right)}\right)}{e^{\widetilde{r}_{t}\left(x_{1}\right)}+\cdots+e^{\widetilde{r}_{t}\left(x_{K}\right)}} \prod_{k=1}^{K} p_{t}\left(x_{k}\right) d x_{k}$$
Because $\operatorname{Var}_{p_{t}}\left[e^{r(x)}\right] \geq 0$, $ A \geq \sum_{i<j} \frac{2 \operatorname{Var}_{p_{t}}\left[e^{r(x)}\right]}{K e^{r_{t,\max}}}$

When $\operatorname{Cov}_{p_{t}}\left[e^{r(x)},e^{\widetilde{r}_{t}(x)}\right] > 0$, $B \geq \sum_{i<j} \frac{2 \operatorname{Cov}_{p_{t}}\left[e^{r(x)},e^{\widetilde{r}_{t}(x)}\right]}{K e^{{\widetilde{r}}_{t,\max}}} \geq 0$; when $\operatorname{Cov}_{p_{t}}\left[e^{r(x)},e^{\widetilde{r}_{t}(x)}\right] < 0$, $B \geq \sum_{i<j} \frac{2 \operatorname{Cov}_{p_{t}}\left[e^{r(x)},e^{\widetilde{r}_{t}(x)}\right]}{K e^{{\widetilde{r}}_{t,\min}}} < 0$, so $B \geq \sum_{i<j} \frac{2 \operatorname{Cov}_{p_{t}}\left[e^{r(x)},e^{\widetilde{r}_{t}(x)}\right]}{K e^{{\widetilde{r}}_{t,\min}}}$

Then, we have:
$$
\begin{aligned}
K \mathbb{E}_{p_{t+1}}\left[e^{r(x)}\right] \geq & (1-\phi_t) \left[K \mathbb{E}_{p_{t}}\left[e^{r(x)}\right]+\sum_{i<j} \frac{2 \operatorname{Var}_{p_{t}}\left[e^{r(x)}\right]}{K e^{r_{t,\max}}}\right] + \phi_t\left[ K \mathbb{E}_{p_{t}}\left[e^{r(x)}\right]+\sum_{i<j} \frac{2 \operatorname{Cov}_{p_{t}}\left[e^{r(x)},e^{\widetilde{r}_{t}(x)}\right]}{K e^{{\widetilde{r}}_{t,\min}}}\right]\\
\geq & (1-\phi_t) \left[K \mathbb{E}_{p_{t}}\left[e^{r(x)}\right]+\frac{K(K-1)}{2} \frac{2 \operatorname{Var}_{p_{t}}\left[e^{r(x)}\right]}{K e^{r_{t,\max}}} \right] \\
&+ \phi_t \left[K \mathbb{E}_{p_{t}}\left[e^{r(x)}\right]+\frac{K(K-1)}{2} \frac{2 \operatorname{Cov}_{p_{t}}\left[e^{r(x)},e^{\widetilde{r}_{t}(x)}\right]}{K e^{{\widetilde{r}}_{t,\min}}} \right]\\
= & K \mathbb{E}_{p_{t}}\left[e^{r(x)}\right]+(1- \phi_t)\frac{(K-1) \operatorname{Var}_{p_{t}}\left[e^{r(x)}\right]}{e^{r_{t,\max}}} +\phi_t\frac{(K-1) \operatorname{Cov}_{p_{t}}\left[e^{r(x)},e^{\widetilde{r}_{t}(x)}\right]}{e^{{\widetilde{r}}_{t,\min}}}
\end{aligned}
$$

which means:
$$
\mathbb{E}_{p_{t+1}}\left[e^{r(x)}\right] \geq  \mathbb{E}_{p_{t}}\left[e^{r(x)}\right]+(1-\phi_t)\frac{(K-1)}{K} \frac{\operatorname{Var}_{p_{t}}\left[e^{r(x)}\right]}{e^{r_{t,\max}}}+\phi_t \frac{(K-1)}{K} \frac{\operatorname{Cov}_{p_{t}}\left[e^{r(x)},e^{\widetilde{r}_{t}(x)}\right]}{e^{{\widetilde{r}}_{t,\min}}}
$$

Similarly,we have:
$ A \leq \sum_{i<j} \frac{2 \operatorname{Var}_{p_{t}}\left[e^{r(x)}\right]}{K e^{r_{t,\min}}}$ and $B \leq \sum_{i<j} \frac{2 \operatorname{Cov}_{p_{t}}\left[e^{r(x)},e^{\widetilde{r}_{t}(x)}\right]}{K e^{{\widetilde{r}}_{t,\max}}}$

So:
$$
\mathbb{E}_{p_{t+1}}\left[e^{r(x)}\right] \leq  \mathbb{E}_{p_{t}}\left[e^{r(x)}\right]+(1-\phi_t)\frac{(K-1)}{K} \frac{\operatorname{Var}_{p_{t}}\left[e^{r(x)}\right]}{e^{r_{t,\min}}}+\phi_t \frac{(K-1)}{K} \frac{\operatorname{Cov}_{p_{t}}\left[e^{r(x)},e^{\widetilde{r}_{t}(x)}\right]}{e^{{\widetilde{r}}_{t,\max}}}
$$
\end{proof}

We also prove the rationality of the upper bound. That is, the upper bound is always greater than 0.
\begin{proof}
Suppose $r_{t,\min} \leq r(x) \leq r_{t,\max}$ and ${\widetilde{r}}_{t,\min} \leq \widetilde{r}(x) \leq {\widetilde{r}}_{t,\max}$. 

Then, we have $e^{r_{t,\min}} \leq \mathbb{E}_{p_{t}}\left[e^{r(x)}\right]\leq e^{r_{t,\max}}$ and $e^{2r_{t,\min}} \leq \mathbb{E}_{p_{t}}\left[e^{2{r(x)}}\right]\leq e^{2r_{t,\max}}$.

Because $0 \leq \operatorname{Var}_{p_{t}}\left[e^{r(x)}\right] \leq \mathbb{E}_{p_{t}}\left[e^{2{r(x)}}\right]$, then $0 \leq \operatorname{Var}_{p_{t}}\left[e^{r(x)}\right] \leq e^{2r_{t,\max}}$.

Because $\operatorname{Cov}_{p_{t}}\left[e^{r(x)},e^{\widetilde{r}_{t}(x)}\right]= \mathbb{E}_{p_{t}}\left[e^{r(x)}e^{\widetilde{r}_{t}(x)}\right]-\mathbb{E}_{p_{t}}\left[e^{r(x)}\right]\mathbb{E}_{p_{t}}\left[e^{\widetilde{r}_{t}(x)}\right]$, so $ e^{r_{t,\min}+{\widetilde{r}}_{t,\min}}-e^{r_{t,\max}+{\widetilde{r}}_{t_{t,\max}}} \leq \operatorname{Cov}_{p_{t}}\left[e^{r(x)},e^{\widetilde{r}_{t}(x)}\right] \leq e^{r_{t,\max}+{\widetilde{r}}_{t,\max}}-e^{r_{t,\min}+{\widetilde{r}}_{t,\min}} $.

Then 
$$\begin{aligned}
&\mathbb{E}_{p_{t}}\left[e^{r(x)}\right]+(1-\phi_t)\frac{(K-1)}{K} \frac{\operatorname{Var}_{p_{t}}\left[e^{r(x)}\right]}{e^{r_{t,\min}}}+\phi_t\frac{(K-1)}{K} \frac{\operatorname{Cov}_{p_{t}}\left[e^{r(x)},e^{\widetilde{r}_{t}(x)}\right]}{e^{{\widetilde{r}}_{t,\max}}}\\
& \geq e^{r_{t,\min}} + (1-\phi_t)\frac{(K-1)}{K} \frac{0}{e^{r_{t,\min}}}+\phi_t\frac{(K-1)}{K} \frac{e^{r_{t,\min}+{\widetilde{r}}_{t,\min}}-e^{r_{t,\max}+{\widetilde{r}}_{t,\max}}}{e^{{\widetilde{r}}_{t,\max}}}\\
& = e^{r_{t,\min}} +\phi_t\frac{(K-1)}{K} \frac{e^{r_{t,\min}+{\widetilde{r}}_{t,\min}}-e^{r_{t,\max}+{\widetilde{r}}_{t,\max}}}{e^{{\widetilde{r}}_{t,\max}}}\\
& = e^{r_{t,\min}} +\phi_t\frac{(K-1)}{K}(e^{r_{t,\min}+{\widetilde{r}}_{t,\min}-{\widetilde{r}}_{t,\max}}-e^{r_{t,\max}+{\widetilde{r}}_{t,\max}-{\widetilde{r}}_{t,\max}})\\
& = e^{r_{t,\min}} +\phi_t\frac{(K-1)}{K}(e^{r_{t,\min}+{\widetilde{r}}_{t,\min}-{\widetilde{r}}_{t,\max}}-e^{r_{t,\max}})\\
& = e^{r_{t,\min}} + e^{r_t{t,\min}}\phi_t\frac{(K-1)}{K}(e^{{\widetilde{r}}_{t,\min}-{\widetilde{r}}_{t,\max}}-e^{r_{t,\max}-r_{t,\min}})
\end{aligned}$$

    When $\operatorname{Var}_{p_{t}}\left[e^{r(x)}\right] = 0$, $e^{r_{t,\min}} = e^{r_{t,\max}}$
    
    So:
    $$\begin{aligned}
    &\mathbb{E}_{p_{t}}\left[e^{r(x)}\right]+(1-\phi_t)\frac{(K-1)}{K} \frac{\operatorname{Var}_{p_{t}}\left[e^{r(x)}\right]}{e^{r_{t,\min}}}+\phi_t\frac{(K-1)}{K} \frac{\operatorname{Cov}_{p_{t}}\left[e^{r(x)},e^{\widetilde{r}_{t}(x)}\right]}{e^{{\widetilde{r}}_{t,\max}}}\\
    & \geq e^{r_{t,\min}} + e^{r_{t,\min}}\phi_t\frac{(K-1)}{K}(e^{{\widetilde{r}}_{t,\min}-{\widetilde{r}}_{t,\max}}-1)\\
    \end{aligned}$$
    Because $0 < e^{{\widetilde{r}}_{t,\min}-{\widetilde{r}}_{t,\max}} \leq 1$, $-1 < e^{{\widetilde{r}}_{t,\min}-{\widetilde{r}}_{t,\max}}-1 \leq 0$
    
    Because $0 \leq \phi_t\frac{(K-1)}{K} \leq 1$, then $-e^{r_{t,\min}} < e^{r_{t,\min}}(e^{{\widetilde{r}}_{t,\min}-{\widetilde{r}}_{t,\max}}-1) \leq 0$
    
    That is $$e^{r_{t,\min}} + e^{r_{t,\min}}\phi_t\frac{(K-1)}{K}(e^{{\widetilde{r}}_{t,\min}-{\widetilde{r}}_{t,\max}}-1) > 0$$

which means 
$$\mathbb{E}_{p_{t}}\left[e^{r(x)}\right]+(1-\phi_t)\frac{(K-1)}{K} \frac{\operatorname{Var}_{p_{t}}\left[e^{r(x)}\right]}{e^{r_{t,\min}}}+\phi_t\frac{(K-1)}{K} \frac{\operatorname{Cov}_{p_{t}}\left[e^{r(x)},e^{\widetilde{r}_{t}(x)}\right]}{e^{{\widetilde{r}}_{t,\max}}} > 0
$$
\end{proof}

\subsection{Proof of \cref{lem:Sfty}}
\begin{lemma}\label{Alem:Sfty}
    Let $p_{t+1}$ be defined as in Eq. \eqref{eq:MixIt}. And suppose $\operatorname{Cov}_{\min} =  \min_{i \in \{0, 1, \dots, t\}}{\operatorname{Cov}_{p_{i}}\left[e^{r(x)},e^{\widetilde{r}_{i}(x)}\right]}$ and $\phi_t = \phi_{t-1} = \dots = \phi_1 = \phi_\star$. With $p_{0}=p_{\text{data}}$, for $\forall t>1$:
    \begin{equation}
    \begin{aligned}
        \mathbb{E}_{p_{t+1}}\left[e^{r(x)}\right] \geq  \mathbb{E}_{p_{\text{data}}}\left[e^{r(x)}\right]+\phi_\star(1+\lambda) \left( 1 - \left( \frac{\lambda}{1+\lambda} \right)^t \right) \operatorname{Cov}_{\min}
    \end{aligned}
    \end{equation}
    When $t \to \infty$,
       \begin{equation}
        \begin{aligned}
        \mathbb{E}_{p_{t+1}}\left[e^{r(x)}\right] \geq  \mathbb{E}_{p_{\text{data}}}\left[e^{r(x)}\right]+\phi_\star(\lambda +1)\operatorname{Cov}_{\min}
    \end{aligned}
    \end{equation}
\end{lemma}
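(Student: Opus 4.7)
The plan is to reduce the mixed update \eqref{eq:MixIt} to an explicit two-component mixture and then iterate the one-step reward inequality already proved for the pure-synthetic loop. The key observation is that the objective in \eqref{eq:MixIt} is a convex combination of two cross-entropies in $p$. Using the standard density identity $\arg\max_p \mathbb{E}_{x\sim q}[\log p(x)] = q$, the maximizer admits the closed form
\[
p_{t+1}(x) \;=\; \tfrac{1}{1+\lambda}\, p_{\text{data}}(x) \;+\; \tfrac{\lambda}{1+\lambda}\, q_t(x),
\]
where $q_t$ is the density produced by the pure-synthetic BT-curation update \eqref{SyInter} applied to $p_t$ (i.e., the ``$p_{t+1}$'' of \cref{lem:Kinfty}). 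Taking expectations of $e^{r(x)}$ then splits $\mathbb{E}_{p_{t+1}}[e^{r(x)}]$ into the fixed piece $\mathbb{E}_{p_{\text{data}}}[e^{r(x)}]$ and a contribution from the curated synthetic branch, with weights $1/(1+\lambda)$ and $\lambda/(1+\lambda)$.

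The second ingredient is the single-step lower bound from the pure-synthetic analysis (the Jensen argument underlying \cref{lem:Kfty}), namely $\mathbb{E}_{q_t}[e^{r(x)}] \geq \mathbb{E}_{p_t}[e^{r(x)}] + \phi_\star \operatorname{Cov}_{p_t}\!\bigl[e^{r(x)},\, e^{\widetilde r_t(x)}\bigr]$. Since the coefficient $\alpha\phi_\star \geq 0$ with $\alpha := \lambda/(1+\lambda)$, I may replace the per-round covariance by its uniform lower bound $\operatorname{Cov}_{\min}$. Setting $D_t := \mathbb{E}_{p_t}[e^{r(x)}] - \mathbb{E}_{p_{\text{data}}}[e^{r(x)}]$ and combining with the mixture split yields the scalar recursion
\[
D_{t+1} \;\geq\; \alpha\, D_t \;+\; \alpha\,\phi_\star\, \operatorname{Cov}_{\min}, \qquad D_0 = 0.
\]

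Unrolling this geometric recursion (by induction, or by summing the series) gives $D_{t+1} \geq \phi_\star\,\operatorname{Cov}_{\min}\cdot \alpha(1-\alpha^{t+1})/(1-\alpha)$, which after the simplification $\alpha/(1-\alpha) = \lambda$ and a relaxation in the direction that preserves the inequality when $\operatorname{Cov}_{\min} < 0$ (the interesting regime) matches \eqref{eq:Sfty} up to the index convention. The asymptotic bound \eqref{eq:Sinfty} follows immediately by sending $t \to \infty$, since $\alpha \in [0,1)$ forces $\alpha^t \to 0$.

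I expect the main obstacle to be bookkeeping rather than ideas. The recursion is non-autonomous because $\operatorname{Cov}_{p_t}\!\bigl[e^{r(x)}, e^{\widetilde r_t(x)}\bigr]$ can change sign and magnitude across rounds, so I must check that the monotone substitution $C_t \to \operatorname{Cov}_{\min}$ is valid at every step (which it is, as the prefactor $\alpha\phi_\star$ is nonnegative) and that the induction does not pick up spurious dependence on $\mathbb{E}_{p_t}[e^{r(x)}]$ itself. Matching the precise constant $(1+\lambda)$ and the exponent $(\lambda/(1+\lambda))^t$ in the stated bound also requires care about the index shift between $p_{t+1}$ and the $t$-th power, which I will handle by performing the induction directly on the closed-form expression derived from the mixture representation.
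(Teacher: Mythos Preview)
Your proposal is correct and follows essentially the same route as the paper: decompose the mixed update \eqref{eq:MixIt} as the convex combination $p_{t+1}=\tfrac{1}{1+\lambda}p_{\text{data}}+\tfrac{\lambda}{1+\lambda}q_t$, invoke the one-step Jensen bound $\mathbb{E}_{q_t}[e^{r(x)}]\ge \mathbb{E}_{p_t}[e^{r(x)}]+\phi_\star\operatorname{Cov}_{p_t}[e^{r(x)},e^{\widetilde r_t(x)}]$, replace each covariance by $\operatorname{Cov}_{\min}$, and sum the resulting geometric series. One small remark: the Jensen bound you need is not literally \cref{lem:Kfty} but the auxiliary inequality the paper proves separately in the appendix (their Lemma~B.2); your proposal already identifies the correct argument, just attributes it to the wrong label.
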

\begin{proof}
According to the \cref{Alem:AInc}
$$\mathbb{E}_{p_{1}}  \geq \mathbb{E}_{p_{0}}\left[e^{r(x)}\right] + \phi_\star \operatorname{Cov}_{ p_{0}}\left[e^{r(x)}, e^{ \widetilde{r}_{0}(x)}\right]$$
Then
$$\mathbb{E}_{p_{2}}\left[e^{r(x)}\right]  \geq \frac{1}{1+\lambda}\mathbb{E}_{p_{\text{data}}}\left[e^{r(x)}\right] + \frac{\lambda}{1+\lambda}(\mathbb{E}_{p_{0}}\left[e^{r(x)}\right] + \phi_\star \operatorname{Cov}_{ p_{0}}\left[e^{r(x)}, e^{ \widetilde{r}_{0}(x)}\right])$$
With $p_{0}=p_{\text{data}}$
$$\mathbb{E}_{p_{2}}\left[e^{r(x)}\right]  \geq \mathbb{E}_{p_{\text{data}}}\left[e^{r(x)}\right] + \frac{\lambda}{1+\lambda}\phi_\star \operatorname{Cov}_{ p_{0}}\left[e^{r(x)}, e^{ \widetilde{r}_{0}(x)}\right]$$
Use recursion:
$$\begin{aligned}
\mathbb{E}_{p_{t+1}}\left[e^{r(x)}\right] & \geq  \mathbb{E}_{p_{\text{data}}}\left[e^{r(x)}\right]+\phi_\star\left [ (\frac{\lambda}{1+\lambda})\operatorname{Cov}_{p_{t}}\left[e^{r(x)},e^{\widetilde{r}_{t}(x)}\right] +(\frac{\lambda}{1+\lambda})^{2} \operatorname{Cov}_{p_{t-1}}\left[e^{r(x)},e^{\widetilde{r}_{t-1}(x)}\right] \right.\\
&\left. +\dots +(\frac{\lambda}{1+\lambda})^{t} \operatorname{Cov}_{p_{0}}\left[e^{r(x)},e^{\widetilde{r}_{0}(x)}\right]\right ]
\end{aligned}$$
 
 Suppose $$\operatorname{Cov}_{\min}=\operatorname{Cov}_{\min}\left[e^{r(x)},e^{\widetilde{r}_{m}(x)}\right] = \min\left\{{\operatorname{Cov}_{p_{t}}\left[e^{r(x)},e^{\widetilde{r}_{t}(x)}\right], \operatorname{Cov}_{p_{t-1}}\left[e^{r(x)},e^{\widetilde{r}_{t-1}(x)}\right] , \dots, \operatorname{Cov}_{p_{0}}\left[e^{r(x)},e^{\widetilde{r}_{0}(x)}\right]}\right \}$$
 Then,  
 $$\begin{aligned}
\mathbb{E}_{p_{t+1}}\left[e^{r(x)}\right] & \geq  \mathbb{E}_{p_{\text{data}}}\left[e^{r(x)}\right]+\phi_\star\left [ (\frac{\lambda}{1+\lambda})\operatorname{Cov}_{\min}\left[e^{r(x)},e^{\widetilde{r}_{m}(x)}\right] +(\frac{\lambda}{1+\lambda})^{2} \operatorname{Cov}_{\min}\left[e^{r(x)},e^{\widetilde{r}_{m}(x)}\right]\right.\\
&\left.  + \dots +(\frac{\lambda}{1+\lambda})^{t} \operatorname{Cov}_{\min}\left[e^{r(x)},e^{\widetilde{r}_{m}(x)}\right] \right ]\\
&= \mathbb{E}_{p_{\text{data}}}\left[e^{r(x)}\right]+\phi_\star(1+\lambda) \left( 1 - \left( \frac{\lambda}{1+\lambda} \right)^t \right) \operatorname{Cov}_{\min}
\end{aligned}$$

When $t \to \infty$, $$
\mathbb{E}_{p_{t+1}}\left[e^{r(x)}\right] \geq  \mathbb{E}_{p_{\text{data}}}\left[e^{r(x)}\right]+\phi_\star(\lambda +1)\operatorname{Cov}_{\min}$$
\end{proof}
\begin{algorithm}[tb]
   \caption{Iterative retraining with adversarially curated data}
   \label{alg:ReTrain}
    \begin{algorithmic}
       \STATE {\bfseries Input:} Real data $\mathcal{D}_{data}=\{d_i\}^{N}_{i=1}$, user reward function $r$, learning procedure of generative model $\mathcal{G}$, learning procedure of reward model $\mathcal{R}$, attack algorithms $\mathcal{A}$
       \STATE {\bfseries Param:} Rate of attack $\kappa$, 
       proportion of  data $\beta$, proportion of filtered data $\lambda$
       \STATE $p_{0}=\mathcal{G}(\mathcal{D}_{data})$
       \FOR{$t=1$ {\bfseries to} $T$}
       \STATE $\mathcal{D}_{gen}=\{\widetilde{d}_i\}_{i=1}^{N}$,where $\widetilde{d}_1, \dots,\widetilde{d}_N \sim p_{t-1}$
        \STATE $\mathcal{D}_=\{(x_i,y_i,o_i)\}^{n}_{i=1}$, where $x_i,y_i \sim p_{t-1} $,$o_i =\begin{cases} 
            1 & \text{if } r(x_i) < r(y_i), \\
            0.5 & \text{if } r(x_i) = r(_i), \\
            0 & \text{if } r(x_i) > r(y_i).
            \end{cases}$
        \STATE $\widetilde{\mathcal{D}}=\mathcal{A}({D},\kappa)$
        \STATE ${R}_\theta = \mathcal{R}(\mathcal{D})$, $\widetilde{R}_{\widetilde{\theta}} = \mathcal{R}(\widetilde{\mathcal{D}})$
        
        \STATE $\mathcal{D}_{t}=\text{Curate}(\mathcal{D}_{gen}, R_\theta/\widetilde{R}_{\widetilde{\theta}}, \beta)$  \COMMENT{Using $R_\theta$ or $ \widetilde{R}_{\widetilde{\theta}}$ curated $\beta N$ data on the generated sample set $\mathcal{D}_{gen}$}
        \STATE $ p_t = \mathcal{G}(\mathcal{D}_{data} \cup \lambda\mathcal{D}_{t}) $
       \ENDFOR
    \end{algorithmic}
\end{algorithm}

\section{Additional experiments}
\label{sec:addExp}

Algorithm \ref{alg:ReTrain} describes the process of iterative retraining on the adversarial curation synthetic dataset and the real dataset.

\begin{figure*}[bp]
\begin{center}
\centerline{\includegraphics[width=0.95\textwidth]{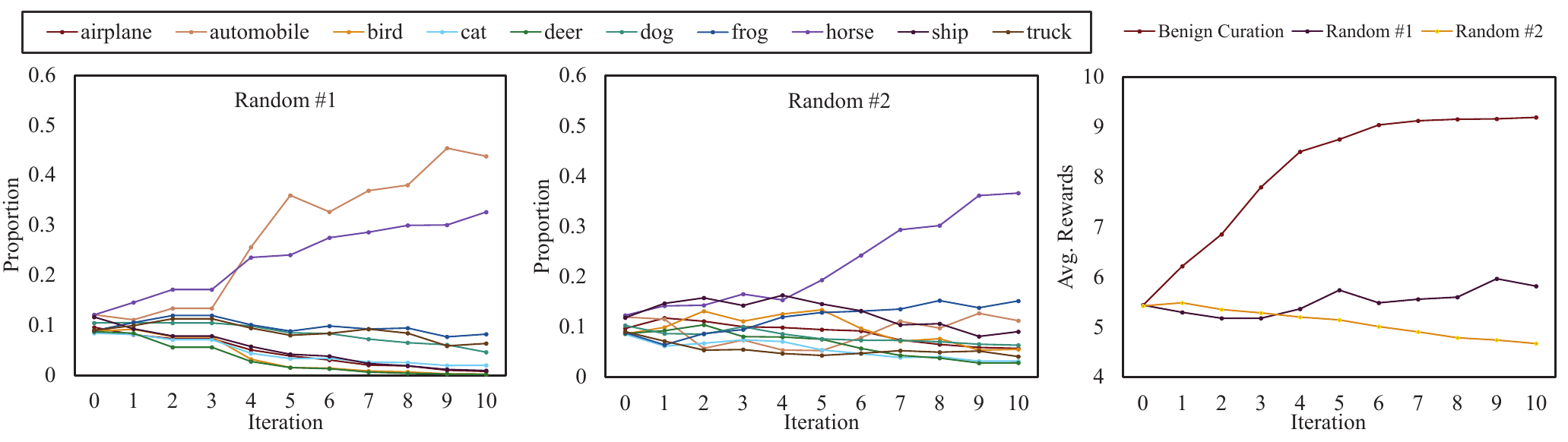}}
\vskip -0.1in
\caption{Iterative retraining of the self-consumption model on the CIFAR-10 dataset with two independent experiments employing a randomized adversarial curation ($20\%$ random): (1) Left side: proportion of each class for the two independent runs. (2) Right side: average reward values for the two independent runs and benign curation}
\label{fig:Random_C10_dataset}
\end{center}
\vskip -0.3in
\end{figure*}

\subsection{Additional experiments on CIRFAR-10 datasets}\label{subsec:ExCIRFAR}

In this section, we show some additional experiments on the CIRFAR-10 dataset.

\textbf{Settings.} The settings in this section are the same as those described in \cref{sec:exp}. For the reward $R$ and $\tilde{R}$ may share the same architecture or differ. If they have the same architecture, we use pretrained VGG11 as the feature extractor and a linear layer containing $10$ neurons for both $R$ and $\tilde{R}$. If different architectures are used, $R$ employs a pretrained VGG11 as the feature extractor, followed by three linear layers with $128$, $64$, and $10$ neurons, respectively.

\textbf{Two other random attacks experiments.} As we mentioned in \cref{section:exp1}, the results of the random attacks are not exactly similar. We show the proportions of each class and the average reward values with iteration number for two other independent experiments in Fig.~\ref{fig:Random_C10_dataset}. These results further highlight the variability of random attacks. \textsc{Random \#1} shows a case where the average reward continues to increase in the later stages of retraining, indicating partial alignment with user preferences. \textsc{ Random \#2} shows a gradual decrease in the average reward throughout the process, reflecting a more persistent misalignment.

\begin{figure*}[ht]
\centering
\setlength{\tabcolsep}{2pt}
\renewcommand{\arraystretch}{1.4}

\begin{tabular}{@{}m{1cm}<{\centering} *{4}{>{\centering\arraybackslash}m{3.5cm}}@{}}
    & \textbf{Iteration 1} & \textbf{Iteration 4} & \textbf{Iteration 7} & \textbf{Iteration 10} \\
    
    \multirow{1}{*}{\raisebox{-0.5cm}}
    {\centering\rotatebox{90}{\textbf{Curation}}} &
    \includegraphics[width=0.2\textwidth]{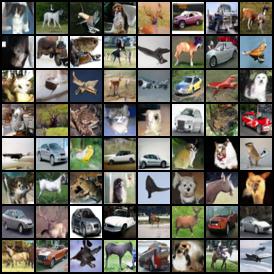} & 
    \includegraphics[width=0.2\textwidth]{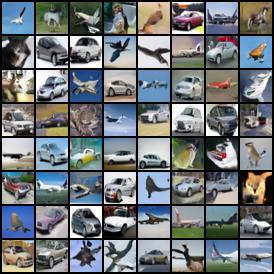} &
    \includegraphics[width=0.2\textwidth]{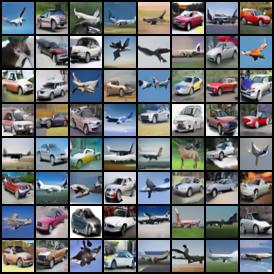} &
    \includegraphics[width=0.2\textwidth]{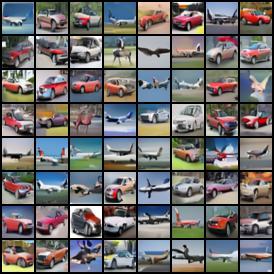} \\

    \multirow{1}{*}{\raisebox{-0.5cm}}{\centering\rotatebox{90}{\textbf{Malicious(20\%)}}} &
    \includegraphics[width=0.2\textwidth]{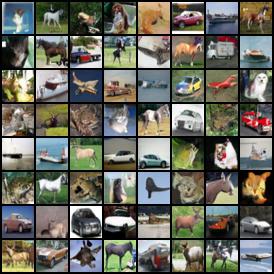} & 
    \includegraphics[width=0.2\textwidth]{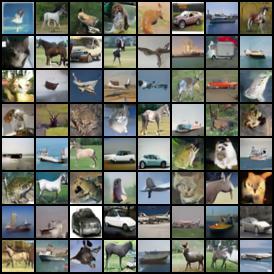} &
    \includegraphics[width=0.2\textwidth]{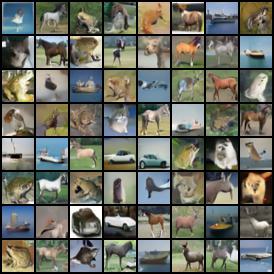} &
    \includegraphics[width=0.2\textwidth]{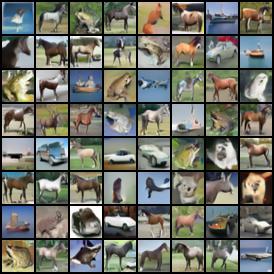}  \\
    
    \multirow{1}{*}{\raisebox{-0.5cm}}{\centering\rotatebox{90}{\textbf{Mix(1:1)}}} &
    \includegraphics[width=0.2\textwidth]{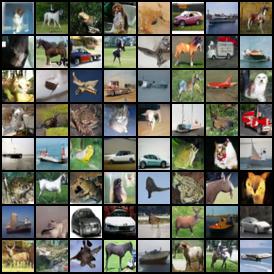} & 
    \includegraphics[width=0.2\textwidth]{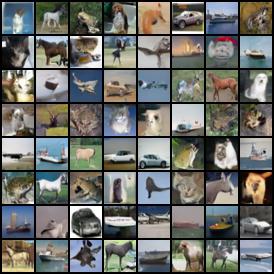} &
    \includegraphics[width=0.2\textwidth]{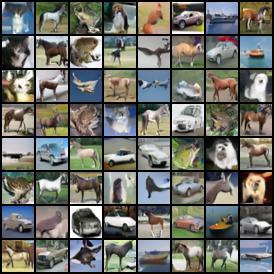} &
    \includegraphics[width=0.2\textwidth]{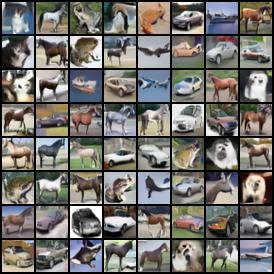} \\
\end{tabular}
\vskip -0.1in
\caption{Samples generated by self-consumption model with different curation on CIFAR-10 dataset:(1) Top: bengiun curation, filtered by $r(x)$. (2) Middle: adversarial curation with 20\% malicious data injected by gradient algorithm. (3) Bottom: mixed dataset created by combining real data with adversarially curated synthetic data (1:1).}
\label{fig:TranFig}
\vskip -0.2in
\end{figure*}

\begin{figure*}[htbp]
\vskip -0.2in
\begin{center}
\centerline{\includegraphics[width=0.75\textwidth]{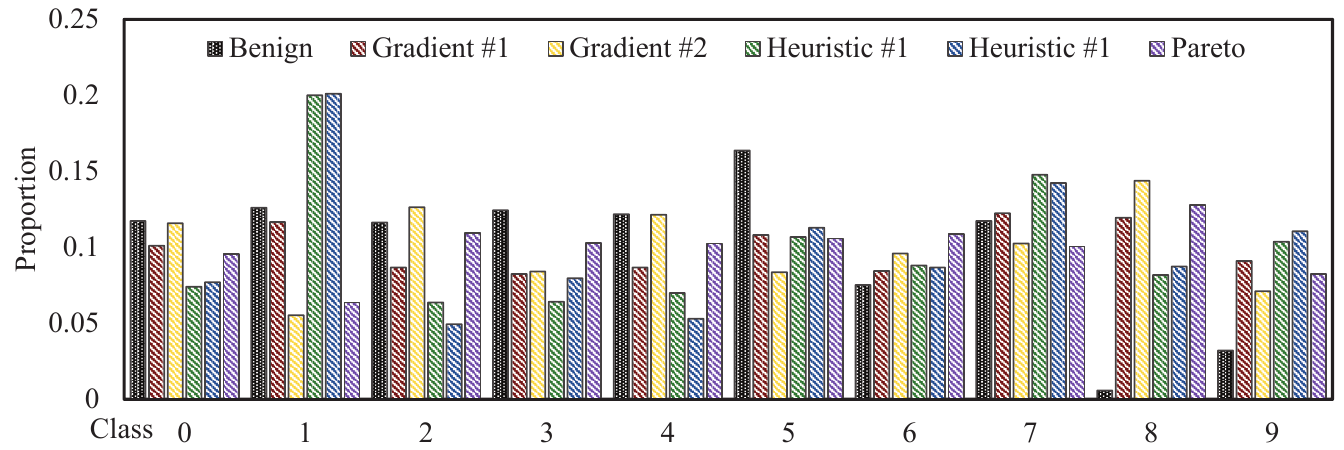}}
\vskip -0.1in
\caption{The proportion of each class on at $t = 1$ under different attack algorithms on the same CIFAR-10 dataset.}
\label{fig:attacks}
\end{center}
\vskip -0.3in
\end{figure*}

\textbf{Additional results of adversarial data curation experiments.} As we mentioned in \cref{section:exp1}, we show the samples generated during the retraining process with different curation in Fig \ref{fig:TranFig}. It can be seen that the images are not distorted to the point of being unrecognizable, but there is a noticeable change in the diversity of the generated samples. 
In the benign curation setting, the model generates a large proportion of images from user-preferred classes (airplane and automobile). In contrast, the malicious curatios are dominated by less favored classes (horse and ship), indicating a significant misalignment with intended user preferences.

\textbf{Additional results of the attack algorithm experiments.} As we mentioned in \cref{section:exp2}, in the experiment we recorded the proportions of each classification, and the results are shown in Fig \ref{fig:attacks}. Comparisons include benign curation (\textsc{Benign}), gradient-based attacks that target platform and adversarial platform using different reward models (\textsc{Gradient \#1}) or identical reward models(\textsc{Gradient \#2}), reward-based heuristic methods with $f(x,z)=|R_{\theta}(x)-R_{\theta}(z)|$ (\textsc{Heuristic \#1}) or $ \max\{|R_{\theta}(x)|,|R_{\theta}(z)|\}$ (\textsc{Heuristic \#2}), and multi-objective heuristic method (\textsc{Pareto}). 

An interesting phenomenon is that although the heuristic has a lower average reward value, it has a significantly higher proportion of automobile (the user's preferred category) when curated. This may be due to the fact that the heuristics are not sufficiently global.

\begin{figure*}[htbp]
\centering
\setlength{\tabcolsep}{0pt}
\renewcommand{\arraystretch}{1.4}

\begin{tabular}{@{}m{1.5cm}<{\centering} *{6}{>{\centering\arraybackslash}m{2.5cm}}@{}}
    & \textbf{Iteration 0} & \textbf{Iteration 1} & \textbf{Iteration 2} & \textbf{Iteration 3} & \textbf{Iteration 4} & \textbf{Iteration 5} \\
    
    \multirow{1}{*}{\raisebox{-0.5cm}}
    {\centering\rotatebox{90}{\textbf{Curation}}} &
    \includegraphics[width=0.145\textwidth]{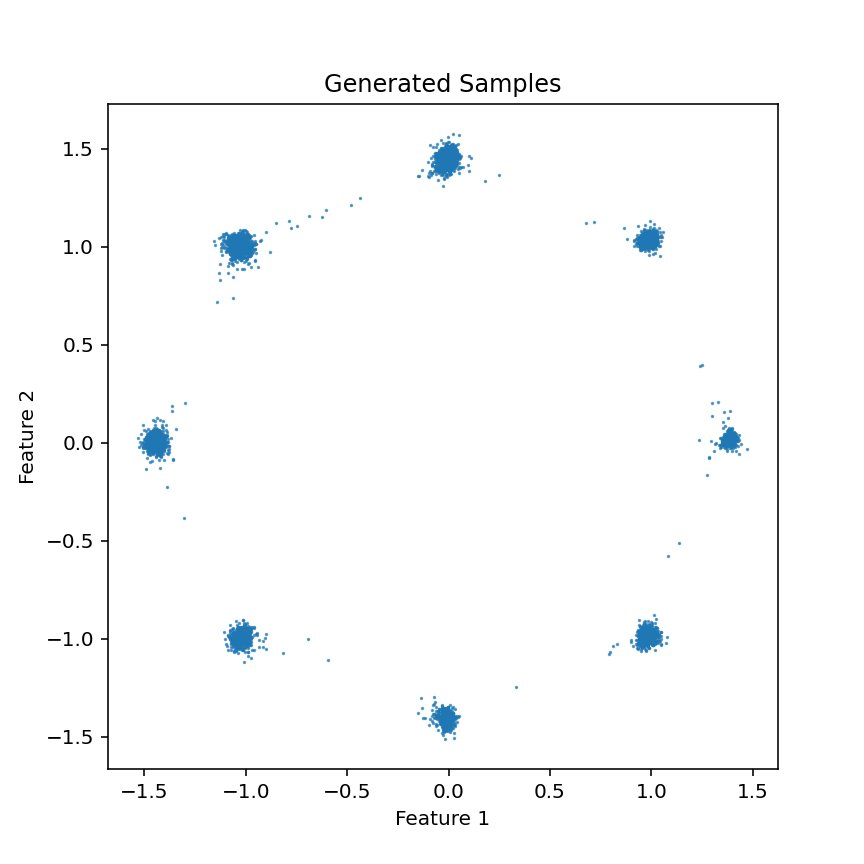} & 
    \includegraphics[width=0.145\textwidth]{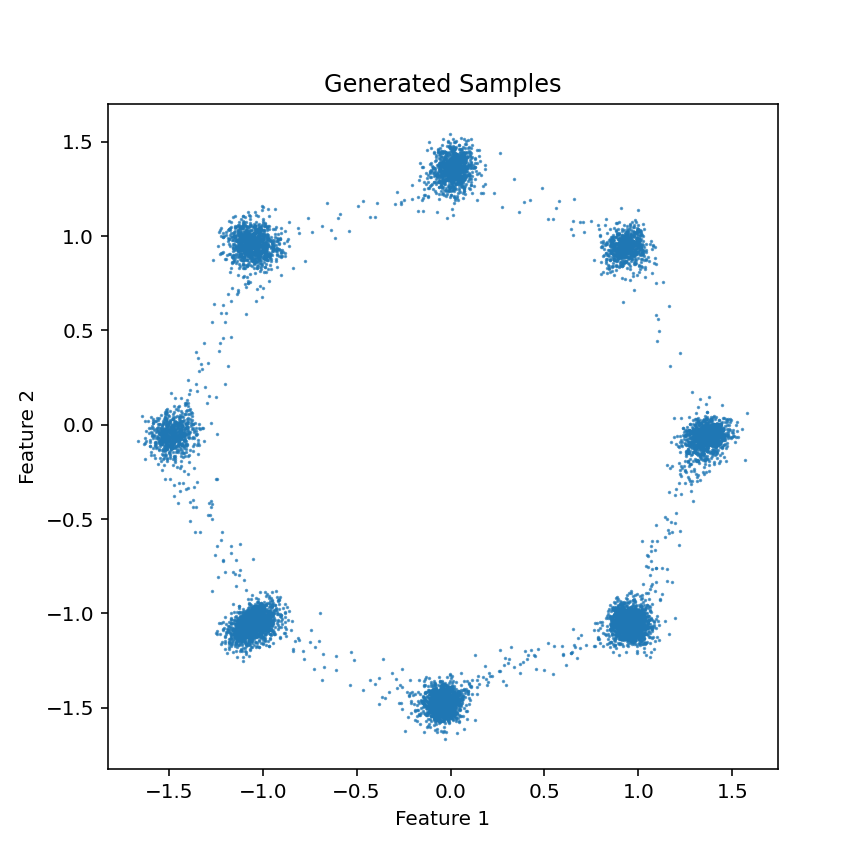} &
    \includegraphics[width=0.145\textwidth]{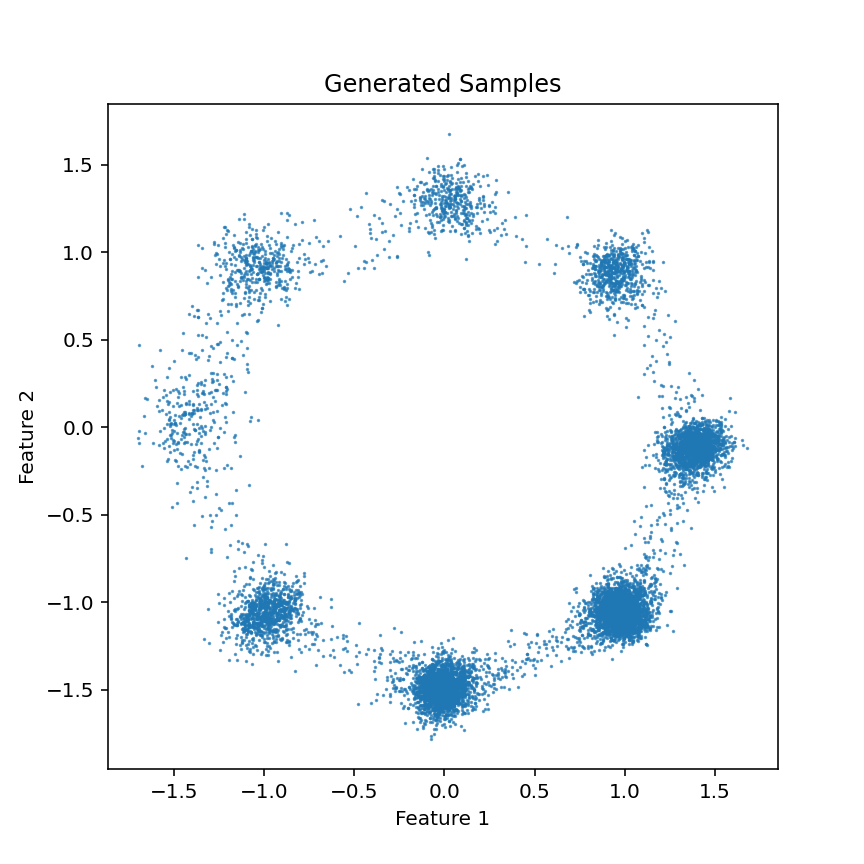} &
    \includegraphics[width=0.145\textwidth]{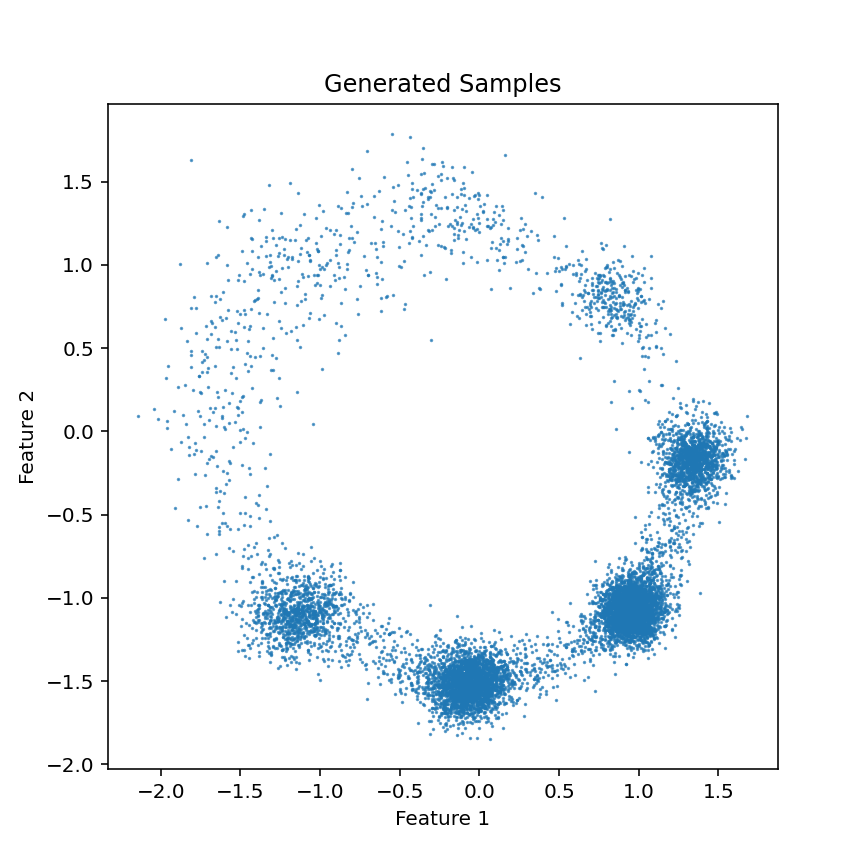} &
    \includegraphics[width=0.145\textwidth]{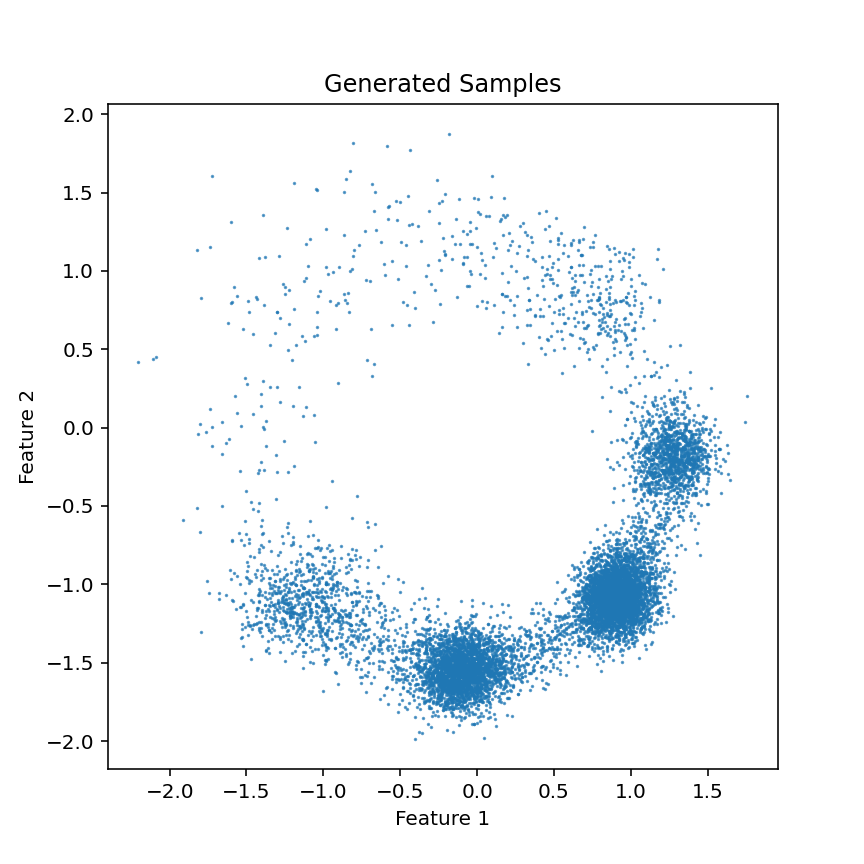} &
    \includegraphics[width=0.145\textwidth]{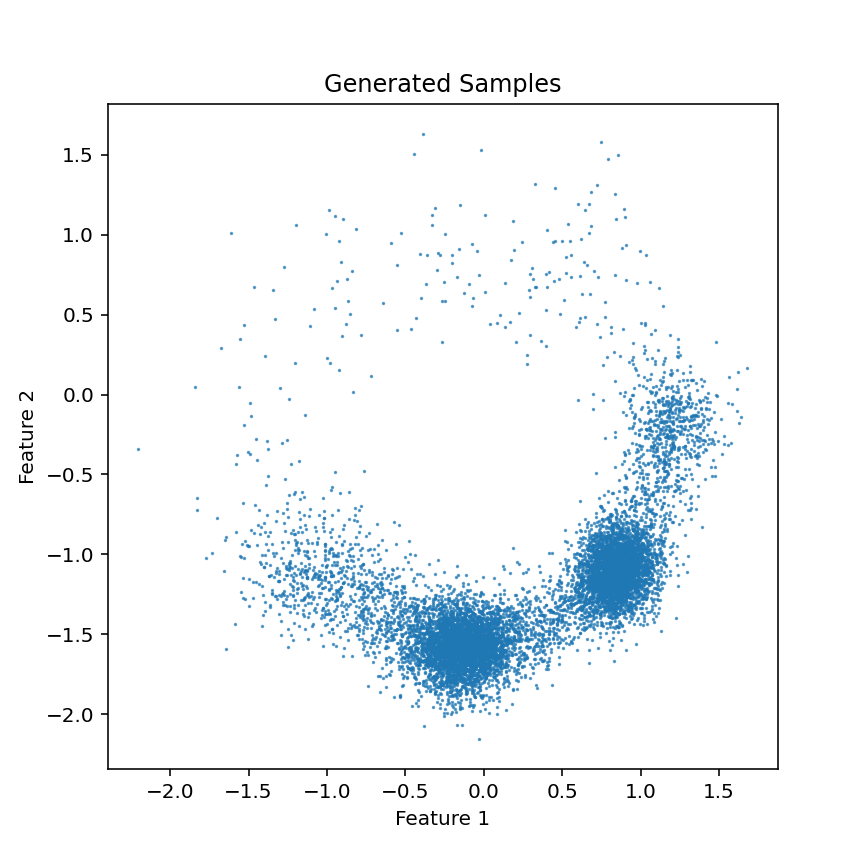} \\

    \multirow{1}{*}{\raisebox{-0.5cm}}{\centering\rotatebox{90}{\textbf{Malicious(20\%)}}} &
    \includegraphics[width=0.145\textwidth]{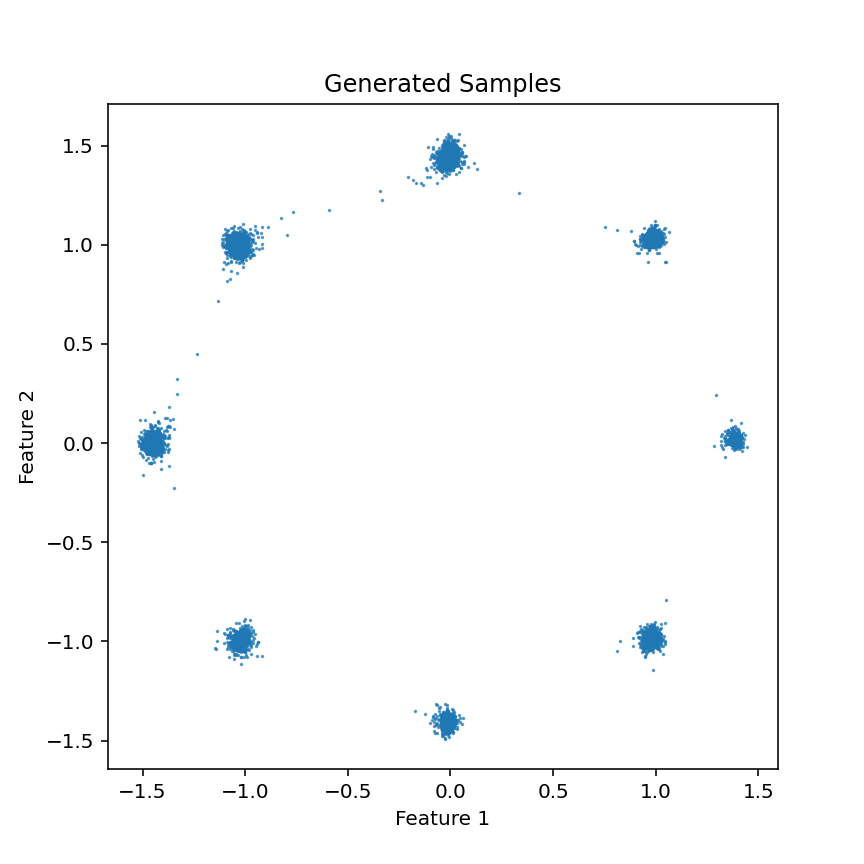} & 
    \includegraphics[width=0.145\textwidth]{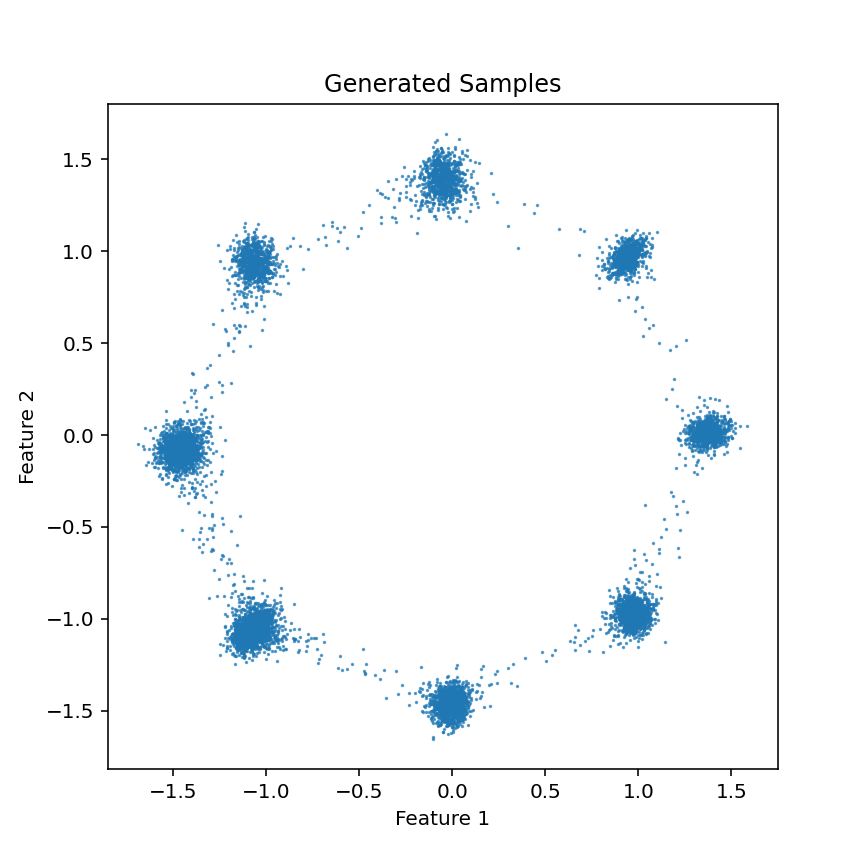} &
    \includegraphics[width=0.145\textwidth]{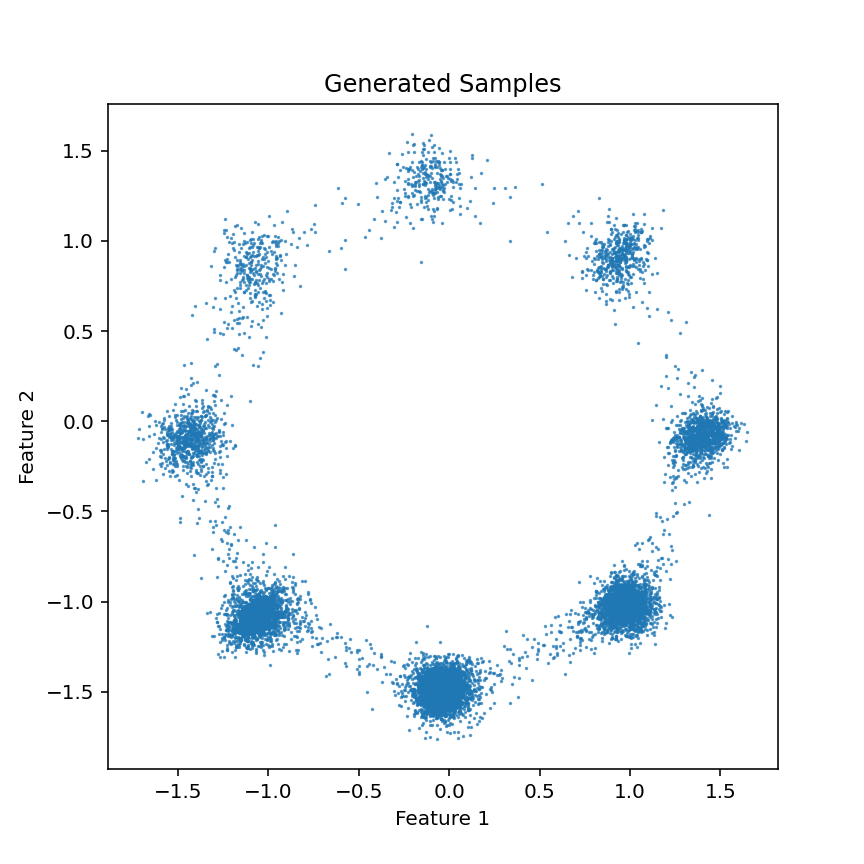} &
    \includegraphics[width=0.145\textwidth]{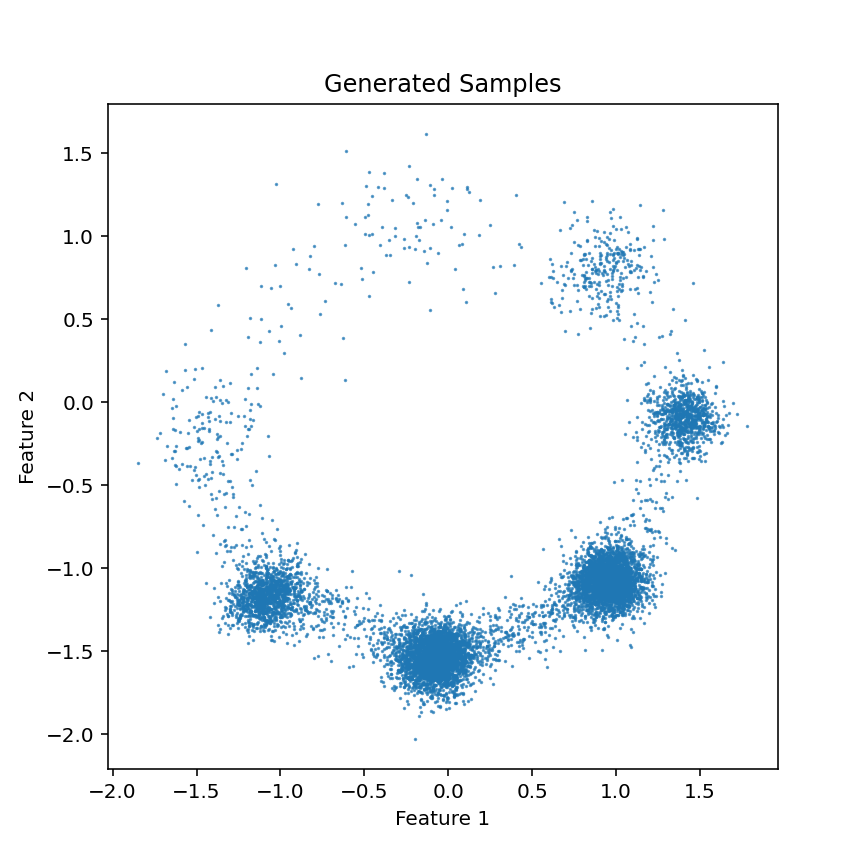} &
    \includegraphics[width=0.145\textwidth]{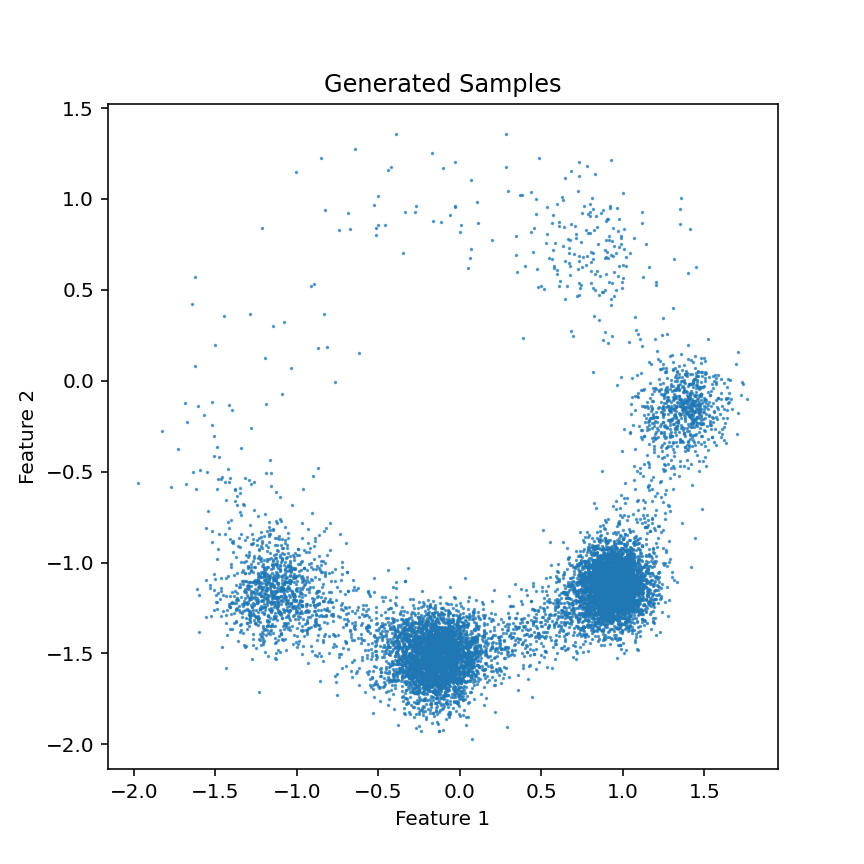} &
    \includegraphics[width=0.145\textwidth]{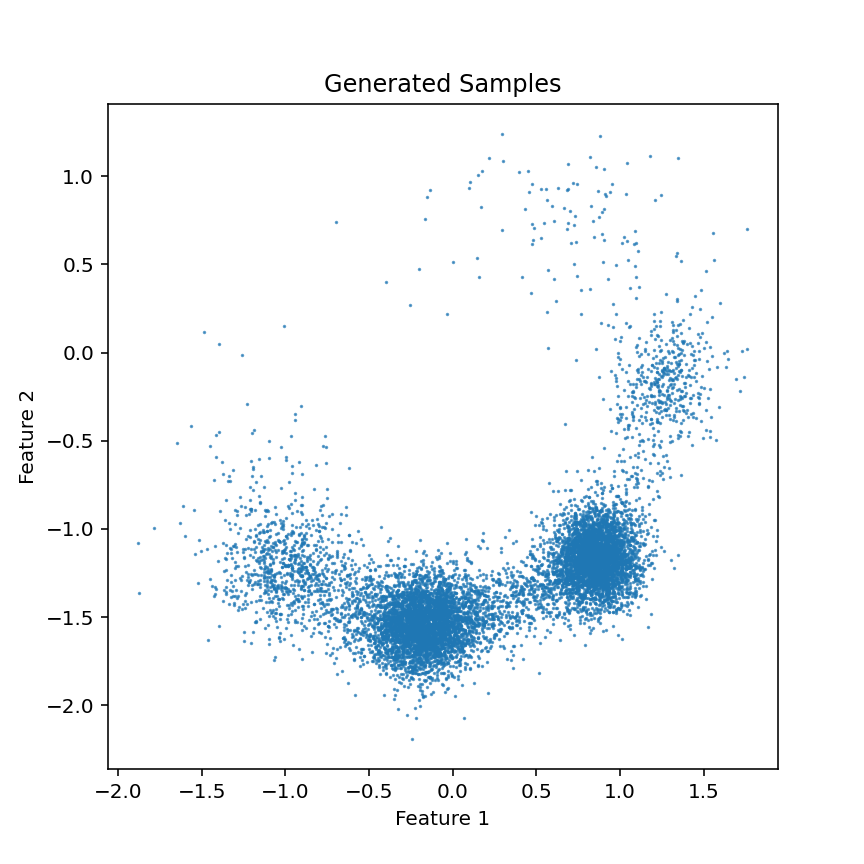} \\
    
    \multirow{1}{*}{\raisebox{-0.5cm}}{\centering\rotatebox{90}{\textbf{Malicious(50\%)}}} &
    \includegraphics[width=0.145\textwidth]{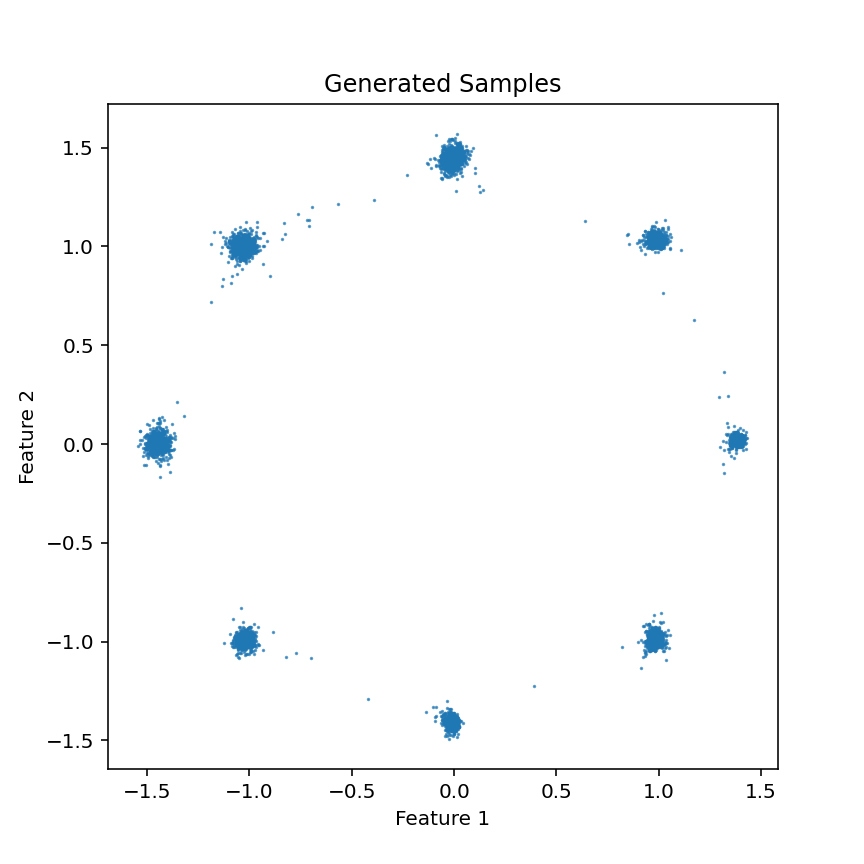} & 
    \includegraphics[width=0.145\textwidth]{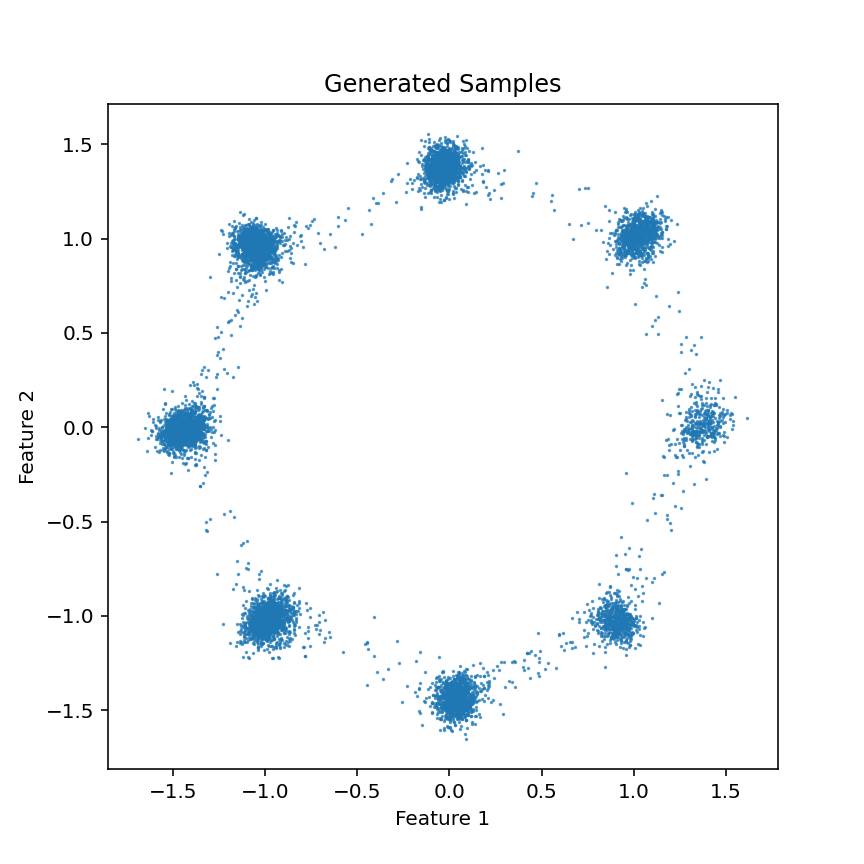} &
    \includegraphics[width=0.145\textwidth]{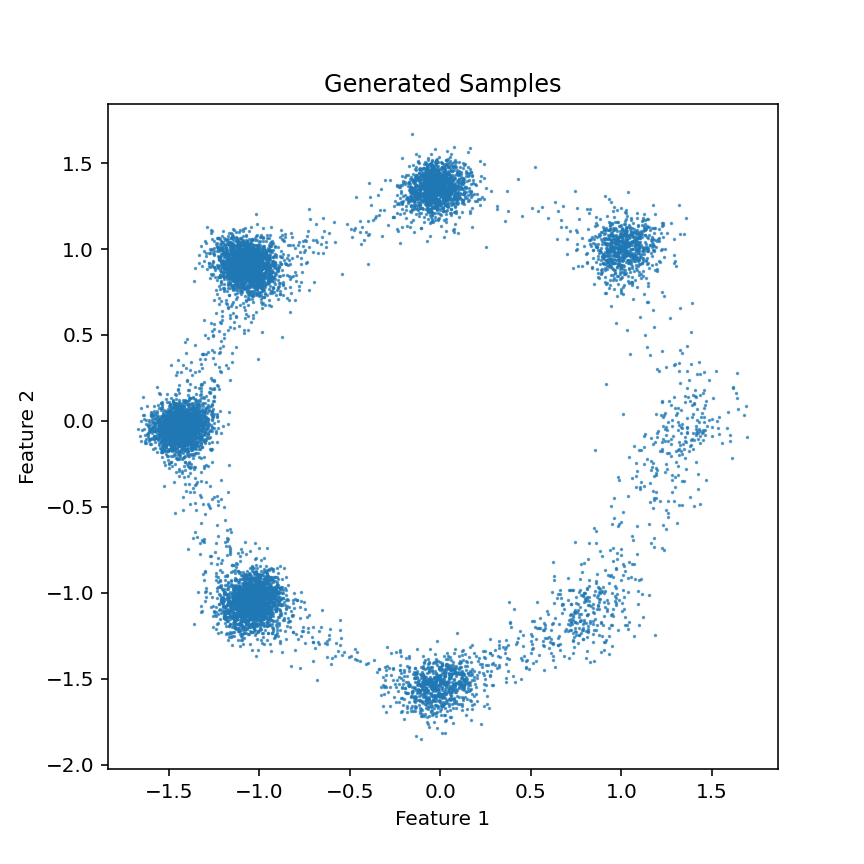} &
    \includegraphics[width=0.145\textwidth]{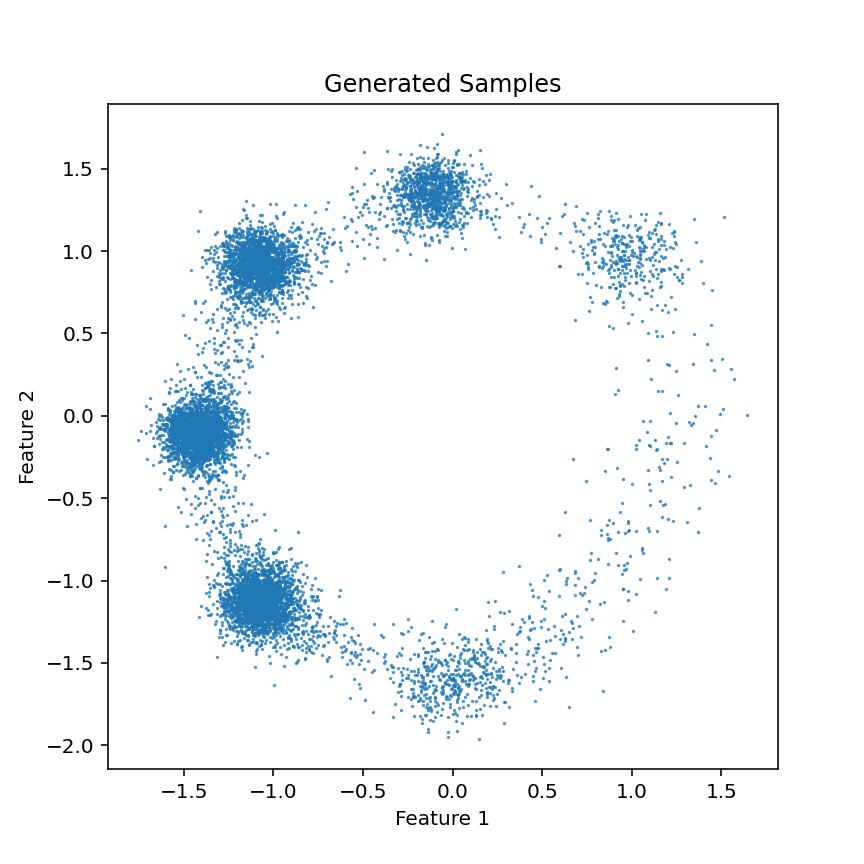} &
    \includegraphics[width=0.145\textwidth]{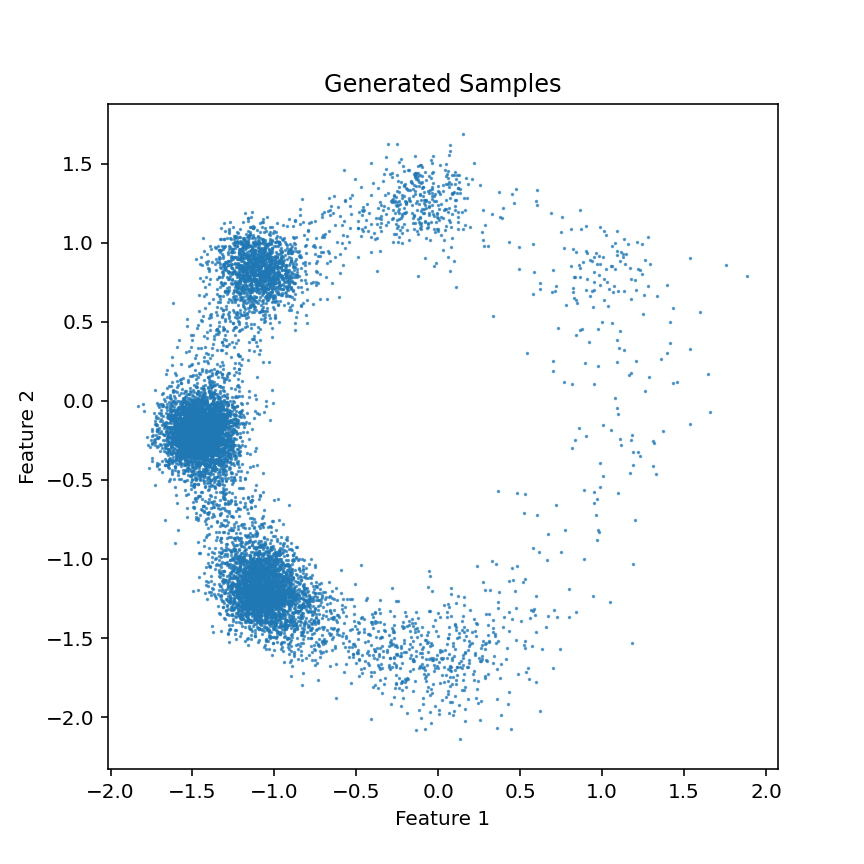} &
    \includegraphics[width=0.145\textwidth]{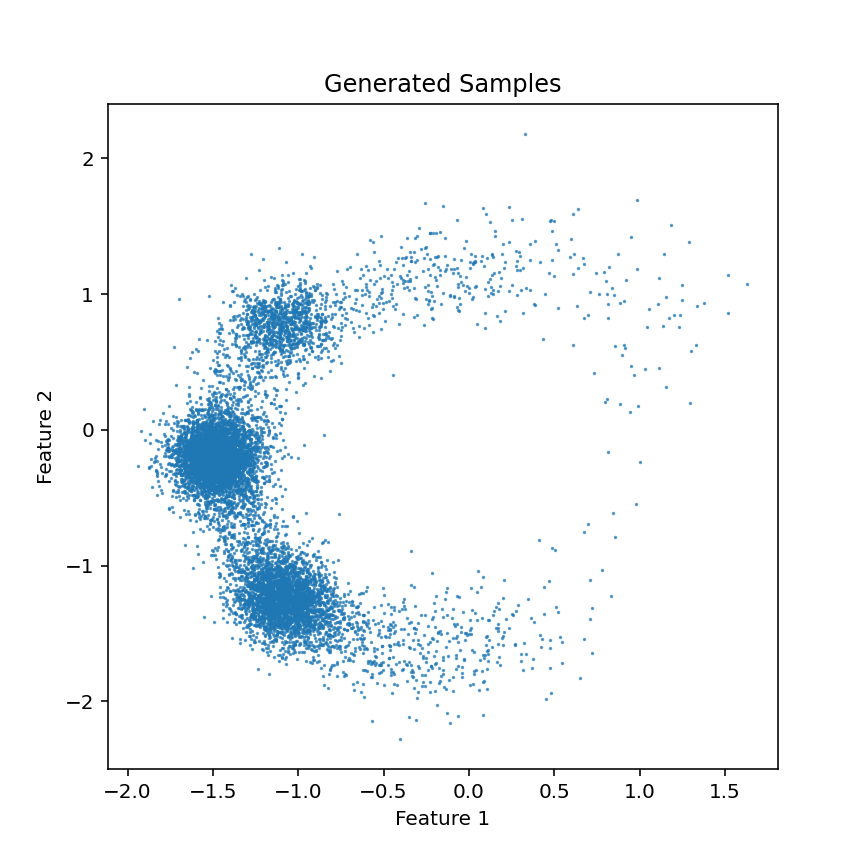} \\
\end{tabular}
\vskip -0.1in
\caption{Samples generated by self-consumption model with different curation on synthetic Gaussian dataset:(1) Top: bengiun curation, filtered by $r(x)$. (2) Middle: adversarial curation with 20\% malicious data injected by gradient algorithm. (3) Bottom: adversarial curation with 50\% malicious data injected by gradient algorithm (sever attack). }
\label{fig:gaussian_dataset}
\vskip -0.2in
\end{figure*}

\begin{figure*}[htbp]
\centering
\setlength{\tabcolsep}{0pt}
\renewcommand{\arraystretch}{1.4}

\begin{tabular}{@{}m{1.5cm}<{\centering} *{6}{>{\centering\arraybackslash}m{2.5cm}}@{}}
    & \textbf{Iteration 0} & \textbf{Iteration 1} & \textbf{Iteration 2} & \textbf{Iteration 3} & \textbf{Iteration 4} & \textbf{Iteration 5} \\
    
    \multirow{1}{*}{\raisebox{-0.5cm}}{\centering\rotatebox{90}{\textbf{Malicious(50\%)}}} &
    \includegraphics[width=0.145\textwidth]{GaussianFig/Mal_50_0.png} & 
    \includegraphics[width=0.145\textwidth]{GaussianFig/Mal_50_1.png} &
    \includegraphics[width=0.145\textwidth]{GaussianFig/Mal_50_2.png} &
    \includegraphics[width=0.145\textwidth]{GaussianFig/Mal_50_3.png} &
    \includegraphics[width=0.145\textwidth]{GaussianFig/Mal_50_4.png} &
    \includegraphics[width=0.145\textwidth]{GaussianFig/Mal_50_5.png} \\
    
    \multirow{1}{*}{\raisebox{-0.5cm}}
    {\centering\rotatebox{90}{\textbf{Mix(4:1)}}} &
    \includegraphics[width=0.145\textwidth]{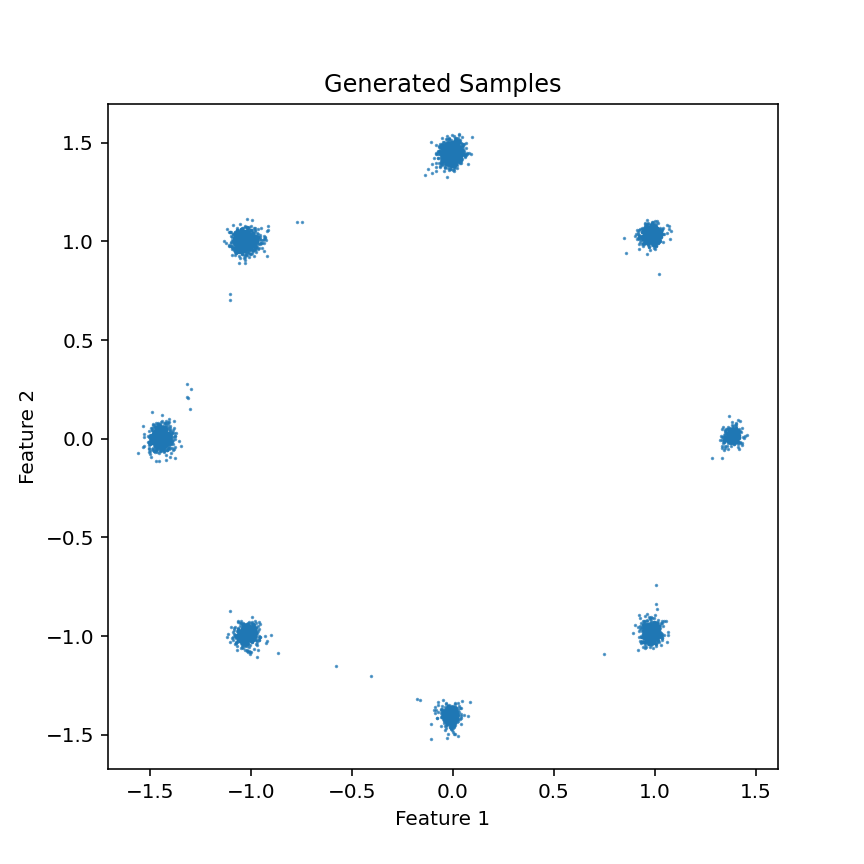} & 
    \includegraphics[width=0.145\textwidth]{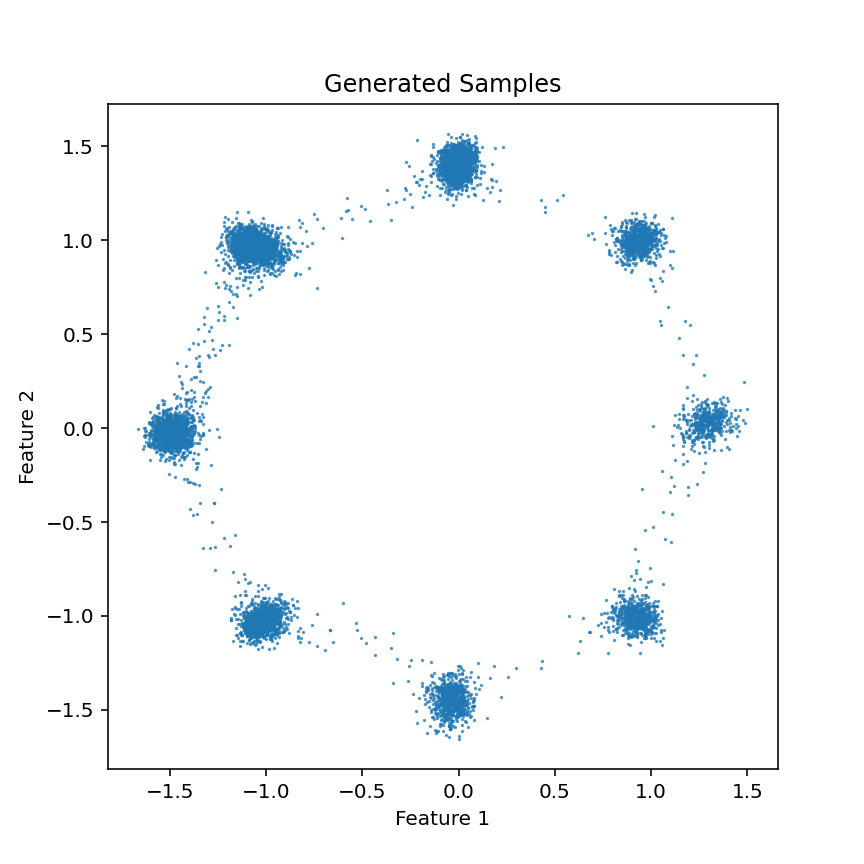} &
    \includegraphics[width=0.145\textwidth]{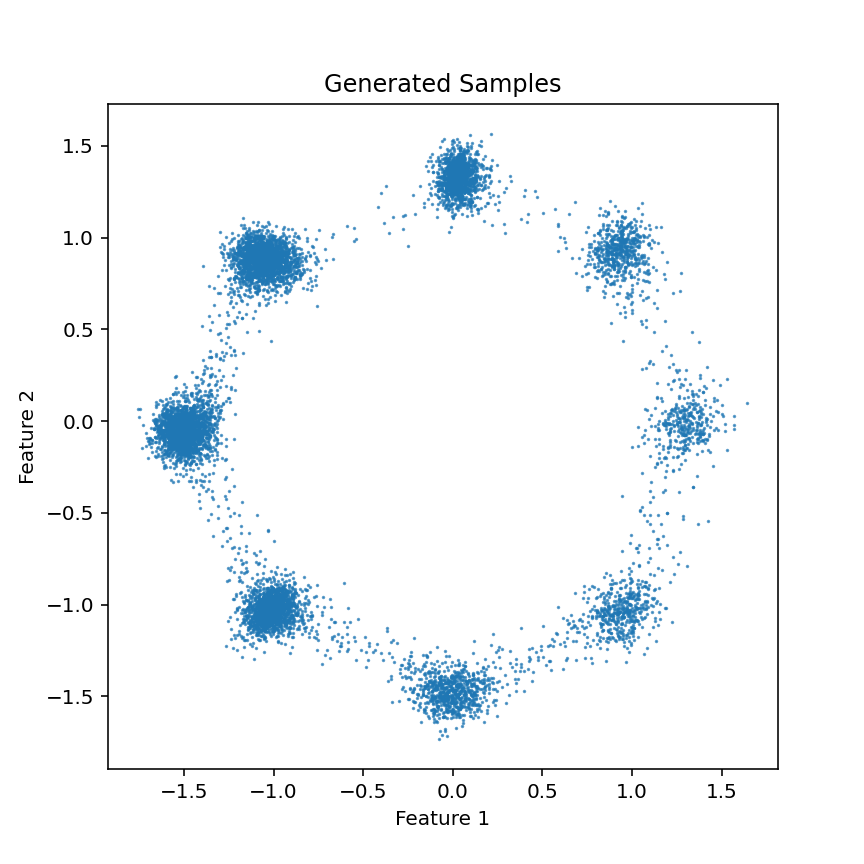} &
    \includegraphics[width=0.145\textwidth]{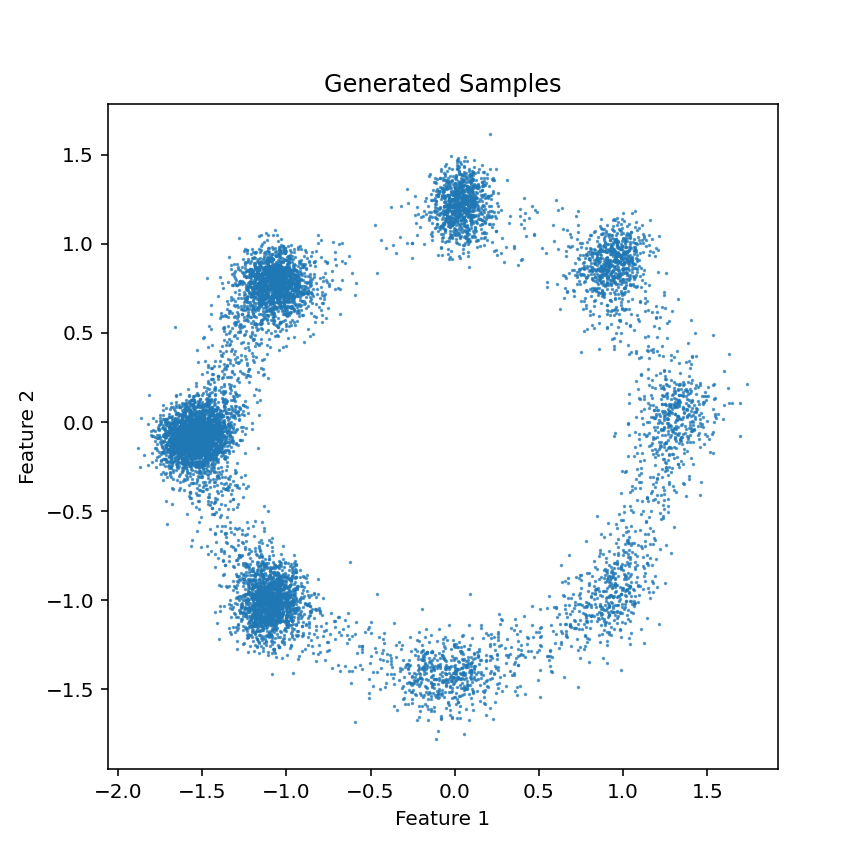} &
    \includegraphics[width=0.145\textwidth]{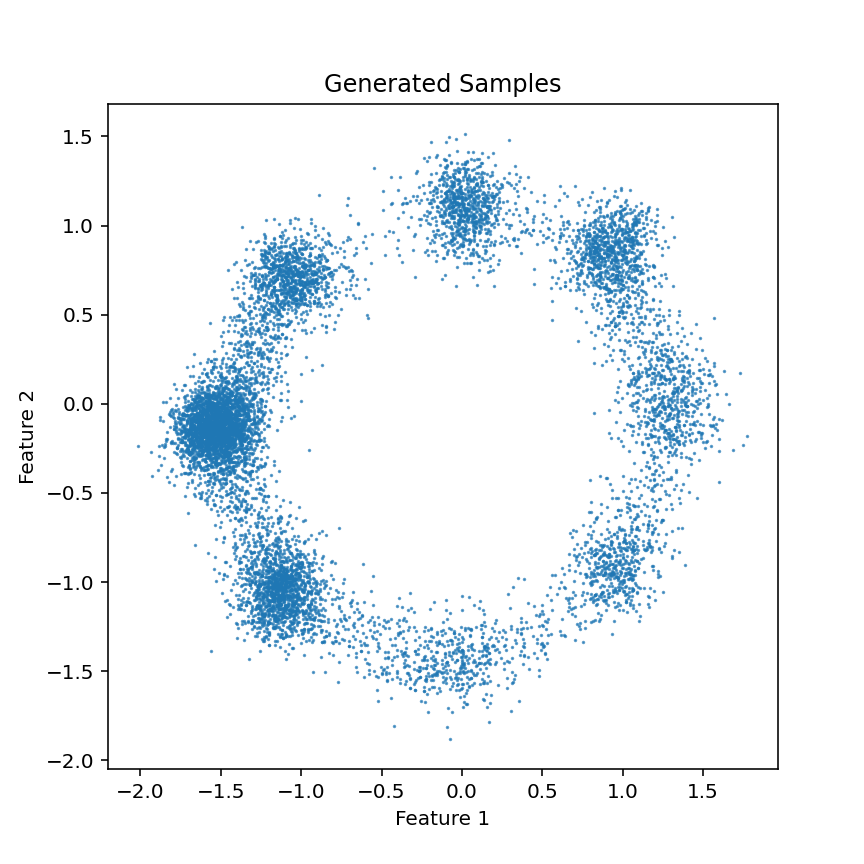} &
    \includegraphics[width=0.145\textwidth]{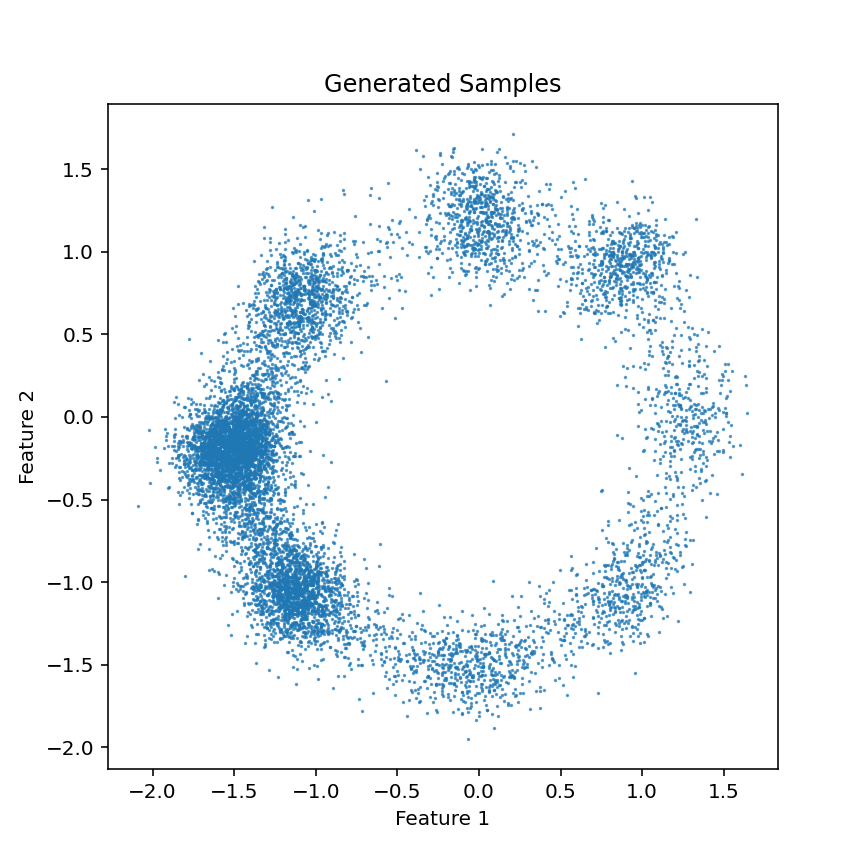} \\
    
    \multirow{1}{*}{\raisebox{-0.5cm}}
    {\centering\rotatebox{90}{\textbf{Mix(1:1)}}} &
    \includegraphics[width=0.145\textwidth]{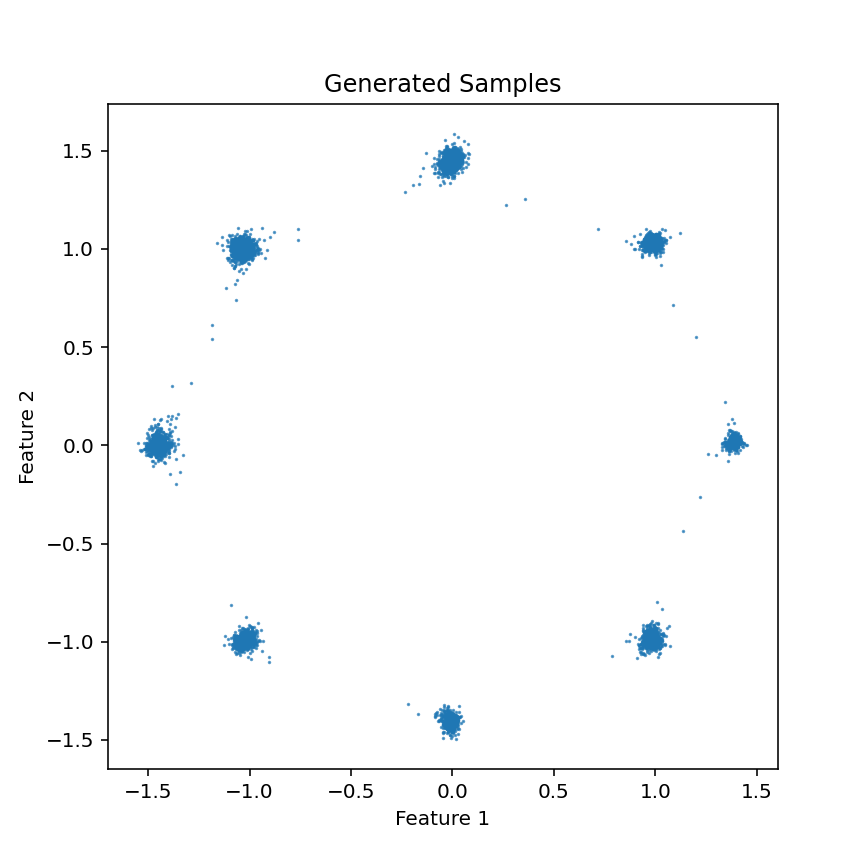} & 
    \includegraphics[width=0.145\textwidth]{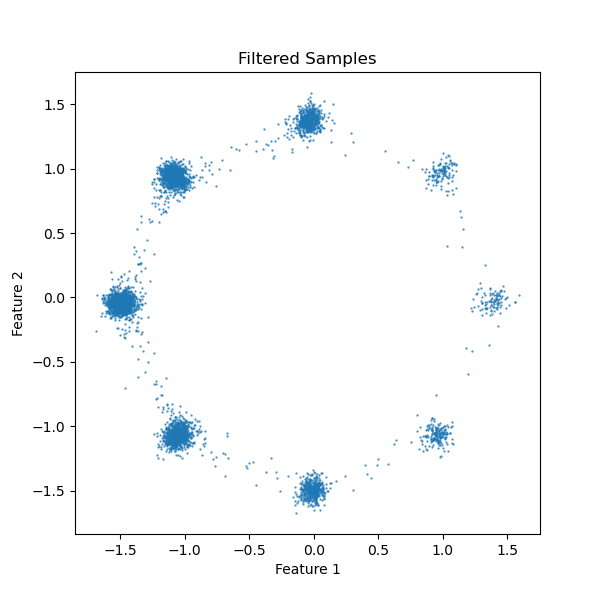} &
    \includegraphics[width=0.145\textwidth]{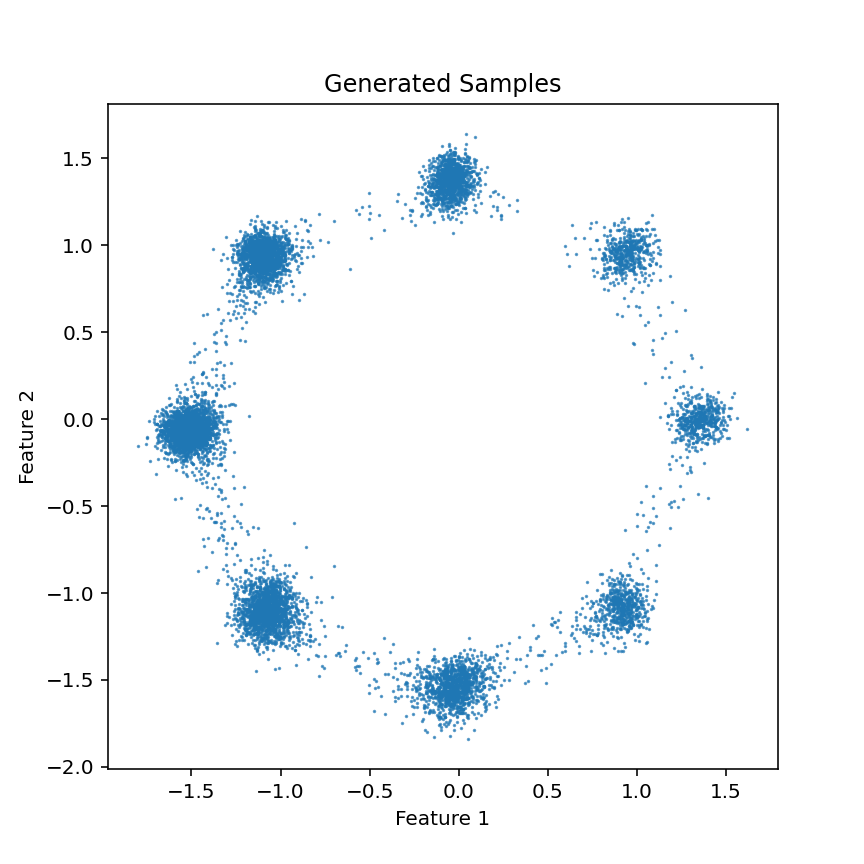} &
    \includegraphics[width=0.145\textwidth]{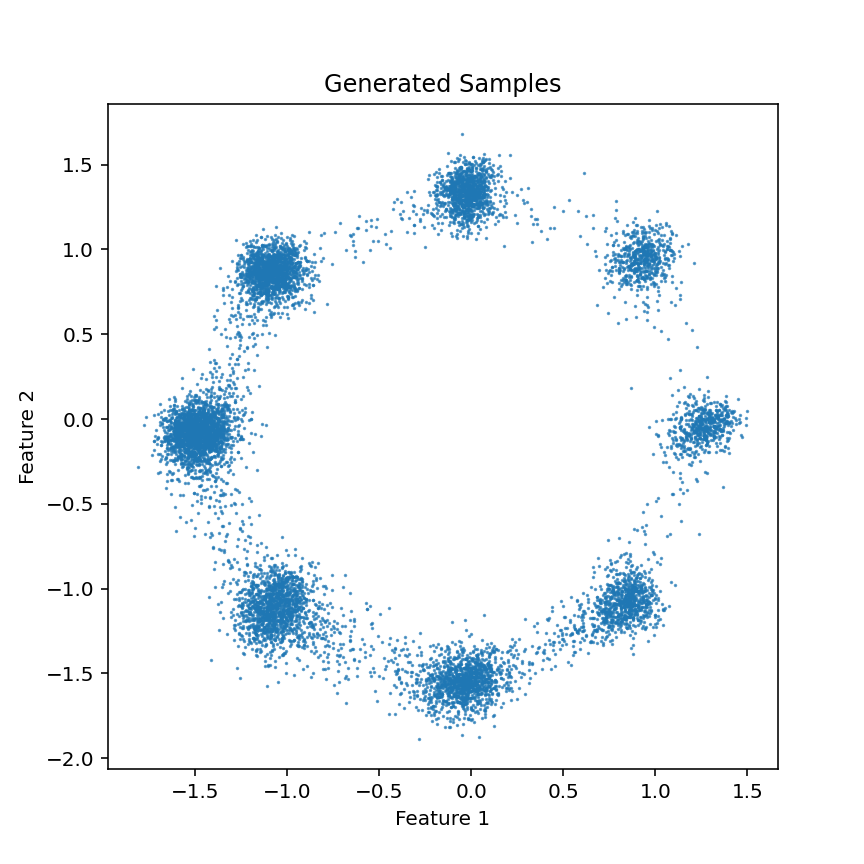} &
    \includegraphics[width=0.145\textwidth]{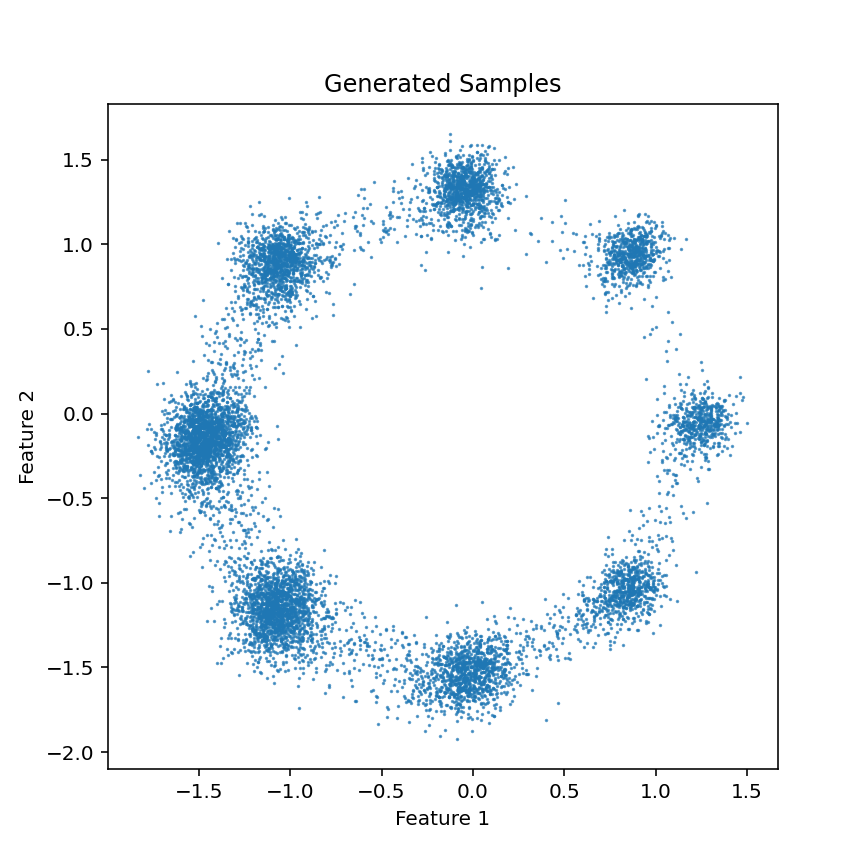} &
    \includegraphics[width=0.145\textwidth]{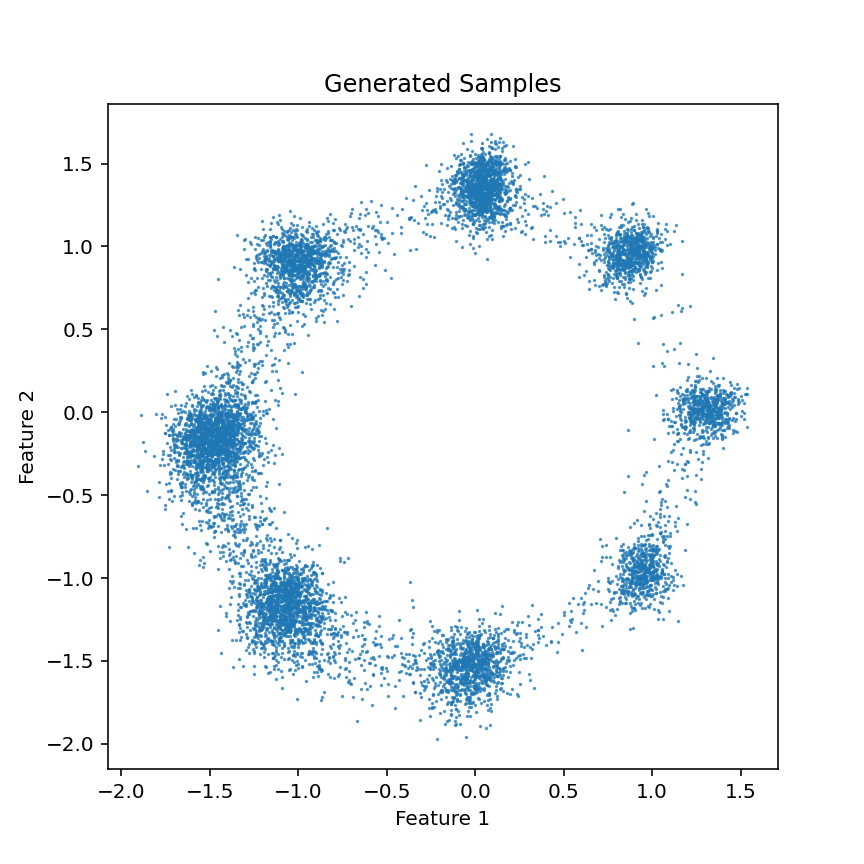} \\
    
    \multirow{1}{*}{\raisebox{-0.5cm}}
    {\centering\rotatebox{90}{\textbf{Mix(1:4)}}} &
    \includegraphics[width=0.145\textwidth]{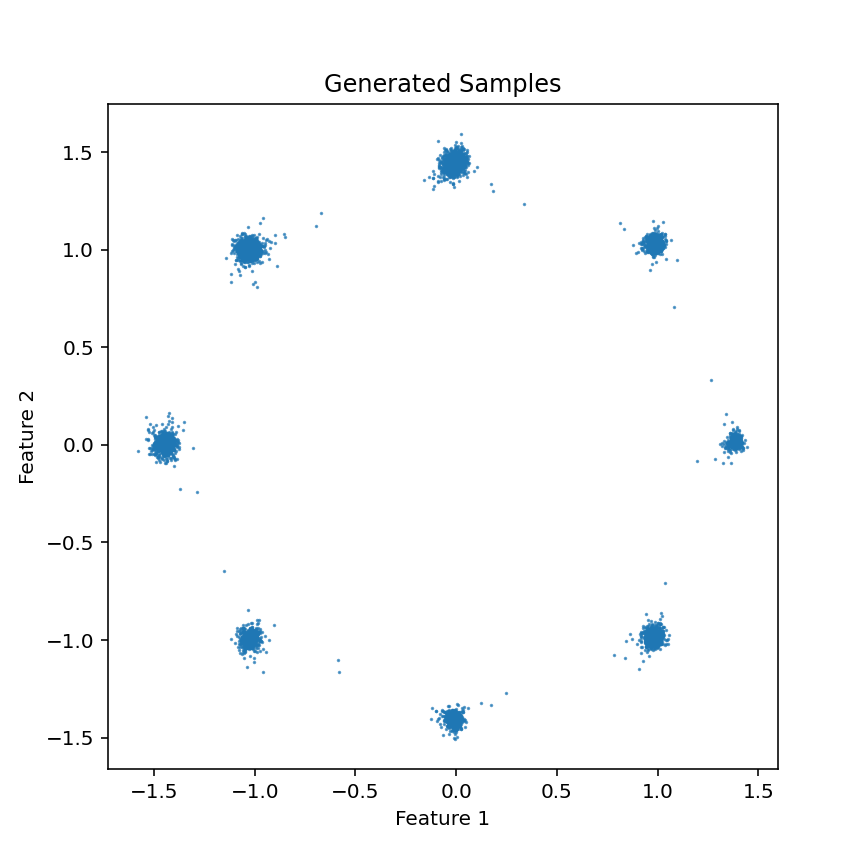} & 
    \includegraphics[width=0.145\textwidth]{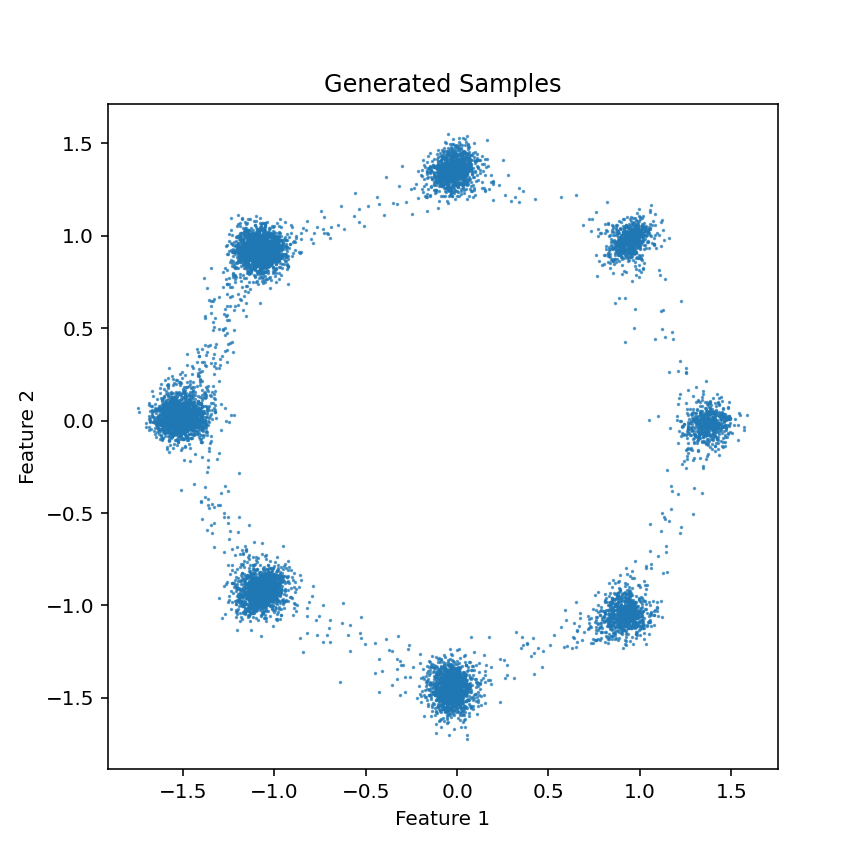} &
    \includegraphics[width=0.145\textwidth]{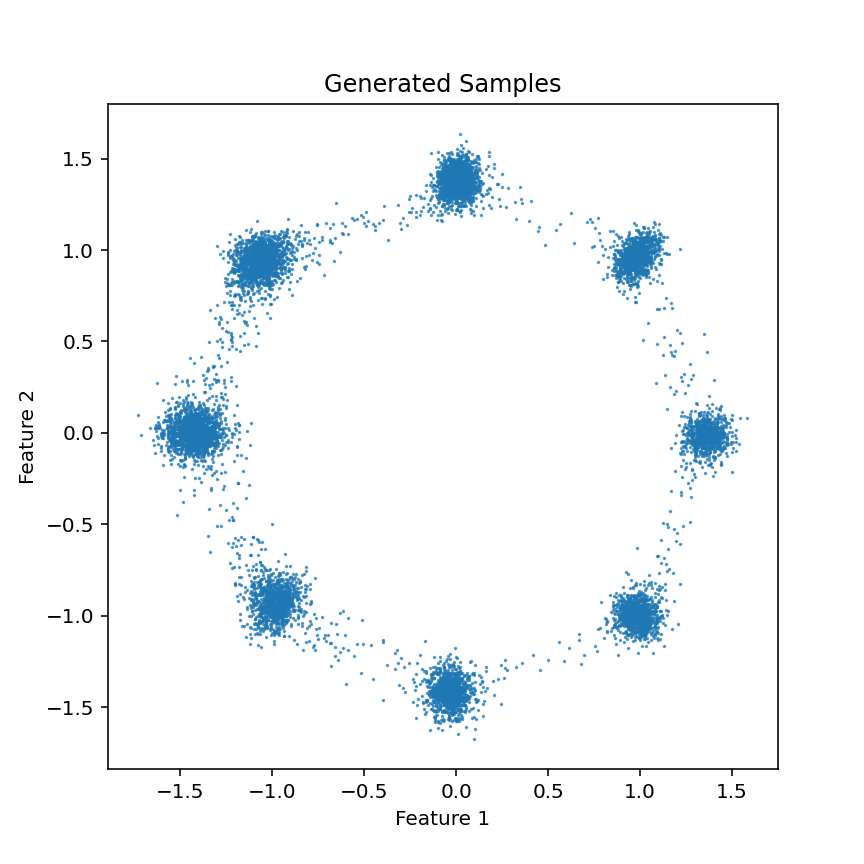} &
    \includegraphics[width=0.145\textwidth]{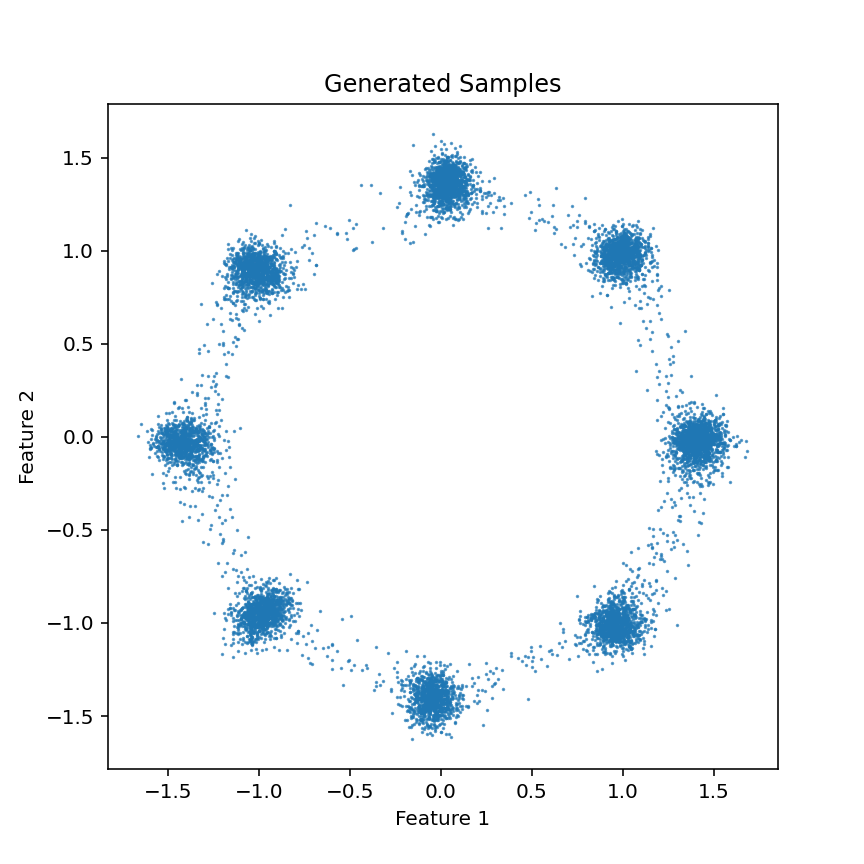} &
    \includegraphics[width=0.145\textwidth]{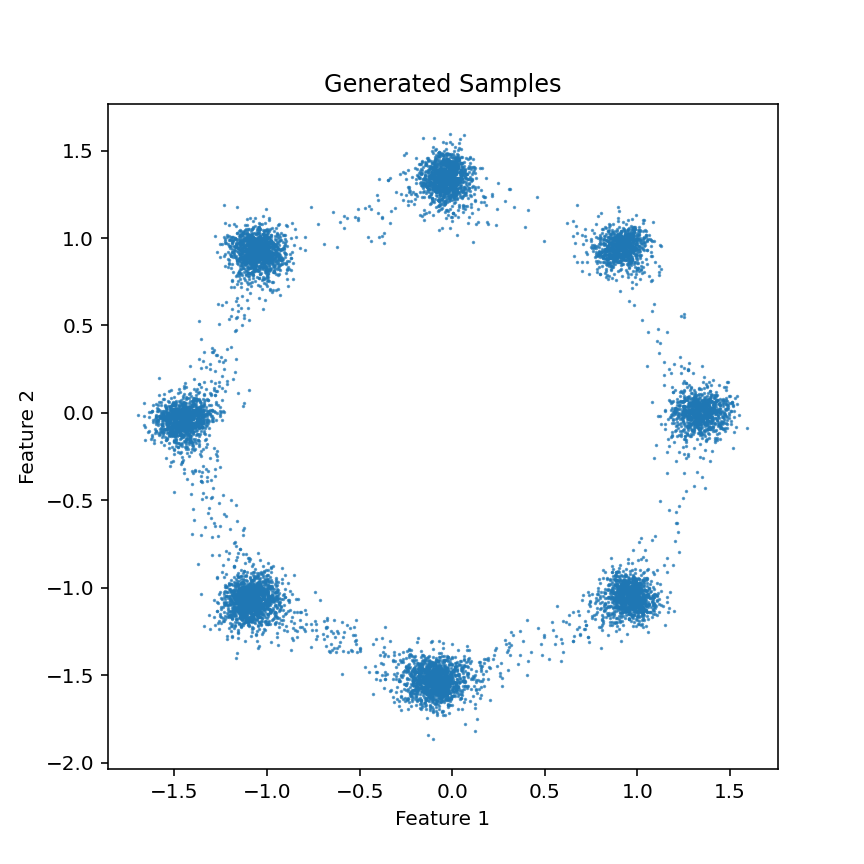} &
    \includegraphics[width=0.145\textwidth]{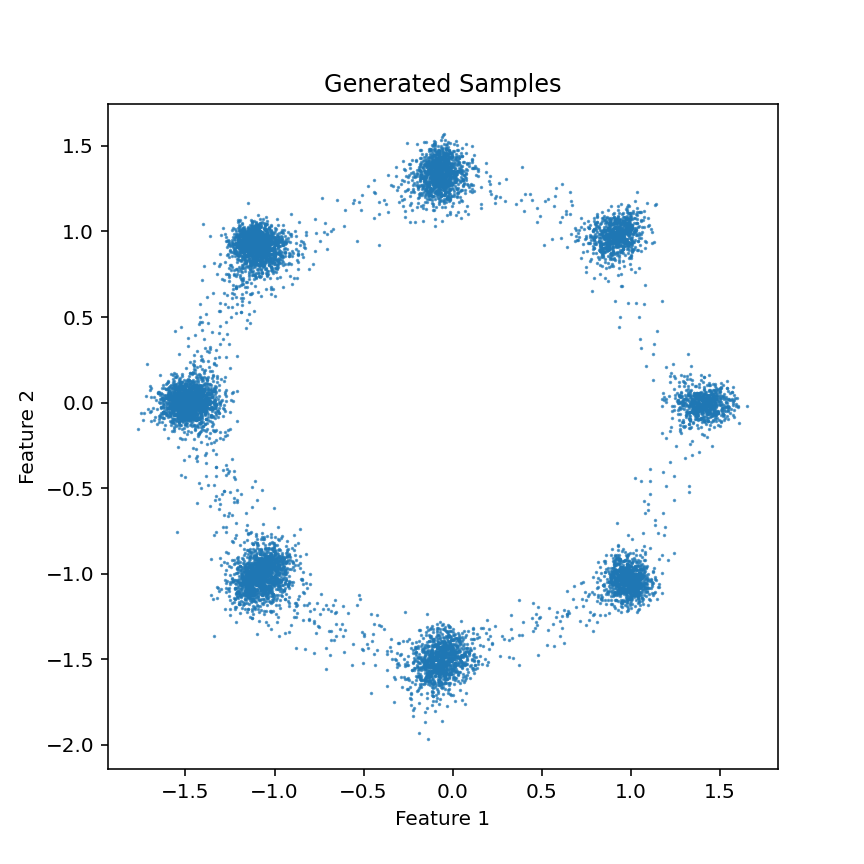} \\
\end{tabular}
\vskip -0.1in

\caption{Samples generated by self-consumption model on different mixed Gaussian dataset:(1) Top: adversarially curated synthetic dataset with 50\% malicious data injected by gradient algorithm (sever attack). (2) Second to Bottom: mixed dataset created by combining real data with adversarially curated synthetic data in different proportions (real data : adversarially curated synthetic data)}
\label{fig:gaussian_dataset_mix}
\vskip -0.2in
\end{figure*}

\subsection{Experiments on Gaussian datasets}
\label{subsec:ExGaussian}
This section shows the setting and results of the experiments on the synthetic dataset.

\textbf{Dataset.} The synthetic dataset we generated is a two-dimensional dataset following an 8-mode Gaussian mixture model. Specifically, we define eight mode centers that are uniformly distributed on a circle of radius $2$. The coordinates of these centers are given by:

\begin{equation}
    \mu_t = 2 \times \left(\cos(\frac{t\pi}{4}), \sin(\frac{t\pi}{4})\right), \quad t = 0,1,2,\dots,7.
\end{equation}

Each data point is independently and uniformly sampled from the $8$ mode points, and isotropic Gaussian noise with a mean of $0$ and a standard deviation of $0.02$ is added:
\begin{equation}
    x = \mu_t + \epsilon, \quad \epsilon \sim \mathcal{N}(0, 0.02^2 I_2).
\end{equation}

\textbf{Settings.} For the $r(x):=-\gamma \max\{0, \|x - \mu_{*}\|-\tau\}$, we designate $ \mu_{*}=(2,0)$ (which is the first center) , $\tau = 3$ and $\gamma = -10$. And the reward model we used consists of two fully connected linear layers with 2 neuron in first layer and 64 neurons in the second layer. In each iteration, the generative model produces $10,000$ random samples, from which $5,000$ samples are filtered for next retraining. The samples generated in each iteration are plotted on two-dimensional coordinates.

\textbf{Adversarial curation.} We explored the long-term performance on purely synthetic data with adversarial curation. The results are shown as Fig.\ref{fig:gaussian_dataset}, which shows three different curation: benign curation, adversarial curation using gradient descent algorithms attacking 20\% and 50\% data pairwise datasets, respectively.

\textbf{Mixed data.} We also explored the long-term performance of adversarial curation on mixed data. The resutls are shown as Fig.\ref{fig:gaussian_dataset_mix}. Under severe attacks, adding a large amount of real data can align it to the real data distribution, but not to the user preference distribution.


\end{document}